\documentclass{article}

\PassOptionsToPackage{sort,numbers,compress}{natbib}
\usepackage[preprint]{neurips_2026}

\usepackage[utf8]{inputenc}
\usepackage[T1]{fontenc}
\usepackage{hyperref}
\usepackage{url}
\usepackage{booktabs}
\usepackage{amsmath,amssymb,amsfonts}
\usepackage{amsthm}
\usepackage{nicefrac}
\usepackage{microtype}
\usepackage[table]{xcolor}
\usepackage{graphicx}
\usepackage{subcaption}
\usepackage{enumitem}
\usepackage{multirow}
\usepackage{makecell}
\usepackage[most]{tcolorbox}
\usepackage{float}
\usepackage{wrapfig}
\usepackage{placeins}
\usepackage{pgfplots}
\usepgfplotslibrary{groupplots,fillbetween}
\usetikzlibrary{positioning}
\pgfplotsset{compat=1.18}

\newcommand{\ours}{\textsf{HYDRA}}

\newcommand{\etal}{\textit{et al.}}

\newcommand{\miou}{\mathrm{mIoU}}

 \theoremstyle{plain}
 \newtheorem{theorem}{Theorem}[section]
 \newtheorem{proposition}[theorem]{Proposition}
 \theoremstyle{definition}
 
 \theoremstyle{remark}
 
 \definecolor{keyblue}{RGB}{30,80,160}
 \definecolor{keyyellow}{RGB}{255,243,205}
 \definecolor{keygreen}{RGB}{220,240,220}
 \definecolor{keyred}{RGB}{240,220,220}
 \definecolor{headerblue}{RGB}{224,235,248}
 \definecolor{tableblue}{RGB}{235,244,252}
 \definecolor{sectionblue}{RGB}{247,250,253}

 \newtcolorbox{keyquestion}{
   colback=keyyellow, colframe=keyblue,
   fonttitle=\bfseries\small, title=Key Question,
   left=4pt, right=4pt, top=3pt, bottom=3pt,
   boxrule=1pt, arc=3pt
 }

 \newtcolorbox{keyfinding}{
   colback=keygreen, colframe=black!40,
   fonttitle=\bfseries\small, title=Key Finding,
   left=4pt, right=4pt, top=3pt, bottom=3pt,
   boxrule=0.5pt, arc=3pt
 }

 \newtcolorbox{takeaway}{
   colback=keyblue!8, colframe=keyblue,
   fonttitle=\bfseries\small, title=Takeaway,
   left=4pt, right=4pt, top=3pt, bottom=3pt,
   boxrule=1pt, arc=3pt
 }

 \newtcolorbox{motivation}{
   colback=keyred!60, colframe=black!40,
   fonttitle=\bfseries\small, title=Motivation Takeaway,
   left=4pt, right=4pt, top=3pt, bottom=3pt,
   boxrule=0.5pt, arc=3pt
 }

  \usepackage{tcolorbox}
 \newtcolorbox{insight}{
    colback=keyblue!4,
    colframe=keyblue,
    boxrule=0pt,
    leftrule=1pt,
    arc=0pt,
    top=1.5pt, bottom=1.5pt,
    left=4pt, right=3pt,
    boxsep=1.5pt,
    before skip=3pt, after skip=3pt,
    fontupper=\small
	  }

  \newtcolorbox{appendixroadmap}{
    colback=sectionblue,
    colframe=keyblue!70!black,
    boxrule=0.6pt,
    arc=1pt,
    top=4pt, bottom=4pt,
    left=5pt, right=5pt,
    before skip=4pt, after skip=6pt
  }

\title{Queries Knew More Than We Thought: Uncovering Latent Knowledge in Segmentation Models}

\author{
Ignacio M. De la Jara$^{1,3}$ \and
Cristian Rodriguez-Opazo$^{2}$ \and
Damith Ranasinghe$^{1,3}$\\[1ex]
$^{1}$School of Computer and Mathematical Sciences, The University of Adelaide, Australia\\
$^{2}$School of Computing, Australian National University, Australia\\
$^{3}$Australian Institute for Machine Learning (AIML), Australia\\[1ex]
\texttt{ignacio.mezadelajara@adelaide.edu.au}
}

\begin{document}

\maketitle

\begin{abstract}
Modern segmenters often fail after the expensive computation has already
been done: a useful mask is present among the model's query-conditioned
candidates, but the deployed selection rule does not expose it. We study
this output-selection bottleneck in frozen DETR-family. A ground-truth-only
oracle first shows substantial hidden headroom in already-computed mask
proposals. This raises a simple question:
textit{How can we better use the masks a segmenter has already computed but does not expose?}
We then ask whether that headroom can be recovered without
adding queries, generating new masks, rerunning the backbone, or updating
weights. \ours{} is a small selector trained only on cached frozen outputs. At inference time, it scores the cached candidates against an
explicit keep-baseline option and acts only when a held-out calibrated
margin says the selected candidate is sufficiently better. Trained on
training-split caches and calibrated on held-out data, \ours{} improves
Mask2Former, MaskDINO, and OneFormer by up to $+7.41$ dataset mIoU points
on ADE20k and COCO, and improves SAM~3 by $+9.4$ class-macro prompt-IoU
points on average across eight domains, while preserving useful predictions through calibration. Paired LoRA controls show that light weight adaptation does not remove the bottleneck: exposed predictions are often flat or
worse, while routing over the adapted candidates still recovers accuracy.
Finally, we connect the effect to query specialization under bipartite
matching and verify it in a controlled TinyDETR study. These results show
that frozen segmenters should be evaluated not only by the masks they
expose, but also by the useful candidates they suppress.
\end{abstract}


\begin{figure}[t]
    \centering
    \includegraphics[
        width=\linewidth,
        trim={20pt 0 20pt 0},
        clip
    ]{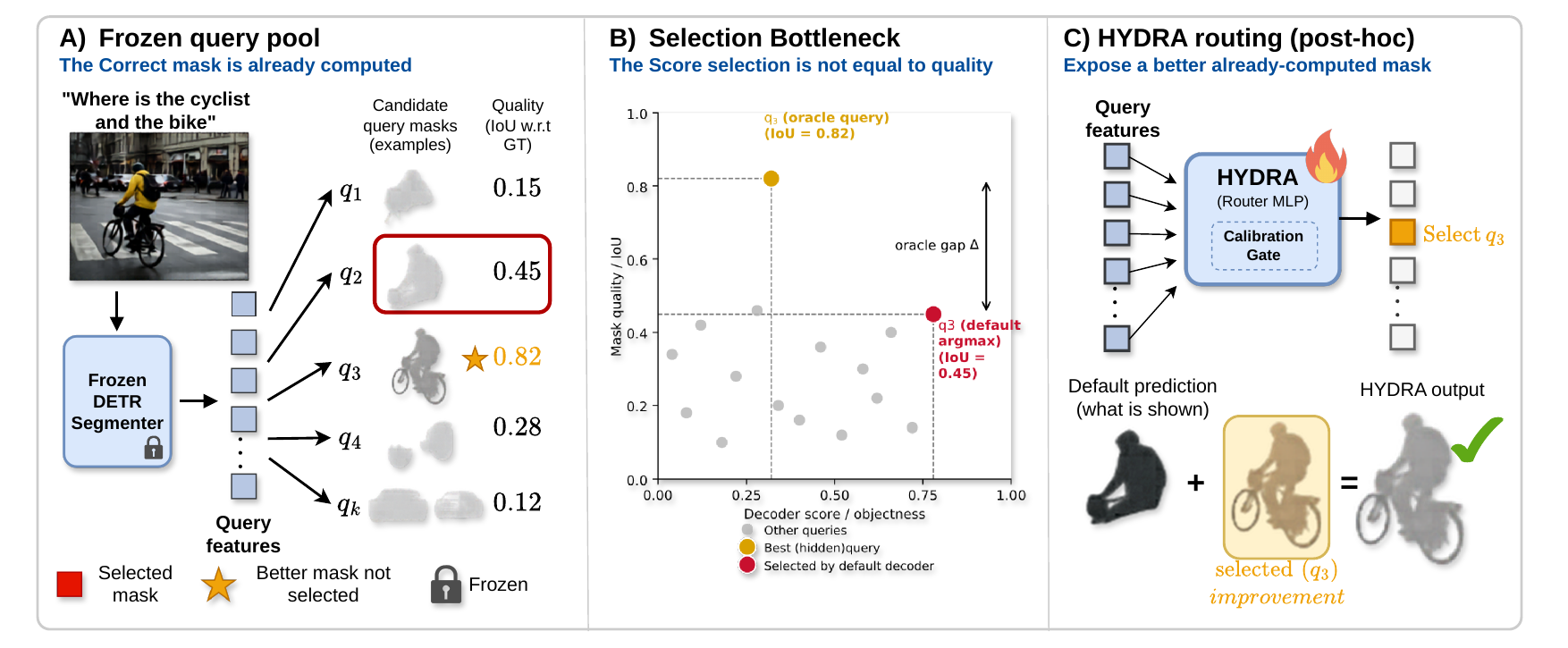}
    \vspace{-0.6em}
    \caption{\textbf{Models know more than they show.}
    A frozen segmenter already computes candidate masks that the default
    selection rule can suppress. \ours{}
    learns a calibrated post-hoc route over frozen query features and
    exposes a better, already-computed, mask without changing the backbone. 
    }
    \label{fig:main_overview}
\end{figure}

\section{Introduction}
\label{sec:intro}

Modern universal segmenters \cite{cheng2022mask2former} and promptable foundation models have achieved strong performance by learning rich visual representations at scale. Yet, their deployment often falls short of their true potential. We discovered that these failures are frequently not due to limitations in representation or mask generation, but instead arise from \emph{selection} failures. In many cases, correct masks are already present among a model’s query outputs but are suppressed by the selection mechanism used at inference time.

Notably, DETR-family segmenters first emit query-conditioned candidates, then collapse the set through a fixed final selection rule~\citep{carion2020detr,cheng2022mask2former,li2023maskdino,jain2023oneformer,carion2025sam3}. 
After the expensive forward pass to generate candidate masks, the deployed selection rule can suppress the candidate that best matches the target. This observation has important implications. 

\begin{insight}
It suggests that improving segmentation does not necessarily require retraining models, but can instead be achieved by better utilization of their existing outputs.
\end{insight}
Consequently, we treat the candidate mask set as the object of study and investigate:
\begin{center}
    \textit{What potential performance gains exist? What do current selectors hide? and Can the latent knowledge be recovered safely to improve performance?}
\end{center}
We systematically investigate this phenomenon with an oracle selector and demonstrate that substantial, consistent performance headroom exists across DETR-based universal segmenters (e.g., Mask2Former, MaskDINO, OneFormer) and promptable models such as SAM3. Our oracle evaluation reveal the frozen candidate set can always deliver a better prediction than the default selection methods. This suggests that the suppressed performance headroom can be recovered without adding queries, generating new masks, rerunning the backbone, or updating weights~\citep{guo2017calibration,huang2019maskscoring,cai2023aligndetr,pu2023rankdetr,munir2024caldetr}. 

The structural reason is simple. Bipartite matching gives each query gradients only from its assigned objects, encouraging specialization. Once queries specialize, the best query becomes input-dependent, and a static threshold or score argmax cannot be a universal quality ranking.
Section~\ref{sec:theory} formalizes this claim and Appendix~\ref{app:controlled} verifies it in TinyDETR.

Motivated by this insight, we rethink selection and propose learning a simple, deployable selector that operates on frozen model outputs, we dub \ours. The routing action in \ours{} evaluates candidate proposals form our learned selector against the default prediction---included an explicit \emph{keep-baseline} option---and uses calibrated abstention to avoid harmful changes. This design ensures that improvements are realized when possible while preserving strong baseline behavior and safety.

\vspace{0.5em}
\noindent\textbf{Contributions.~}In summary:
\begin{enumerate}[leftmargin=*,topsep=0pt,itemsep=0pt,parsep=0pt]
    \item A frozen-output oracle diagnostic showing hidden selection headroom across universal segmenters and SAM~3.
    \item \ours{}, a calibrated, learned selector that improves every model average using only frozen model outputs. 
    \item A \textbf{theoretical and empirical analysis} linking the gap to query specialization induced by bipartite matching. And through controlled experiments, showing that the necessary routing signal already exists in frozen query features. 
    \item Safety, adaptation, and mechanism evidence: lower-tail and
    harmful-action diagnostics, paired LoRA controls, and a matching-based
    account of input-dependent query choice.
\end{enumerate}

Overall, our findings provide both practical gains and conceptual insights. 
\section{Why Query Routing Should Exist}
\label{sec:theory}
 
Query-based segmenters produce $K$ candidate masks and collapse them
with a fixed rule. Prior work mostly improves how queries are generated;
we ask whether the final rule uses the generated queries well. The
answer should not be assumed. Bipartite matching creates specialized,
non-exchangeable queries. Once specialization appears, the best query
depends on the input, and decoding becomes routing rather than scalar
thresholding.

\subsection{Formalization}
\label{sec:formalization}
 
Let a query-based model $f_\theta$ with $K$ queries produce, for input
$x$:
\begin{equation}
    f_\theta(x)
    =
    \bigl(\,
        z(x),\;\;
        \{\hat{m}_k(x)\}_{k=1}^K,\;\;
        \{\hat{s}_k(x)\}_{k=1}^K
    \,\bigr),
    \label{eq:model}
\end{equation}
where $z(x)\!\in\!\mathbb{R}^d$ is the encoder representation,
$\hat{m}_k(x)\!\in\![0,1]^{H\times W}$ the $k$-th mask, and
$\hat{s}_k(x)\!\in\![0,1]$ its score.
We measure per-query quality as
$u_k(x)=\mathrm{IoU}(\hat{m}_k(x),m^*(x))$.
A decoding rule $\phi\!:\!\mathcal{X}\!\to\![K]$ achieves
$\mathcal{R}(\phi)=\mathbb{E}_x[u_{\phi(x)}(x)]$.
The oracle $k^*(x)=\arg\max_k u_k(x)$ attains
$\mathcal{R}^*=\mathbb{E}_x[\max_k u_k(x)]$, and the
\emph{oracle gap} is
$\Delta(\phi)=\mathcal{R}^*-\mathcal{R}(\phi)\geq 0$.
 
\begin{proposition}[Gap decomposition]
\label{prop:decomp}
For any decoding rule $\phi$:
\begin{equation}
    \Delta(\phi)
    =
    \mathbb{E}_x\!\Bigl[
        \mathbf{1}[\phi(x)\neq k^*(x)]
        \;\cdot\;
        \bigl(U^*(x)-u_{\phi(x)}(x)\bigr)
    \Bigr],
    \quad U^*(x)=\max_k u_k(x).
    \label{eq:decomp}
\end{equation}
\end{proposition}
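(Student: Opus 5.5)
The identity follows directly from the definitions, using linearity of expectation and a pointwise split on whether $\phi$ agrees with the oracle. First I write the gap as a single expectation:
\[
\Delta(\phi)=\mathcal{R}^*-\mathcal{R}(\phi)=\mathbb{E}_x\bigl[U^*(x)-u_{\phi(x)}(x)\bigr].
\]
This requires both expectations to be finite. That holds because every $u_k(x)=\mathrm{IoU}(\cdot,\cdot)\in[0,1]$, so $U^*(x)$ and $u_{\phi(x)}(x)$ are bounded. Measurability of $x\mapsto u_{\phi(x)}(x)$ is implicit in the definition of $\mathcal{R}(\phi)$; $U^*$ is a finite maximum of measurable functions.

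Next I establish the pointwise identity
\[
U^*(x)-u_{\phi(x)}(x)=\mathbf{1}[\phi(x)\neq k^*(x)]\bigl(U^*(x)-u_{\phi(x)}(x)\bigr)
\]
for every $x$, by cases. If $\phi(x)\neq k^*(x)$, the indicator equals $1$ and both sides coincide. If $\phi(x)=k^*(x)$, then $u_{\phi(x)}(x)=u_{k^*(x)}(x)=U^*(x)$ by the definition of $k^*$ as a maximizer. The left side is therefore $0$, and so is the right side because the indicator vanishes. Taking expectations of this identity gives \eqref{eq:decomp}.

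The only subtle step is ties in the argmax, since $k^*(x)$ need not be unique. I fix $k^*(x)$ to be any measurable selection, for example the smallest maximizing index. With such a choice the case analysis above is unaffected. When $\phi(x)\neq k^*(x)$ but $\phi(x)$ is another maximizer, the bracketed term is simply zero and the identity still holds. I will also note that each summand is nonnegative because $U^*(x)\ge u_{\phi(x)}(x)$, which recovers $\Delta(\phi)\ge0$. This presents the decomposition as "disagreement with the oracle times the loss incurred on that disagreement."
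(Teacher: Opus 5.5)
Your proof is correct and is essentially the paper's argument. The paper also writes the gap as $\mathbb{E}_x[U^*-u_{\phi}]$ and splits on the event $\{\phi=k^*\}$, where the integrand vanishes; your remarks on boundedness, the measurable choice of $k^*$, and ties spell out what the paper's one-line proof leaves implicit.
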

\begin{proof}
Partition $\mathbb{E}_x[U^*\!-u_\phi]$ on $\{\phi=k^*\}$;
the term vanishes when $\phi(x)=k^*(x)$.
\end{proof}
 
\subsection{Structural Inevitability}
\label{sec:inevitability}
 
Equation~\eqref{eq:decomp} shows when a gap appears; bipartite matching
makes such gaps natural.
 
\paragraph{Query heterogeneity.}
Given a partition $\{\mathcal{X}_j\}_{j=1}^J$ with
$\mu(\mathcal{X}_j)>0$, define
$S_{kj}=\mathbb{E}_{x|\mathcal{X}_j}[u_k(x)]$.
The model has \emph{decoding-relevant query heterogeneity} if no single
query is conditionally optimal on every cell:
for every $\bar{k}\in[K]$, there exists a cell $j$ such that
$S_{\bar{k}j}<\max_k S_{kj}$. Different regions of the input space then
require different queries.
 
\begin{proposition}[Fixed rules are suboptimal]
\label{prop:fixed}
Decoding-relevant heterogeneity implies $\Delta(\phi)>0$ for every
fixed rule $\phi\equiv\bar{k}$.
\end{proposition}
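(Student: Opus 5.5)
The plan is to fix an arbitrary constant rule $\phi\equiv\bar{k}$ and lower-bound its gap by the contribution of a single cell of the partition. By the definition of decoding-relevant heterogeneity, there is a cell $j$ with $S_{\bar{k}j}<\max_k S_{kj}$. Let $k'$ attain this maximum, so that $S_{k'j}-S_{\bar{k}j}=:\delta>0$. The goal is to show that this positive conditional-mean deficit forces $\Delta(\phi)>0$.

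\textbf{Step 1 (pointwise nonnegativity).} For every $x$ we have $U^*(x)-u_{\bar{k}}(x)\ge 0$, since $U^*(x)=\max_k u_k(x)$. Equivalently, the integrand in Proposition~\ref{prop:decomp} is nonnegative. Hence
\[
\Delta(\phi)=\mathbb{E}_x\bigl[U^*(x)-u_{\bar{k}}(x)\bigr]
\;\ge\;
\mu(\mathcal{X}_j)\,\mathbb{E}_{x|\mathcal{X}_j}\bigl[U^*(x)-u_{\bar{k}}(x)\bigr].
\]
This uses only the law of total expectation over the partition, with every other cell discarded because its contribution is nonnegative.

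\textbf{Step 2 (compare with $k'$ on the cell).} Pointwise $U^*(x)\ge u_{k'}(x)$. Therefore
\[
\mathbb{E}_{x|\mathcal{X}_j}\bigl[U^*-u_{\bar{k}}\bigr]\;\ge\;S_{k'j}-S_{\bar{k}j}=\delta>0 .
\]
Combining with Step 1 gives $\Delta(\phi)\ge\mu(\mathcal{X}_j)\,\delta>0$, using the standing assumption $\mu(\mathcal{X}_j)>0$. Since $\bar{k}$ was arbitrary, the claim holds for every fixed rule.

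\textbf{Main obstacle.} The only delicate point is measure-theoretic bookkeeping. We need the conditional expectations $S_{kj}$ to be well defined and finite, which holds because $u_k\in[0,1]$ is bounded and measurable and $\mu(\mathcal{X}_j)>0$. We also need the decomposition over the partition to be valid, which requires that the cells be measurable and cover $\mathcal{X}$ up to null sets. I would state these as implicit hypotheses; with them in place, the argument is just the two inequalities above.
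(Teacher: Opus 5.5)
Your proof is correct and follows the paper's argument. Both bound the conditional gap on the heterogeneous cell below by $\max_k S_{kj}-S_{\bar{k}j}>0$ and then invoke $\mu(\mathcal{X}_j)>0$. You also make explicit the pointwise nonnegativity of $U^*-u_{\bar{k}}$, which the paper's one-line passage from a single cell to $\Delta(\phi)$ leaves implicit; this yields the quantitative bound $\Delta(\phi)\ge\mu(\mathcal{X}_j)\,\delta$, and with nonnegativity in hand you do not actually need the cells to cover $\mathcal{X}$.
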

\begin{proof}
For any cell $j$,
\begin{equation}
\mathbb{E}[U^*(x)-u_{\bar{k}}(x)\mid x\in\mathcal{X}_j]
\geq
\max_k S_{kj}-S_{\bar{k}j}.
\end{equation}
By decoding-relevant heterogeneity, the right-hand side is strictly
positive for at least one cell for every $\bar{k}$. Since that cell has
positive measure, $\Delta(\phi)=\mathbb{E}[U^*-u_{\bar{k}}]>0$.
\end{proof}
 
\begin{proposition}[Score argmax is suboptimal]
\label{prop:score}
The score head is trained with binary detection loss
$\mathcal{L}_{\mathrm{score}}
=-\sum_k[c^*_k\log\hat{s}_k+(1\!-\!c^*_k)\log(1\!-\!\hat{s}_k)]$,
which rewards object \emph{presence}, not relative query
\emph{quality}. Whenever $P(\phi_s\neq k^*)>0$ and
$\mathbb{E}[U^*\!-\!u_{\phi_s}\mid\phi_s\neq k^*]>0$, then
$\Delta(\phi_s)>0$.
\end{proposition}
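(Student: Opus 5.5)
The argument applies Proposition~\ref{prop:decomp} directly, taking $\phi=\phi_s$ with $\phi_s(x)=\arg\max_k \hat{s}_k(x)$ (ties broken by any fixed measurable rule, so $\phi_s$ is a well-defined decoding rule). The remark about $\mathcal{L}_{\mathrm{score}}$ rewarding presence rather than quality is motivation only. It explains why the two hypotheses should hold in practice, but the formal claim follows from the hypotheses alone, and the proof never uses the form of the loss.

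First, I would define the event $E=\{x:\phi_s(x)\neq k^*(x)\}$. By Proposition~\ref{prop:decomp}, $\Delta(\phi_s)=\mathbb{E}_x[\mathbf{1}_E\,(U^*(x)-u_{\phi_s}(x))]$. Since $P(E)>0$, the conditional expectation is well defined, and this becomes $\Delta(\phi_s)=P(E)\cdot\mathbb{E}[U^*-u_{\phi_s}\mid E]$.

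Second, both factors are strictly positive: the first by the hypothesis $P(\phi_s\neq k^*)>0$, the second by the hypothesis $\mathbb{E}[U^*-u_{\phi_s}\mid \phi_s\neq k^*]>0$. Their product is therefore strictly positive, which gives $\Delta(\phi_s)>0$.

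The only step needing care is measure-theoretic bookkeeping. The integrand must be measurable and bounded, which holds because IoU lies in $[0,1]$, so all expectations are finite. The argmax defining $k^*$ must also be fixed measurably, for example with lowest-index tie-breaking. Ties deserve a brief comment. If $u_{\phi_s}(x)=U^*(x)$ on part of $E$, those points contribute zero to the integrand. This does not affect the argument, because the hypothesis is stated on the conditional expectation rather than pointwise, so the proof goes through unchanged.
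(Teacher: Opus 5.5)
Your proof is correct and matches the paper's argument: apply Proposition~\ref{prop:decomp} with $\phi=\phi_s$, factor the gap as $P(\phi_s\neq k^*)\cdot\mathbb{E}[U^*-u_{\phi_s}\mid\phi_s\neq k^*]$, and conclude from the two hypotheses. Your observation that the form of $\mathcal{L}_{\mathrm{score}}$ plays no formal role is consistent with the paper, which uses the loss only as an informal explanation of why such routing errors can persist.
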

 
\paragraph{Why training induces heterogeneity.}
Writing $\pi_t(n)=k$ when object $n$ is assigned to query $k$ at
training step $t$, the gradient for query $k$ is
$\nabla_{q_k}\mathcal{L}
=\sum_t\sum_n\mathbf{1}[\pi_t(n)=k]\cdot
\nabla_{q_k}\mathcal{L}_{\mathrm{mask}}(q_k,m^*_{n})$.
Each query receives gradients \emph{only} from matched objects. Chance
specialization can therefore change future assignments, deepening the
specialization. When this feedback yields decoding-relevant
heterogeneity, Proposition~\ref{prop:fixed} applies. The argument depends
on selective gradient flow, not on a particular architecture.

\begin{figure}[t]
    \centering
    \includegraphics[width=0.98\linewidth]{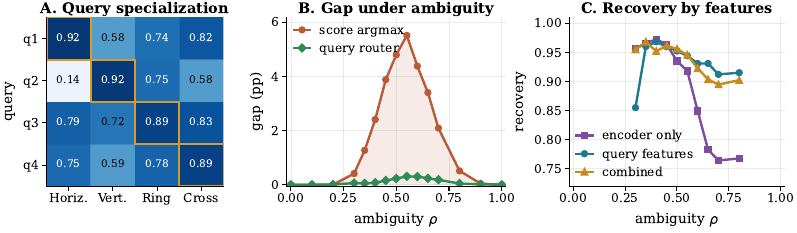}
    \vspace{-0.6em}
    \caption{\textbf{A compact controlled example.}
    In the $K{=}4$ TinyDETR stress case, score argmax opens an oracle gap
    near the ambiguous boundary; a query-feature router stays close to the
    oracle, while an encoder-only router loses recovery.}
    \label{fig:toy_routing_mechanism}
    \vspace{-0.8em}
\end{figure}

\paragraph{Controlled toy example.}
We train TinyDETR on $64{\times}64$ synthetic shape masks, freeze it, and
sweep an ambiguity parameter between training regimes.
Figure~\ref{fig:toy_routing_mechanism} compares score argmax with an
oracle over the same candidate masks, a query-feature router, and an
encoder-only router. Hungarian matching makes different queries best for
different shape families. At the interpolation boundary, scores remain
confident about object presence but stop ranking masks by IoU. The
query-feature router recovers most lost quality; the encoder-only router
degrades where query choice is ambiguous. The main figure shows the
$K{=}4$ stress case; full protocol, matching $K{=}2$ and $K{=}3$ sweeps,
boundary-gap decomposition, and query-feature router ablations are in
Appendix~\ref{app:controlled}. Deferred proofs for the formal claims are
in Appendix~\ref{app:proofs}.
 
\begin{insight}
\textbf{The practical consequence is simple:} if the best query changes with the
input, then the final decoder must solve a routing problem. Scores can
still be useful, but they are not guaranteed to rank counterfactual mask
quality. A post-hoc selector is therefore not an extra segmentation
model; it is a way to read out information the frozen query pool already
contains.
\end{insight}


\section{Oracle Analysis: What Is Already Computed?}
\label{sec:oracle_analysis}
\begin{wrapfigure}[14]{r}{0.54\textwidth}
    \vspace{-0.8em}
    \centering
    \resizebox{\linewidth}{!}{\definecolor{hydraGrid}{HTML}{E3E3E3}
\definecolor{hydraSAMLight}{HTML}{E8C4D7}
\definecolor{hydraM2FLight}{HTML}{C7E3E7}
\definecolor{hydraMDINOLight}{HTML}{CAD8EE}
\definecolor{hydraOFLight}{HTML}{CBE5D0}
\definecolor{hydraSAM}{HTML}{B33C7A}
\definecolor{hydraM2F}{HTML}{2F8F9D}
\definecolor{hydraMDINO}{HTML}{3B6FB6}
\definecolor{hydraOF}{HTML}{2E8B57}

\begin{tikzpicture}
\begin{axis}[
  width=7.95cm,
  height=6.75cm,
  xbar,
  xmin=0,
  xmax=30.5,
  ymin=-0.55,
  ymax=16.55,
  xlabel={IoU points},
  ytick={16,15,14,13,12,11,10,9,8,7,6,5,4,3,2,1,0},
  yticklabels={SAM3 ADE-847,SAM3 VOC,SAM3 City.,SAM3 CamVid,SAM3 LoveDA,SAM3 PC-59,SAM3 IN-S50,SAM3 COCO,Mask2F ADE20k,Mask2F City.,Mask2F COCO,MaskDINO ADE20k,MaskDINO City.,MaskDINO COCO,OneFormer ADE20k,OneFormer City.,OneFormer COCO},
  xtick={0,10,20,30},
  axis x line*=bottom,
  axis y line*=left,
  axis line style={draw=black, line width=0.65pt},
  tick style={draw=black, line width=0.65pt},
  tick label style={font=\scriptsize, text=black},
  label style={font=\small, text=black},
  yticklabel style={font=\scriptsize, text=black},
  xmajorgrids=true,
  ymajorgrids=false,
  grid style={draw=hydraGrid, line width=0.4pt},
  bar width=4.5pt,
  clip=false,
]
\addplot+[xbar, bar shift=0pt, mark=none, fill=hydraSAMLight, draw=hydraSAM, line width=0.35pt] coordinates {(26.82,16)};
\draw[hydraSAM, line width=0.8pt] (axis cs:26.82,15.78) -- (axis cs:26.82,16.22);
\addplot+[xbar, bar shift=0pt, mark=none, fill=hydraSAMLight, draw=hydraSAM, line width=0.35pt] coordinates {(5.21,15)};
\draw[hydraSAM, line width=0.8pt] (axis cs:5.21,14.78) -- (axis cs:5.21,15.22);
\addplot+[xbar, bar shift=0pt, mark=none, fill=hydraSAMLight, draw=hydraSAM, line width=0.35pt] coordinates {(13.00,14)};
\draw[hydraSAM, line width=0.8pt] (axis cs:13.00,13.78) -- (axis cs:13.00,14.22);
\addplot+[xbar, bar shift=0pt, mark=none, fill=hydraSAMLight, draw=hydraSAM, line width=0.35pt] coordinates {(28.95,13)};
\draw[hydraSAM, line width=0.8pt] (axis cs:28.95,12.78) -- (axis cs:28.95,13.22);
\addplot+[xbar, bar shift=0pt, mark=none, fill=hydraSAMLight, draw=hydraSAM, line width=0.35pt] coordinates {(29.15,12)};
\draw[hydraSAM, line width=0.8pt] (axis cs:29.15,11.78) -- (axis cs:29.15,12.22);
\addplot+[xbar, bar shift=0pt, mark=none, fill=hydraSAMLight, draw=hydraSAM, line width=0.35pt] coordinates {(18.93,11)};
\draw[hydraSAM, line width=0.8pt] (axis cs:18.93,10.78) -- (axis cs:18.93,11.22);
\addplot+[xbar, bar shift=0pt, mark=none, fill=hydraSAMLight, draw=hydraSAM, line width=0.35pt] coordinates {(14.16,10)};
\draw[hydraSAM, line width=0.8pt] (axis cs:14.16,9.78) -- (axis cs:14.16,10.22);
\addplot+[xbar, bar shift=0pt, mark=none, fill=hydraSAMLight, draw=hydraSAM, line width=0.35pt] coordinates {(22.39,9)};
\draw[hydraSAM, line width=0.8pt] (axis cs:22.39,8.78) -- (axis cs:22.39,9.22);
\addplot+[xbar, bar shift=0pt, mark=none, fill=hydraM2FLight, draw=hydraM2F, line width=0.35pt] coordinates {(13.63,8)};
\draw[hydraM2F, line width=0.8pt] (axis cs:13.63,7.78) -- (axis cs:13.63,8.22);
\addplot+[xbar, bar shift=0pt, mark=none, fill=hydraM2FLight, draw=hydraM2F, line width=0.35pt] coordinates {(2.00,7)};
\draw[hydraM2F, line width=0.8pt] (axis cs:2.00,6.78) -- (axis cs:2.00,7.22);
\addplot+[xbar, bar shift=0pt, mark=none, fill=hydraM2FLight, draw=hydraM2F, line width=0.35pt] coordinates {(12.53,6)};
\draw[hydraM2F, line width=0.8pt] (axis cs:12.53,5.78) -- (axis cs:12.53,6.22);
\addplot+[xbar, bar shift=0pt, mark=none, fill=hydraMDINOLight, draw=hydraMDINO, line width=0.35pt] coordinates {(10.47,5)};
\draw[hydraMDINO, line width=0.8pt] (axis cs:10.47,4.78) -- (axis cs:10.47,5.22);
\addplot+[xbar, bar shift=0pt, mark=none, fill=hydraMDINOLight, draw=hydraMDINO, line width=0.35pt] coordinates {(2.08,4)};
\draw[hydraMDINO, line width=0.8pt] (axis cs:2.08,3.78) -- (axis cs:2.08,4.22);
\addplot+[xbar, bar shift=0pt, mark=none, fill=hydraMDINOLight, draw=hydraMDINO, line width=0.35pt] coordinates {(11.38,3)};
\draw[hydraMDINO, line width=0.8pt] (axis cs:11.38,2.78) -- (axis cs:11.38,3.22);
\addplot+[xbar, bar shift=0pt, mark=none, fill=hydraOFLight, draw=hydraOF, line width=0.35pt] coordinates {(17.25,2)};
\draw[hydraOF, line width=0.8pt] (axis cs:17.25,1.78) -- (axis cs:17.25,2.22);
\addplot+[xbar, bar shift=0pt, mark=none, fill=hydraOFLight, draw=hydraOF, line width=0.35pt] coordinates {(2.77,1)};
\draw[hydraOF, line width=0.8pt] (axis cs:2.77,0.78) -- (axis cs:2.77,1.22);
\addplot+[xbar, bar shift=0pt, mark=none, fill=hydraOFLight, draw=hydraOF, line width=0.35pt] coordinates {(13.07,0)};
\draw[hydraOF, line width=0.8pt] (axis cs:13.07,-0.22) -- (axis cs:13.07,0.22);
\draw[hydraGrid, dotted, line width=0.7pt] (axis cs:0,8.50) -- (axis cs:30.5,8.50);
\draw[hydraGrid, dotted, line width=0.7pt] (axis cs:0,5.50) -- (axis cs:30.5,5.50);
\draw[hydraGrid, dotted, line width=0.7pt] (axis cs:0,2.50) -- (axis cs:30.5,2.50);
\end{axis}
\end{tikzpicture}}
    \vspace{-0.75em}
    \captionsetup{width=0.96\linewidth,skip=2pt}
    \caption{\textbf{Hidden top-1 headroom.}
    Conservative oracle gap over frozen candidates. SAM~3 uses class-macro prompt IoU;
    class-aware segmenters use dataset-level semantic mIoU.}
    \label{fig:oracle_gap_analysis}
    \vspace{-0.8em}
\end{wrapfigure}

Before introducing a selector, we ask a diagnostic question: if ground
truth could choose one already-computed candidate from a frozen segmenter,
how much would the output improve? This oracle is not a deployable
method; it is a test of the final selection step. A positive gap means a
better mask was already present in the query pool, so part of the error
lies in choosing the output rather than in missing representation.

\begin{insight}
\textbf{Oracle insight.} A positive top-1 oracle gap is not a new-mask
claim; it means the frozen candidate pool already contains a better
candidate than the one exposed by the default head.
\end{insight}

Let $\hat{y}_0$ be the default prediction for input $x$ and concept
$c$, and let $\mathcal{A}(x,c)$ be the candidate actions available under
the frozen decoder. For promptable SAM~3, an action may keep the default
mask, replace it with one candidate, or add one residual candidate; for
class-aware models it exposes the class-matched query under the same
frozen mask head. The conservative top-1 oracle is
\begin{equation}
    a^\star(x,c)
    \in
    \arg\max_{a\in\mathcal{A}(x,c)}
    \mathrm{IoU}\!\left(\mathcal{T}_a(\hat{y}_0), y\right),
    \qquad
    U^\star(x,c)
    =
    \max_{a\in\mathcal{A}(x,c)}
    \mathrm{IoU}\!\left(\mathcal{T}_a(\hat{y}_0), y\right).
    \label{eq:top1_oracle}
\end{equation}
At the dataset level, we report the oracle gap as
$\miou_{\star}-\miou_{0}$: class-macro prompt IoU for SAM~3, and
dataset-level semantic mIoU for class-aware segmenters.

\FloatBarrier
\addtocounter{figure}{1} 
\begin{figure}[t]
    \centering
    \includegraphics[width=\linewidth]{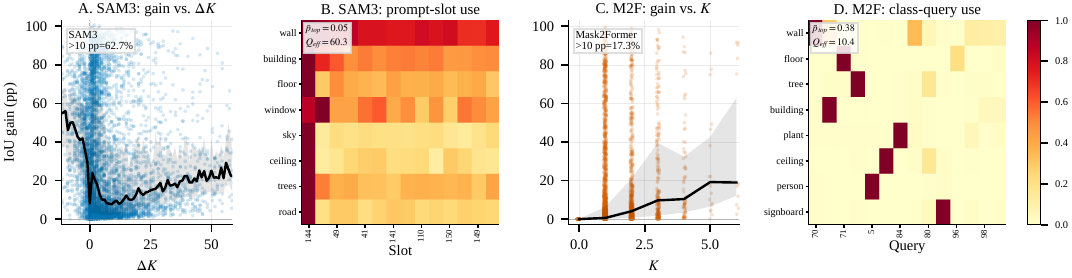}
    \vspace{-0.7em}
    \caption{\textbf{The recoverable signal is structured.}
    Oracle gains arise from small selection changes over frozen candidates:
    changed mask count for SAM~3 and greedy query count for Mask2Former.
    Row-normalized heatmaps show that frequent prompts or classes reuse
    particular mask/query slots rather than arbitrary decoder outputs.}
    \label{fig:query_structure_main}
    \vspace{-0.8em}
\end{figure}

\addtocounter{figure}{-2}

Hidden headroom is not a SAM-specific artifact
(Figure~\ref{fig:oracle_gap_analysis}). Promptable SAM~3 leaves
persistent single-action oracle gaps across broad vocabularies, urban
scenes, driving videos, aerial imagery, and ImageNet-derived prompts.
The class-aware models show the same pattern under dataset-level
semantic mIoU: ADE20k and COCO retain large headroom among frozen
candidates, while Cityscapes is closer to saturated but still nonzero.
Because these are oracle measurements, they show only that the model
often computes a useful alternative before the deployed selection rule
suppresses it.

The gap also cannot be closed by simply trusting the scalar head. In the
compact score-quality diagnostic
(Figure~\ref{fig:score_quality_gap_main}), SAM~3 and Mask2Former both
show only weak alignment between the model's own query score and the IoU
of the corresponding mask; the full four-model diagnostic is deferred to
Appendix~\ref{fig:score_quality_gap_appendix}. Large headroom remains
precisely where the head score gives an overconfident or
under-informative ranking, so the useful signal must be read from the
query output itself. Appendix~\ref{fig:selection_vs_improvement_appendix}
shows that these gains often require only small query-count changes, and
Appendix~\ref{fig:class_query_specialization_appendix} shows the
complementary source of structure in class-aware models: frequent classes
are repeatedly tied to particular query slots. The same appendix includes
SAM~3 as a promptable contrast, where ADE-847 labels spread over many more
prompt-conditioned mask slots.

\begin{wrapfigure}[15]{r}{0.50\textwidth}
    \vspace{-0.8em}
    \centering
    \includegraphics[width=\linewidth]{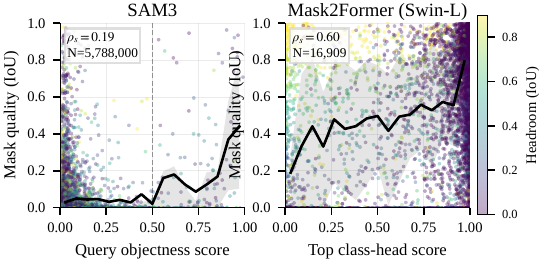}
    \vspace{-1.0em}
    \caption{\textbf{Scores do not rank mask quality.}
    SAM~3 and Mask2Former both show weak alignment between scalar score
    and candidate-mask IoU.}
    \label{fig:score_quality_gap_main}
    \vspace{-0.7em}
\end{wrapfigure}

\addtocounter{figure}{1}

This recoverable signal is structured rather than arbitrary
(Figure~\ref{fig:query_structure_main}). Unlike class-aware segmenters,
SAM~3's default output is already a thresholded multi-mask set, so its
panel plots the oracle's change in selected mask count relative to the
baseline. The different shape is expected: SAM~3
gains can come from adding missed candidates, pruning over-selected
candidates, or replacing candidates within the existing frozen mask pool.
For Mask2Former, by contrast, the oracle chooses among class-matched
query slots, and a few candidate queries can recover substantial IoU. At
the same time, the query slots are not interchangeable. Frequent SAM~3
prompts and common Mask2Former semantic classes repeatedly activate
particular slots, providing the query-use regularity that a post-hoc
selector can learn from frozen outputs.

\section{\ours{}: A Residual Query Selector}
\label{sec:method}

\ours{} treats query selection as conservative output correction. Given
a frozen query pool, the selector either keeps the released output or
executes one residual action using an already-computed candidate. The
backbone, mask head, and default decoder are never updated, and no extra
backbone forward pass is performed.

\subsection{Problem Setup}
\label{sec:method_setup}

A frozen segmenter $f_\phi$ with $Q$ queries emits, for input $x$ and
concept $c$:
\begin{equation}
    f_\phi(x,c)
    =
    \bigl(p_q,\;\alpha_q,\;\beta_q\bigr)_{q=1}^{Q},
    \label{eq:backbone}
\end{equation}
where $p_q$ is the query mask probability, $\alpha_q$ its aggregation
signal, and $\beta_q$ optional auxiliaries. The released aggregator
$\mathcal{G}_0$ maps these outputs to a default prediction
$\hat{y}_0$, and \textbf{we leave $\mathcal{G}_0$ untouched}. The action
set $\mathcal{A}(x,c)$ is derived only from these cached queries. The
training target is the top-1 oracle action in Eq.~\eqref{eq:top1_oracle};
deployment replaces the oracle with a calibrated selector.

\begin{insight}
\textbf{Action semantics.} For SAM~3, the keep-baseline action returns
$\hat{y}_0$, while residual action $q$ exposes the cached alternative
$\mathcal{T}_q(\hat{y}_0)=\hat{y}_0\cup m_q$. For universal segmenters,
action $q$ selects one frozen class-matched query output under the
original mask head. Thus \ours{} changes only the deployed selection over
cached candidates; \textbf{it is not a new mask generator, second
forward pass, or weight update.}
\end{insight}

\subsection{Selector and Training}
\label{sec:architecture}

From cached outputs we build per-query features
$\Phi=(\phi_1,\ldots,\phi_Q)$ and global context $g$. $\Phi$ contains
query statistics such as objectness, geometry, logit moments, agreement,
and exclusivity; $g$ contains image/prompt-level confidence and entropy
signals. The selector is a residual MLP with slot embeddings and bilinear
context gating:
\begin{align}
    \mathbf{h}_q^{(0)} &= \mathrm{LN}(W_\Phi\phi_q+W_g g+W_s e_q),
    \qquad
    \mathbf{h}_q^{(\ell+1)}
    =\mathbf{h}_q^{(\ell)}+\mathrm{MLP}_\ell(\mathrm{LN}(\mathbf{h}_q^{(\ell)})),
    \nonumber\\
    s_q=f_\theta(\Phi,g)_q
    &=w_\mathrm{out}^\top\mathrm{GELU}\!\left(
      W_h\mathrm{LN}\!\left[\mathbf{h}_q^{(L)}
      \,\|\,\mathbf{h}_q^{(L)}\odot W_cg\right]\right).
    \label{eq:selector}
\end{align}
Scores are query-independent given $(\phi_q,g,e_q)$, so selector cost is
linear in $Q$. We use the same architecture family for SAM~3 and the
universal segmenters; only the deterministic feature schema changes with
the backbone. With $h{=}256$ and $L{=}8$, this gives a ${\approx}1.6$M
parameter selector for the class-aware DETR schema and a
$9.6$M--$14.5$M selector for the richer SAM~3 schema, depending on the
dataset vocabulary features. These parameters sit entirely after the
frozen model output.

The two schemas differ only in what the frozen model exposes. For
Mask2Former, MaskDINO, and OneFormer, $\phi_q$ summarizes a class-aware
query: mask geometry, class-logit moments, query confidence, and
agreement with the released dense prediction. Candidate actions are
class-matched query outputs evaluated under the original mask head. For
SAM~3, $\phi_q$ additionally includes prompt-conditioned mask evidence,
box geometry, aggregation statistics, and prompt/text features; the
action set contains a keep-baseline action, called no-op below, plus one
residual-add action per cached mask.
In both cases, the selector never receives ground-truth masks at test
time and never changes the frozen query pool.

For each training item $(x_i,c_i,y_i)$, define the gain of action $a$ as
$\mathbf{g}_{i,a}=\mathrm{IoU}(\mathcal{T}_a(\hat{y}_0),y_i)
-\mathrm{IoU}(\hat{y}_0,y_i)$ and let
$a_i^\star=\arg\max_a\mathbf{g}_{i,a}$. SAM~3 keeps all valid prompts and
uses no-op whenever no residual action clears the minimum gain; DETR
runs may filter zero-headroom examples. With invalid actions masked, the
loss is
\begin{equation}
    \mathcal{L}(\theta)
    =
    (1-\lambda_\mathrm{soft})\,\mathcal{L}_\mathrm{CE}
    + \lambda_\mathrm{soft}\,\mathcal{L}_\mathrm{soft}
    + \lambda_\mathrm{rank}\,\mathcal{L}_\mathrm{rank}.
    \label{eq:loss}
\end{equation}
$\mathcal{L}_\mathrm{CE}$ is gain-weighted cross-entropy on
$a^\star$; $\mathcal{L}_\mathrm{soft}$ is a KL term against a
temperature-$T$ distribution over positive-gain queries; and
$\mathcal{L}_\mathrm{rank}$ is a pairwise margin separating positive
from negative gains. Hyperparameters are fixed within each feature
schema; Appendix~\ref{app:protocol} gives the split, calibration, and
optimization details.

\paragraph{Calibration and regularization.}
Three choices keep the learned selector conservative. First, the policy
can expose at most one cached candidate, so it cannot accumulate many
small, poorly calibrated edits. Second, SAM~3 keeps zero-headroom prompts
in training and labels them no-op; class-aware residual MLP runs may
instead filter examples whose best action is below
$\varepsilon_\mathrm{min}$. Third, the final action margin is calibrated
on held-out images: for SAM~3 it compares the best residual action to
no-op, and for dense-safe universal runs it gates low-confidence
class-query updates. If no safe margin exists, the deployed policy falls
back to the released output. Appendix~\ref{app:protocol} gives the
held-out threshold-selection rule.

\subsection{Inference}
\label{sec:inference}

At test time we extract $(\Phi,g)$ from the frozen outputs, compute
$s=f_{\theta^\star}(\Phi,g)$, mask invalid actions, and select one action.
For SAM~3, the top residual action is executed only when its margin over
no-op exceeds the held-out threshold; otherwise the method returns
$\hat{y}_0$. For dense-safe universal runs, the same idea is applied as a
margin gate over class-query outputs: low-margin updates keep the
default dense prediction. The deployed policy is therefore a calibrated
single-action router over cached candidates.

\FloatBarrier

\section{Experimental Evaluation}
\label{sec:experiments}

\subsection{Setup}
\label{sec:experimental_setup}

We evaluate \ours{} in a frozen-output protocol. Each released model is
run once and cached; \ours{} is trained only on training-split features,
its abstention margin is calibrated on held-out images, and final numbers
are reported once on a disjoint evaluation split. Evaluation labels never
choose features, hyperparameters, margins, or checkpoints; backbone
weights stay fixed except in the marked LoRA control. Universal
segmenters use the original Mask2Former, MaskDINO, and OneFormer
checkpoints~\citep{cheng2022mask2former,li2023maskdino,
jain2023oneformer} on ADE20k, Cityscapes, and COCO; SAM~3 uses
image--class prompts. We call each released inference output the
\emph{baseline}; for SAM~3 this is the public Hugging Face default output.

Metrics follow each output contract. Universal segmenters are scored by
dataset-level semantic mIoU with the released ignore-label conventions;
SAM~3 prompts are scored by binary IoU against the class mask and
macro-averaged over classes. Gains are percentage points over the
baseline; oracle columns use the best single frozen action; recovery is the
fraction of oracle headroom recovered. Full counts, splits, and metric
details are in Appendices~\ref{app:protocol}, \ref{app:sam3_metric},
and~\ref{app:full_results}.

\subsection{Results}
\label{sec:experimental_results}

\paragraph{Universal segmenters.}
\begin{wraptable}[13]{r}{0.54\textwidth}
\centering
\vspace{-0.55em}
\caption{\textbf{Frozen-output routing: universal segmenters.}}
\label{tab:universal_ds_main}
\vspace{-0.65em}
\tiny
\setlength{\tabcolsep}{1.8pt}
\renewcommand{\arraystretch}{0.80}
\resizebox{\linewidth}{!}{%
\begin{tabular}{@{}l l r r r r r@{}}
\specialrule{0.1em}{0pt}{0pt}
\rowcolor{headerblue}
\textbf{Model} & \textbf{Dataset} & \textbf{Default}
& \textbf{\ours{}} & \textbf{$\Delta$ pp} & \textbf{Oracle}
& \textbf{Rec.} \\
\midrule
Mask2Former & ADE20k & 0.5620 & \textbf{0.6361} & \textbf{+7.41} & 0.6983 & 54.4\% \\
\rowcolor{tableblue}
Mask2Former & Cityscapes & 0.8289 & \textbf{0.8364} & \textbf{+0.75} & 0.8489 & 37.5\% \\
Mask2Former & COCO & 0.6729 & \textbf{0.7214} & \textbf{+4.85} & 0.7982 & 38.7\% \\
\rowcolor{sectionblue}
\textbf{Mask2Former} & \textbf{Average} & 0.6879 & \textbf{0.7313} & \textbf{+4.34} & 0.7818 & 43.5\% \\
\midrule
\rowcolor{tableblue}
MaskDINO & ADE20k & 0.4878 & \textbf{0.5414} & \textbf{+5.36} & 0.5925 & 51.2\% \\
MaskDINO & Cityscapes & 0.7978 & \textbf{0.7978} & \textbf{+0.01} & 0.8185 & 0.3\% \\
\rowcolor{tableblue}
MaskDINO & COCO & 0.6056 & \textbf{0.6409} & \textbf{+3.53} & 0.7194 & 31.0\% \\
\rowcolor{sectionblue}
\textbf{MaskDINO} & \textbf{Average} & 0.6304 & \textbf{0.6600} & \textbf{+2.97} & 0.7101 & 27.5\% \\
\midrule
OneFormer & ADE20k & 0.5638 & \textbf{0.6170} & \textbf{+5.31} & 0.7363 & 30.8\% \\
\rowcolor{tableblue}
OneFormer & Cityscapes & 0.8275 & \textbf{0.8282} & \textbf{+0.07} & 0.8552 & 2.6\% \\
OneFormer & COCO & 0.6679 & \textbf{0.7024} & \textbf{+3.45} & 0.7986 & 26.4\% \\
\rowcolor{sectionblue}
\textbf{OneFormer} & \textbf{Average} & 0.6864 & \textbf{0.7159} & \textbf{+2.95} & 0.7967 & 20.0\% \\
\specialrule{0.1em}{0pt}{0pt}
\end{tabular}
}
\vspace{-0.85em}
\end{wraptable}

Table~\ref{tab:universal_ds_main} shows that difficulty is driven more by
domain than by decoder family. ADE20k mixes objects, stuff, and parts,
leaving all three released heads unresolved mask choices and
$+5.31$--$+7.41$ pp gains. COCO is closer to natural-image pretraining but
crowded, giving consistent $+3.45$--$+4.85$ pp gains. Cityscapes has a
narrow road-scene taxonomy that is already well matched, so MaskDINO and
OneFormer have almost no safe headroom. Across backbones, \ours{} exposes
better already-computed candidates when they exist and abstains when the
baseline is safest.

\paragraph{SAM~3 prompt routing.}
\begin{wraptable}[11]{r}{0.52\textwidth}
\centering
\vspace{-0.6em}
\caption{\textbf{Frozen-output routing: SAM~3.}}
\label{tab:sam3_main}
\vspace{-0.65em}
\tiny
\setlength{\tabcolsep}{1.7pt}
\renewcommand{\arraystretch}{0.78}
\resizebox{\linewidth}{!}{%
\begin{tabular}{@{}l r r r r r r@{}}
\specialrule{0.1em}{0pt}{0pt}
\rowcolor{headerblue}
\textbf{Dataset} & \textbf{Default} & \textbf{\ours{}}
& \textbf{$\Delta$ pp} & \textbf{Gap} & \textbf{Rec.} & \textbf{Keep def.} \\
\midrule
ADE-847 & 0.3555 & \textbf{0.4428} & \textbf{+8.72} & +26.82 & 32.5\% & 86.6\% \\
\rowcolor{tableblue}
VOC & 0.7589 & \textbf{0.7868} & \textbf{+2.79} & +5.21 & 53.5\% & 93.9\% \\
Cityscapes & 0.5630 & \textbf{0.6108} & \textbf{+4.79} & +13.00 & 36.8\% & 88.6\% \\
\rowcolor{tableblue}
CamVid & 0.3411 & \textbf{0.4907} & \textbf{+14.96} & +28.95 & 51.7\% & 63.0\% \\
LoveDA & 0.2756 & \textbf{0.3904} & \textbf{+11.48} & +29.15 & 39.4\% & 61.8\% \\
\rowcolor{tableblue}
PC-59 & 0.6084 & \textbf{0.6989} & \textbf{+9.04} & +18.93 & 47.8\% & 80.8\% \\
ImageNet-S50 & 0.7697 & \textbf{0.8805} & \textbf{+11.09} & +14.16 & 78.3\% & 87.2\% \\
\rowcolor{tableblue}
COCO & 0.5102 & \textbf{0.6345} & \textbf{+12.43} & +22.39 & 55.5\% & 55.9\% \\
\midrule
\rowcolor{sectionblue}
\textbf{Avg.} & 0.5228 & \textbf{0.6169} & \textbf{+9.41} & +19.83 & 49.4\% & 77.2\% \\
\specialrule{0.1em}{0pt}{0pt}
\end{tabular}
}
\vspace{-0.95em}
\end{wraptable}

Table~\ref{tab:sam3_main} shows the same output-selection bottleneck in a
prompt-conditioned mask pool. ADE-847 has many rare parts and scene-stuff
labels, so the oracle gap is large but the calibrated gate mostly
abstains; VOC and Cityscapes are more saturated. CamVid, ImageNet-S50, and
COCO expose more rankable alternatives. LoveDA is the useful surprise:
remote-sensing land-cover is not an obvious match to SAM~3's public
image/video pretraining description, yet its reusable geometric regions
make better mask choices learnable. Difficulty is therefore not just
semantic distance from pretraining; it depends on whether the frozen pool
contains reusable candidates and whether held-out calibration can identify
when to expose them.

\paragraph{Where gains concentrate.}
\begin{figure}[t!]
\centering
\vspace{-0.8em}
\includegraphics[width=\linewidth]{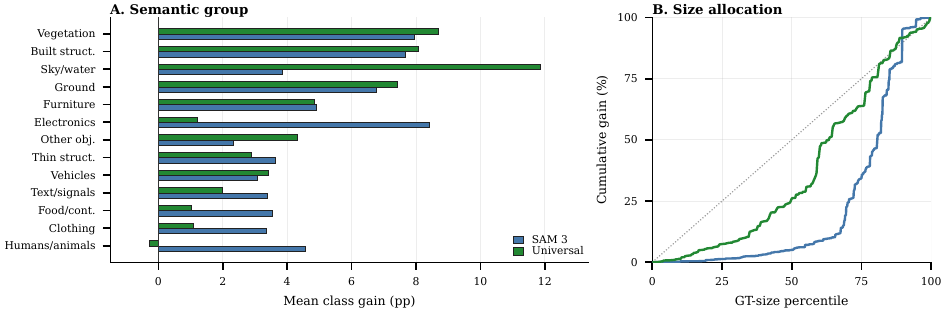}
\vspace{-0.75em}
\caption{\textbf{Where gains concentrate across model families.}
A: mean class gain by coarse semantic group for SAM~3 and the universal
segmenters. B: cumulative positive gain mass after ordering labels from
smallest to largest measured GT masks.}
\label{fig:sam3_label_type_benefit}
\vspace{-1.05em}
\end{figure}

Figure~\ref{fig:sam3_label_type_benefit} separates the allocation by
semantics and by mask size. Panel A shows that routing gains are not
uniformly distributed over labels. Both families benefit on reusable
scene regions such as vegetation, built structures, ground, and
sky/water, but the emphasis differs: universal segmenters gain most on
stuff-like regions, whereas SAM~3 also recovers object-centric categories
such as electronics, furniture, and humans/animals. Thin structures,
vehicles, text/signals, clothing, and food/container labels contribute
less, suggesting that small or highly structured objects leave less
stable unused-candidate headroom. Panel B makes the size effect explicit. Universal
gains accumulate earlier across the size-ranked labels, while SAM~3 gains
are delayed until the upper percentiles, so its largest improvements are
concentrated on larger masks rather than tiny regions.

\paragraph{Slot-use concentration.}
\begin{wrapfigure}[9]{r}{0.45\textwidth}
    \centering
    \vspace{-0.55em}
    \captionsetup{font=footnotesize}
    \includegraphics[width=0.98\linewidth]{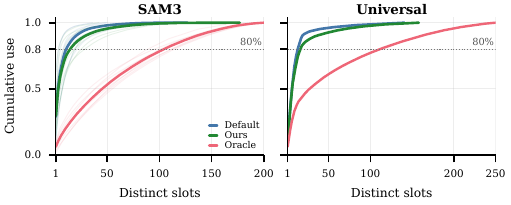}
    \vspace{-0.75em}
    \caption{\textbf{Slot concentration.}
    Default, router, and oracle slot use for SAM~3 and a universal
    segmenter.}
    \label{fig:query_concentration_main}
    \vspace{-0.65em}
\end{wrapfigure}

The query-use diagnostic in
Figure~\ref{fig:query_concentration_main} isolates the selection change.
SAM~3 concentrates the default on a small subset of slots, while the
router draws from a broader frozen pool when it acts. For universal
segmenters, the default curve attributes the dense prediction to the slot
that best overlaps each default class region; on saturated
OneFormer--Cityscapes, \ours{} stays close to that default while the
oracle exposes broader unused headroom. Qualitative before/after
corrections, including nonempty defaults, and per-dataset curves are in
Appendix Figures~\ref{fig:app_qualitative_improvement_row}
and~\ref{fig:app_sam3_query_usage}.

\paragraph{Transfer across domains.}
\begin{wrapfigure}[12]{r}{0.3\textwidth}
    \centering
    \vspace{-0.55em}
    \captionsetup{font=footnotesize}
    \includegraphics[width=1\linewidth]{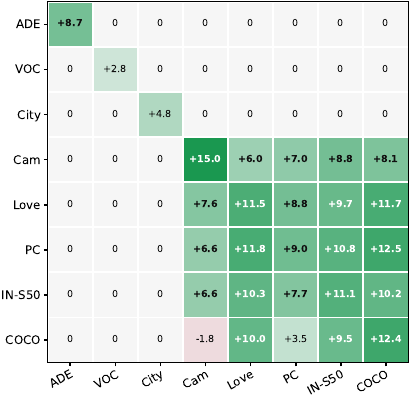}
    \vspace{-0.70em}
    \caption{\textbf{Transfer matrix.}}
    \label{fig:sam3_cross_domain_heatmap}
    \vspace{-0.35em}
\end{wrapfigure}

Cross-domain transfer is broad rather than memorized
(Figure~\ref{fig:sam3_cross_domain_heatmap}). We train a selector on
each source and evaluate it on every target, so the matrix tests whether
learned action semantics survive dataset shift rather than only fitting
diagonal targets. LoveDA, ImageNet-S50, PC-59, and CamVid are strong transfer sources,
suggesting that reusable region geometry transfers even when semantics
change. The asymmetry is informative: COCO$\to$CamVid is the only
remaining negative cell, consistent with a crowded object/stuff source
miscalibrating a smaller driving-video target; Appendix~\ref{app:transfer_controls}
tests vocabulary-size and label-overlap explanations, and
Appendix~\ref{app:coco_camvid_negative_transfer} isolates this negative
cell. Zero bands are meaningful too: for ADE-847, VOC, and Cityscapes
off-domain runs, the no-op margin keeps the default output on every
target prompt.

\paragraph{Safety and model capacity.}
\begin{wrapfigure}[15]{r}{0.50\textwidth}
\vspace{-0.9em}
\centering
\includegraphics[width=1\linewidth]{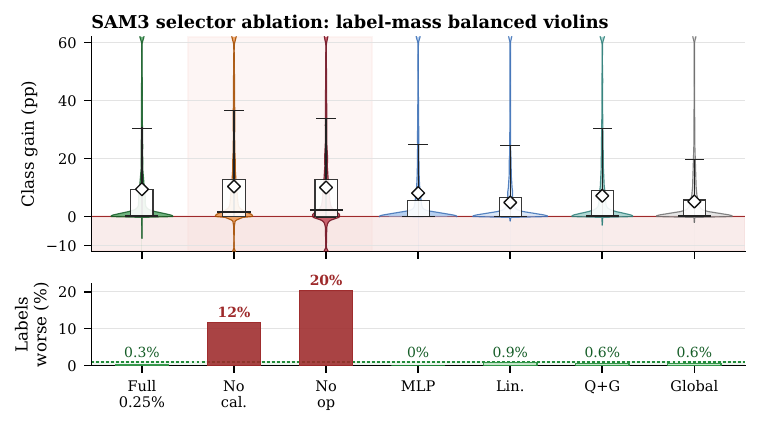}
\vspace{-0.8em}
\caption{\textbf{Selector ablations.}
Label-mass-balanced gains; bottom bars show labels worse than the
default.}
\label{fig:main_ablations}
\vspace{-1.3em}
\end{wrapfigure}

\ours{} is conservative by construction: it can keep the baseline output,
acts at most once, and uses held-out calibration to gate low-margin
replacements. Figure~\ref{fig:main_ablations} audits these choices on
SAM~3 prompt routing and also varies selector capacity and feature inputs;
online throughput is summarized in Figure~\ref{fig:selector_overhead_main_wrap}
and Appendix Figure~\ref{fig:app_selector_overhead_details}. With a fixed
$0.25\%$ validation harm budget, the full safe router gains $+9.4$ class-macro
prompt-IoU points while keeping harmful replacements--actions that lower
prompt IoU relative to the baseline--below $1\%$ on every SAM~3 dataset
($0.30\%$ average). Removing calibration or the no-op increases raw gain to
$+10.4$ and $+10.0$ points, but harmful replacements rise to $15.5\%$ and
$21.3\%$ and worst-quartile gain falls to $-2.38$ and $-2.83$ pp. Simpler
heads and reduced feature sets remain positive, indicating signal beyond one
head choice; Appendix~\ref{app:ablations} gives full safety details.


\paragraph{Online cost.}
\label{sec:online_cost}

\begin{wrapfigure}[8]{r}{0.52\textwidth}
\centering
\resizebox{\linewidth}{!}{\definecolor{hydraBlue}{HTML}{4477AA}
\definecolor{hydraGreen}{HTML}{228833}
\definecolor{hydraModel}{HTML}{D9D9D9}
\definecolor{hydraGrid}{HTML}{E2E2E2}
\definecolor{hydraText}{HTML}{222222}

\begin{tikzpicture}
\pgfplotsset{
  hydra compact axis/.style={
    axis x line*=bottom,
    axis y line*=left,
    axis line style={draw=black, line width=0.65pt},
    tick style={draw=black, line width=0.65pt},
    major tick length=1.8pt,
    tick label style={font=\scriptsize, text=black},
    label style={font=\scriptsize, text=black},
    title style={font=\bfseries\small, text=black, yshift=1pt},
    grid=major,
    grid style={draw=hydraGrid, line width=0.35pt},
    ymajorgrids=true,
    xmajorgrids=false,
    clip=false,
  }
}

\begin{groupplot}[
  group style={
    group size=2 by 1,
    horizontal sep=0.25cm
  },
]

\nextgroupplot[
  hydra compact axis,
  width=4.82cm,
  height=3.02cm,
  title={(a) Live inference},
  ylabel={Latency (ms)},
  ymin=0,
  ymax=270,
  xmin=-0.55,
  xmax=3.55,
  ytick={0,75,150,225},
  xtick={0,1,2,3},
  xticklabels={SAM3,M2F,OF,MD},
  xticklabel style={font=\scriptsize, align=center, text=black},
]

\path[fill=hydraModel, draw=black, line width=0.35pt] (axis cs:-0.18,0) rectangle (axis cs:0.18,214.17);
\path[fill=hydraGreen, draw=black, line width=0.35pt] (axis cs:-0.18,214.17) rectangle (axis cs:0.18,242.58);
\path[fill=hydraModel, draw=black, line width=0.35pt] (axis cs:0.82,0) rectangle (axis cs:1.18,86.12);
\path[fill=hydraGreen, draw=black, line width=0.35pt] (axis cs:0.82,86.12) rectangle (axis cs:1.18,94.42);
\path[fill=hydraModel, draw=black, line width=0.35pt] (axis cs:1.82,0) rectangle (axis cs:2.18,103.22);
\path[fill=hydraGreen, draw=black, line width=0.35pt] (axis cs:1.82,103.22) rectangle (axis cs:2.18,112.76);
\path[fill=hydraModel, draw=black, line width=0.35pt] (axis cs:2.82,0) rectangle (axis cs:3.18,43.69);
\path[fill=hydraGreen, draw=black, line width=0.35pt] (axis cs:2.82,43.69) rectangle (axis cs:3.18,52.04);

\draw[hydraBlue, line width=1.0pt] (axis cs:-0.27,214.47) -- (axis cs:0.27,214.47);
\draw[hydraBlue, line width=1.0pt] (axis cs:0.73,86.12) -- (axis cs:1.27,86.12);
\draw[hydraBlue, line width=1.0pt] (axis cs:1.73,103.22) -- (axis cs:2.27,103.22);
\draw[hydraBlue, line width=1.0pt] (axis cs:2.73,43.69) -- (axis cs:3.27,43.69);

\nextgroupplot[
  hydra compact axis,
  axis y line*=right,
  width=5.28cm,
  height=3.02cm,
  title={(b) Online throughput},
  ylabel={Inputs/s},
  ylabel style={rotate=-90},
  ymin=0,
  ymax=26.5,
  ytick={0,10,20},
  xtick={0,1,2,3},
  xticklabels={SAM3,M2F,OF,MD},
  xticklabel style={font=\scriptsize, align=center, text=black},
  xmin=-0.55,
  xmax=3.55,
]

\addplot+[
  ybar,
  mark=none,
  bar width=5.4pt,
  bar shift=-3.5pt,
  fill=hydraBlue,
  draw=black,
  line width=0.35pt,
] coordinates {
  (0,4.663)
  (1,11.612)
  (2,9.688)
  (3,22.889)
};

\addplot+[
  ybar,
  mark=none,
  bar width=5.4pt,
  bar shift=3.5pt,
  fill=hydraGreen,
  draw=black,
  line width=0.35pt,
] coordinates {
  (0,4.122)
  (1,10.591)
  (2,8.869)
  (3,19.216)
};

\end{groupplot}
\end{tikzpicture}}
\vspace{-0.75em}
\caption{\textbf{Online overhead.}
Blue marks native inference; green is \ours{} with the selector tail.}
\label{fig:selector_overhead_main_wrap}
\vspace{-1.15em}
\end{wrapfigure}

\ours{} reuses the frozen forward rather than calling a second model.
Figure~\ref{fig:selector_overhead_main_wrap} shows routed execution
remains close to native: $4.12$ prompts/s for SAM~3 and $10.59$,
$8.87$, and $19.22$ images/s for Mask2Former, OneFormer, and MaskDINO.
The selector tail adds $28.4$ ms for SAM~3 and $8.3$--$9.5$ ms for the
universal backbones; the full breakdown is in Appendix
Figure~\ref{fig:app_selector_overhead_details}.

\begin{wraptable}[8]{r}{0.42\textwidth}
\vspace{-0.9em}
\centering
\captionsetup{type=table,hypcap=false}
\caption{\textbf{LoRA adaptation control.}}
\label{tab:main_lora}
\vspace{-0.45em}
\scriptsize
\setlength{\tabcolsep}{3pt}
\renewcommand{\arraystretch}{1.02}
\resizebox{\linewidth}{!}{%
\begin{tabular}{@{}lrrrr@{}}
\specialrule{0.1em}{0pt}{0pt}
\rowcolor{headerblue}
\textbf{Backbone} &
\textbf{LoRA} &
\textbf{\ours{}} &
\textbf{\ours{}+L} &
\textbf{Change} \\
\midrule
Mask2Former & $-0.57$ & $+7.21$ & \textbf{$+7.59$} & $+0.37$ \\
\rowcolor{tableblue}
MaskDINO & $-1.52$ & $+6.49$ & \textbf{$+7.20$} & $+0.71$ \\
\midrule
\rowcolor{sectionblue}
\textbf{Average} & $\mathbf{-1.05}$ & $\mathbf{+6.85}$ &
\textbf{$+7.40$} & $\mathbf{+0.54}$ \\
\specialrule{0.1em}{0pt}{0pt}
\end{tabular}
}
\vspace{-0.2em}

{\scriptsize\emph{Change} is \ours{}+LoRA minus \ours{}.}

\vspace{-0.7em}
\end{wraptable}

\paragraph{Weight adaptation baseline.}
LoRA changes weights rather than the final selection rule. If weight
misalignment were sufficient, it should improve exposed masks and reduce
routing headroom.
Instead, LoRA alone changes per-prompt mIoU by $-2.24$ to $+0.24$ points,
with five of six cells negative; retraining \ours{} on the same LoRA cache
changes gain by only $+0.54$ points on average. The decoder still has to
expose a candidate already present; full results are in
Table~\ref{tab:app_lora_results}.

\FloatBarrier

\section{Related Work}
\label{sec:related}

DETR~\citep{carion2020detr} introduced learned queries trained by
bipartite assignment, and Mask2Former, MaskDINO, and OneFormer extend this
recipe to universal segmentation~\citep{cheng2022mask2former,
li2023maskdino,jain2023oneformer}; promptable and open-vocabulary systems
also expose candidate sets~\citep{kirillov2023sam,carion2025sam3,
liu2024groundingdino,zou2023xdecoder}. DETR variants improve query
generation, assignment, or score quality
\citep{li2022dndetr,zhang2023dino,liu2022dabdetr,pu2023rankdetr,
cai2023aligndetr,kang2026paqdetr}, while calibration, mask-quality scoring,
and non-maximum suppression improve exposed predictions
\citep{guo2017calibration,munir2024caldetr,huang2019maskscoring,
bodla2017softnms}. Test-time adaptation~\citep{wang2021tent,
wang2022cotta}, PEFT~\citep{jia2022vpt,hu2022lora,chen2022adaptformer,
lian2022ssf}, and segmentation adapters~\citep{ke2023hqsam,
zhang2024persam,liu2024matcher,fu2024frozendetr} change weights or
inference-time computation. \ours{} instead operates after one frozen
forward, routing only among decoded candidates for the current input; unlike
expert routing~\citep{fedus2022switch,zhou2022expertchoice,
puigcerver2024softmoe}, it does not route among parameterized experts.
Oracle analyses are common in detection and segmentation
\citep{borji2019upperbound,huang2019maskscoring}. Park
\etal{}~\citep{park2024reliable} suppress specialist DETR predictions,
whereas we find that some discarded specialists are useful when routed;
FSSDINO~\citep{zakir2026fssdino} reports a related selection gap in frozen
DINOv3, while we study query-based segmenters and recover accuracy with a
frozen-output selector.


\section{Conclusion}
\label{sec:conclusion}
\vspace{-0.5em}

This paper separates whether a frozen segmenter can compute a good mask
from whether its deployed output selection exposes it. Across DETR-family
universal segmenters and SAM~3 prompt-conditioned mask pools, frozen
outputs often contain better candidates than the default prediction.
\ours{} recovers part of this hidden quality with a small selector trained
only on cached candidates, improving Mask2Former, MaskDINO, OneFormer, and
all eight SAM~3 domains. A calibrated keep-baseline gate is central: less
conservative routers can raise average IoU but worsen the lower tail.

LoRA controls and the TinyDETR study qualify the claim. Routing is not a
substitute for stronger adaptation or better backbones; it identifies a
separable output-selection bottleneck that persists after lightweight
adaptation and is consistent with query specialization under bipartite
matching. The practical takeaway is simple: frozen segmenters should be
evaluated by both the masks they expose and the useful candidates they
suppress.

\newpage
\bibliographystyle{plainnat}
\bibliography{references}

\clearpage
\section*{NeurIPS Paper Checklist}

\begin{enumerate}

\item {\bf Claims}
    \item[] Question: Do the main claims made in the abstract and introduction accurately reflect the paper's contributions and scope?
    \item[] Answer: \answerYes{}.
    \item[] Justification: The abstract and Introduction state the frozen-output, cached-candidate scope, the no-new-mask/no-weight-update assumption, the main empirical gains, and the mechanism claim. The experiments, LoRA controls, and TinyDETR study support those claims in Sections~\ref{sec:theory}--\ref{sec:experiments} and the appendix.
    \item[] Guidelines:
    \begin{itemize}
        \item The answer \answerNA{} means that the abstract and introduction do not include the claims made in the paper.
        \item The abstract and/or introduction should clearly state the claims made, including the contributions made in the paper and important assumptions and limitations. A \answerNo{} or \answerNA{} answer to this question will not be perceived well by the reviewers. 
        \item The claims made should match theoretical and experimental results, and reflect how much the results can be expected to generalize to other settings. 
        \item It is fine to include aspirational goals as motivation as long as it is clear that these goals are not attained by the paper. 
    \end{itemize}

\item {\bf Limitations}
    \item[] Question: Does the paper discuss the limitations of the work performed by the authors?
    \item[] Answer: \answerYes{}.
    \item[] Justification: Appendix~\ref{app:limits} explicitly discusses the supervised oracle-label requirement, the labelled calibration split, and the single-action restriction, and it outlines future work beyond the current scope.
    \item[] Guidelines:
    \begin{itemize}
        \item The answer \answerNA{} means that the paper has no limitation while the answer \answerNo{} means that the paper has limitations, but those are not discussed in the paper. 
        \item The authors are encouraged to create a separate ``Limitations'' section in their paper.
        \item The paper should point out any strong assumptions and how robust the results are to violations of these assumptions (e.g., independence assumptions, noiseless settings, model well-specification, asymptotic approximations only holding locally). The authors should reflect on how these assumptions might be violated in practice and what the implications would be.
        \item The authors should reflect on the scope of the claims made, e.g., if the approach was only tested on a few datasets or with a few runs. In general, empirical results often depend on implicit assumptions, which should be articulated.
        \item The authors should reflect on the factors that influence the performance of the approach. For example, a facial recognition algorithm may perform poorly when image resolution is low or images are taken in low lighting. Or a speech-to-text system might not be used reliably to provide closed captions for online lectures because it fails to handle technical jargon.
        \item The authors should discuss the computational efficiency of the proposed algorithms and how they scale with dataset size.
        \item If applicable, the authors should discuss possible limitations of their approach to address problems of privacy and fairness.
        \item While the authors might fear that complete honesty about limitations might be used by reviewers as grounds for rejection, a worse outcome might be that reviewers discover limitations that aren't acknowledged in the paper. The authors should use their best judgment and recognize that individual actions in favor of transparency play an important role in developing norms that preserve the integrity of the community. Reviewers will be specifically instructed to not penalize honesty concerning limitations.
    \end{itemize}

\item {\bf Theory assumptions and proofs}
    \item[] Question: For each theoretical result, does the paper provide the full set of assumptions and a complete (and correct) proof?
    \item[] Answer: \answerYes{}.
    \item[] Justification: The formal claims are stated as propositions in Section~\ref{sec:theory} and Appendix~\ref{app:controlled}; proofs are provided either inline or in Appendix~\ref{app:proofs}.
    \item[] Guidelines:
    \begin{itemize}
        \item The answer \answerNA{} means that the paper does not include theoretical results. 
        \item All the theorems, formulas, and proofs in the paper should be numbered and cross-referenced.
        \item All assumptions should be clearly stated or referenced in the statement of any theorems.
        \item The proofs can either appear in the main paper or the supplemental material, but if they appear in the supplemental material, the authors are encouraged to provide a short proof sketch to provide intuition. 
        \item Inversely, any informal proof provided in the core of the paper should be complemented by formal proofs provided in appendix or supplemental material.
        \item Theorems and Lemmas that the proof relies upon should be properly referenced. 
    \end{itemize}

    \item {\bf Experimental result reproducibility}
    \item[] Question: Does the paper fully disclose all the information needed to reproduce the main experimental results of the paper to the extent that it affects the main claims and/or conclusions of the paper (regardless of whether the code and data are provided or not)?
    \item[] Answer: \answerYes{}.
    \item[] Justification: Sections~\ref{sec:method} and~\ref{sec:experimental_setup} describe the selector and frozen-output protocol, while Appendix~\ref{app:protocol} gives datasets, splits, cached features, optimization hyperparameters, and held-out calibration.
    \item[] Guidelines:
    \begin{itemize}
        \item The answer \answerNA{} means that the paper does not include experiments.
        \item If the paper includes experiments, a \answerNo{} answer to this question will not be perceived well by the reviewers: Making the paper reproducible is important, regardless of whether the code and data are provided or not.
        \item If the contribution is a dataset and\slash or model, the authors should describe the steps taken to make their results reproducible or verifiable. 
        \item Depending on the contribution, reproducibility can be accomplished in various ways. For example, if the contribution is a novel architecture, describing the architecture fully might suffice, or if the contribution is a specific model and empirical evaluation, it may be necessary to either make it possible for others to replicate the model with the same dataset, or provide access to the model. In general. releasing code and data is often one good way to accomplish this, but reproducibility can also be provided via detailed instructions for how to replicate the results, access to a hosted model (e.g., in the case of a large language model), releasing of a model checkpoint, or other means that are appropriate to the research performed.
        \item While NeurIPS does not require releasing code, the conference does require all submissions to provide some reasonable avenue for reproducibility, which may depend on the nature of the contribution. For example
        \begin{enumerate}
            \item If the contribution is primarily a new algorithm, the paper should make it clear how to reproduce that algorithm.
            \item If the contribution is primarily a new model architecture, the paper should describe the architecture clearly and fully.
            \item If the contribution is a new model (e.g., a large language model), then there should either be a way to access this model for reproducing the results or a way to reproduce the model (e.g., with an open-source dataset or instructions for how to construct the dataset).
            \item We recognize that reproducibility may be tricky in some cases, in which case authors are welcome to describe the particular way they provide for reproducibility. In the case of closed-source models, it may be that access to the model is limited in some way (e.g., to registered users), but it should be possible for other researchers to have some path to reproducing or verifying the results.
        \end{enumerate}
    \end{itemize}

\item {\bf Open access to data and code}
    \item[] Question: Does the paper provide open access to the data and code, with sufficient instructions to faithfully reproduce the main experimental results, as described in supplemental material?
    \item[] Answer: \answerNo{}.
    \item[] Justification: The paper uses public datasets and released model checkpoints and provides protocol details, but an anonymized code release with exact reproduction commands is not included in the current submission.
    \item[] Guidelines:
    \begin{itemize}
        \item The answer \answerNA{} means that paper does not include experiments requiring code.
        \item Please see the NeurIPS code and data submission guidelines (\url{https://neurips.cc/public/guides/CodeSubmissionPolicy}) for more details.
        \item While we encourage the release of code and data, we understand that this might not be possible, so \answerNo{} is an acceptable answer. Papers cannot be rejected simply for not including code, unless this is central to the contribution (e.g., for a new open-source benchmark).
        \item The instructions should contain the exact command and environment needed to run to reproduce the results. See the NeurIPS code and data submission guidelines (\url{https://neurips.cc/public/guides/CodeSubmissionPolicy}) for more details.
        \item The authors should provide instructions on data access and preparation, including how to access the raw data, preprocessed data, intermediate data, and generated data, etc.
        \item The authors should provide scripts to reproduce all experimental results for the new proposed method and baselines. If only a subset of experiments are reproducible, they should state which ones are omitted from the script and why.
        \item At submission time, to preserve anonymity, the authors should release anonymized versions (if applicable).
        \item Providing as much information as possible in supplemental material (appended to the paper) is recommended, but including URLs to data and code is permitted.
    \end{itemize}

\item {\bf Experimental setting/details}
    \item[] Question: Does the paper specify all the training and test details (e.g., data splits, hyperparameters, how they were chosen, type of optimizer) necessary to understand the results?
    \item[] Answer: \answerYes{}.
    \item[] Justification: Section~\ref{sec:experimental_setup} gives the evaluation protocol and metrics; Appendix~\ref{app:protocol} gives image-level splitting, cache construction, selector hyperparameters, early stopping, and threshold selection.
    \item[] Guidelines:
    \begin{itemize}
        \item The answer \answerNA{} means that the paper does not include experiments.
        \item The experimental setting should be presented in the core of the paper to a level of detail that is necessary to appreciate the results and make sense of them.
        \item The full details can be provided either with the code, in appendix, or as supplemental material.
    \end{itemize}

\item {\bf Experiment statistical significance}
    \item[] Question: Does the paper report error bars suitably and correctly defined or other appropriate information about the statistical significance of the experiments?
    \item[] Answer: \answerNo{}.
    \item[] Justification: The paper reports deterministic held-out evaluations over fixed image/prompt splits and gives per-dataset results, safety rates, and lower-tail diagnostics, but it does not report confidence intervals or repeated-run error bars.
    \item[] Guidelines:
    \begin{itemize}
        \item The answer \answerNA{} means that the paper does not include experiments.
        \item The authors should answer \answerYes{} if the results are accompanied by error bars, confidence intervals, or statistical significance tests, at least for the experiments that support the main claims of the paper.
        \item The factors of variability that the error bars are capturing should be clearly stated (for example, train/test split, initialization, random drawing of some parameter, or overall run with given experimental conditions).
        \item The method for calculating the error bars should be explained (closed form formula, call to a library function, bootstrap, etc.)
        \item The assumptions made should be given (e.g., Normally distributed errors).
        \item It should be clear whether the error bar is the standard deviation or the standard error of the mean.
        \item It is OK to report 1-sigma error bars, but one should state it. The authors should preferably report a 2-sigma error bar than state that they have a 96\% CI, if the hypothesis of Normality of errors is not verified.
        \item For asymmetric distributions, the authors should be careful not to show in tables or figures symmetric error bars that would yield results that are out of range (e.g., negative error rates).
        \item If error bars are reported in tables or plots, the authors should explain in the text how they were calculated and reference the corresponding figures or tables in the text.
    \end{itemize}

\item {\bf Experiments compute resources}
    \item[] Question: For each experiment, does the paper provide sufficient information on the computer resources (type of compute workers, memory, time of execution) needed to reproduce the experiments?
    \item[] Answer: \answerNo{}.
    \item[] Justification: The paper reports online latency and throughput in Section~\ref{sec:experiments} and Appendix~\ref{app:plots}, but it does not provide full hardware, memory, or total training-compute requirements for every experiment.
    \item[] Guidelines:
    \begin{itemize}
        \item The answer \answerNA{} means that the paper does not include experiments.
        \item The paper should indicate the type of compute workers CPU or GPU, internal cluster, or cloud provider, including relevant memory and storage.
        \item The paper should provide the amount of compute required for each of the individual experimental runs as well as estimate the total compute. 
        \item The paper should disclose whether the full research project required more compute than the experiments reported in the paper (e.g., preliminary or failed experiments that didn't make it into the paper). 
    \end{itemize}
    
\item {\bf Code of ethics}
    \item[] Question: Does the research conducted in the paper conform, in every respect, with the NeurIPS Code of Ethics \url{https://neurips.cc/public/EthicsGuidelines}?
    \item[] Answer: \answerYes{}.
    \item[] Justification: The work evaluates segmentation methods on established public benchmarks and released checkpoints, does not involve human-subject experiments or private data collection, and is presented anonymously.
    \item[] Guidelines:
    \begin{itemize}
        \item The answer \answerNA{} means that the authors have not reviewed the NeurIPS Code of Ethics.
        \item If the authors answer \answerNo, they should explain the special circumstances that require a deviation from the Code of Ethics.
        \item The authors should make sure to preserve anonymity (e.g., if there is a special consideration due to laws or regulations in their jurisdiction).
    \end{itemize}

\item {\bf Broader impacts}
    \item[] Question: Does the paper discuss both potential positive societal impacts and negative societal impacts of the work performed?
    \item[] Answer: \answerNo{}.
    \item[] Justification: The paper focuses on a technical output-selection bottleneck in frozen segmentation models and does not include a separate broader-impact discussion. Potential impacts are indirect and depend on downstream segmentation deployments.
    \item[] Guidelines:
    \begin{itemize}
        \item The answer \answerNA{} means that there is no societal impact of the work performed.
        \item If the authors answer \answerNA{} or \answerNo, they should explain why their work has no societal impact or why the paper does not address societal impact.
        \item Examples of negative societal impacts include potential malicious or unintended uses (e.g., disinformation, generating fake profiles, surveillance), fairness considerations (e.g., deployment of technologies that could make decisions that unfairly impact specific groups), privacy considerations, and security considerations.
        \item The conference expects that many papers will be foundational research and not tied to particular applications, let alone deployments. However, if there is a direct path to any negative applications, the authors should point it out. For example, it is legitimate to point out that an improvement in the quality of generative models could be used to generate Deepfakes for disinformation. On the other hand, it is not needed to point out that a generic algorithm for optimizing neural networks could enable people to train models that generate Deepfakes faster.
        \item The authors should consider possible harms that could arise when the technology is being used as intended and functioning correctly, harms that could arise when the technology is being used as intended but gives incorrect results, and harms following from (intentional or unintentional) misuse of the technology.
        \item If there are negative societal impacts, the authors could also discuss possible mitigation strategies (e.g., gated release of models, providing defenses in addition to attacks, mechanisms for monitoring misuse, mechanisms to monitor how a system learns from feedback over time, improving the efficiency and accessibility of ML).
    \end{itemize}
    
\item {\bf Safeguards}
    \item[] Question: Does the paper describe safeguards that have been put in place for responsible release of data or models that have a high risk for misuse (e.g., pre-trained language models, image generators, or scraped datasets)?
    \item[] Answer: \answerNA{}.
    \item[] Justification: The paper does not introduce or release a high-risk generative model, scraped dataset, or new pretrained foundation model; it studies selectors over existing frozen segmentation outputs.
    \item[] Guidelines:
    \begin{itemize}
        \item The answer \answerNA{} means that the paper poses no such risks.
        \item Released models that have a high risk for misuse or dual-use should be released with necessary safeguards to allow for controlled use of the model, for example by requiring that users adhere to usage guidelines or restrictions to access the model or implementing safety filters. 
        \item Datasets that have been scraped from the Internet could pose safety risks. The authors should describe how they avoided releasing unsafe images.
        \item We recognize that providing effective safeguards is challenging, and many papers do not require this, but we encourage authors to take this into account and make a best faith effort.
    \end{itemize}

\item {\bf Licenses for existing assets}
    \item[] Question: Are the creators or original owners of assets (e.g., code, data, models), used in the paper, properly credited and are the license and terms of use explicitly mentioned and properly respected?
    \item[] Answer: \answerNo{}.
    \item[] Justification: The paper cites the datasets and released models used in the experiments, but it does not explicitly list every asset's license or terms of use in the current manuscript.
    \item[] Guidelines:
    \begin{itemize}
        \item The answer \answerNA{} means that the paper does not use existing assets.
        \item The authors should cite the original paper that produced the code package or dataset.
        \item The authors should state which version of the asset is used and, if possible, include a URL.
        \item The name of the license (e.g., CC-BY 4.0) should be included for each asset.
        \item For scraped data from a particular source (e.g., website), the copyright and terms of service of that source should be provided.
        \item If assets are released, the license, copyright information, and terms of use in the package should be provided. For popular datasets, \url{paperswithcode.com/datasets} has curated licenses for some datasets. Their licensing guide can help determine the license of a dataset.
        \item For existing datasets that are re-packaged, both the original license and the license of the derived asset (if it has changed) should be provided.
        \item If this information is not available online, the authors are encouraged to reach out to the asset's creators.
    \end{itemize}

\item {\bf New assets}
    \item[] Question: Are new assets introduced in the paper well documented and is the documentation provided alongside the assets?
    \item[] Answer: \answerNA{}.
    \item[] Justification: The submission does not introduce a new dataset, benchmark, or released pretrained model asset.
    \item[] Guidelines:
    \begin{itemize}
        \item The answer \answerNA{} means that the paper does not release new assets.
        \item Researchers should communicate the details of the dataset\slash code\slash model as part of their submissions via structured templates. This includes details about training, license, limitations, etc. 
        \item The paper should discuss whether and how consent was obtained from people whose asset is used.
        \item At submission time, remember to anonymize your assets (if applicable). You can either create an anonymized URL or include an anonymized zip file.
    \end{itemize}

\item {\bf Crowdsourcing and research with human subjects}
    \item[] Question: For crowdsourcing experiments and research with human subjects, does the paper include the full text of instructions given to participants and screenshots, if applicable, as well as details about compensation (if any)? 
    \item[] Answer: \answerNA{}.
    \item[] Justification: The work does not involve crowdsourcing or human-subject experiments.
    \item[] Guidelines:
    \begin{itemize}
        \item The answer \answerNA{} means that the paper does not involve crowdsourcing nor research with human subjects.
        \item Including this information in the supplemental material is fine, but if the main contribution of the paper involves human subjects, then as much detail as possible should be included in the main paper. 
        \item According to the NeurIPS Code of Ethics, workers involved in data collection, curation, or other labor should be paid at least the minimum wage in the country of the data collector. 
    \end{itemize}

\item {\bf Institutional review board (IRB) approvals or equivalent for research with human subjects}
    \item[] Question: Does the paper describe potential risks incurred by study participants, whether such risks were disclosed to the subjects, and whether Institutional Review Board (IRB) approvals (or an equivalent approval/review based on the requirements of your country or institution) were obtained?
    \item[] Answer: \answerNA{}.
    \item[] Justification: The work does not involve crowdsourcing or human-subject experiments, so IRB approval is not applicable.
    \item[] Guidelines:
    \begin{itemize}
        \item The answer \answerNA{} means that the paper does not involve crowdsourcing nor research with human subjects.
        \item Depending on the country in which research is conducted, IRB approval (or equivalent) may be required for any human subjects research. If you obtained IRB approval, you should clearly state this in the paper. 
        \item We recognize that the procedures for this may vary significantly between institutions and locations, and we expect authors to adhere to the NeurIPS Code of Ethics and the guidelines for their institution. 
        \item For initial submissions, do not include any information that would break anonymity (if applicable), such as the institution conducting the review.
    \end{itemize}

\item {\bf Declaration of LLM usage}
    \item[] Question: Does the paper describe the usage of LLMs if it is an important, original, or non-standard component of the core methods in this research? Note that if the LLM is used only for writing, editing, or formatting purposes and does \emph{not} impact the core methodology, scientific rigor, or originality of the research, declaration is not required.
    \item[] Answer: \answerNA{}.
    \item[] Justification: LLMs are not an important, original, or non-standard component of the core method or experiments.
    \item[] Guidelines:
    \begin{itemize}
        \item The answer \answerNA{} means that the core method development in this research does not involve LLMs as any important, original, or non-standard components.
        \item Please refer to our LLM policy in the NeurIPS handbook for what should or should not be described.
    \end{itemize}

\end{enumerate}

\clearpage
\raggedbottom
\appendix

\setcounter{figure}{0}
\setcounter{table}{0}
\renewcommand{\thefigure}{\thesection.\arabic{figure}}
\renewcommand{\thetable}{\thesection.\arabic{table}}

\section{Appendix Overview and Organization}
\label{app:overview}

The appendix follows the claim structure of the main paper. It first
records the experimental protocol, then gives the full result tables and
representative diagnostics behind the main claims, and finally collects
the controlled mechanism study, formal proofs, and scope notes.

\begin{appendixroadmap}
\textbf{Reading guide.} The appendix is grouped into four blocks:
setup, extended results, mechanistic analysis, and formal support.
\begin{itemize}[leftmargin=1.2em,itemsep=1pt,topsep=2pt]
    \item \textbf{Setup and reproducibility:}
    Appendix~\ref{app:protocol} describes the frozen-output protocol,
    splitting rule, SAM~3 metric convention, and implementation details
    used by all selector runs.
    \item \textbf{Extended empirical results:}
    Appendix~\ref{app:full_results} reports the full SAM~3,
    DETR-family, and LoRA tables with training/evaluation counts, and
    Appendix~\ref{app:ablations} gives compact ablation checks.
    Appendix~\ref{app:plots} organizes the representative query-use,
    recovery, transfer, and qualitative diagnostics into local
    subsections.
    \item \textbf{Mechanistic analysis:}
    Appendix~\ref{app:controlled} gives the controlled TinyDETR study:
    specialization emerges under matching, routing errors concentrate at
    ambiguous distribution boundaries, and query features carry the
    recoverable signal.
    \item \textbf{Formal support and scope:}
    Appendix~\ref{app:proofs} contains deferred proofs and identifiability
    arguments. Appendix~\ref{app:limits} summarizes limitations, future
    directions, and the notation used in the paper.
\end{itemize}
\end{appendixroadmap}

For readability, figures and tables are numbered within each appendix
section.

\section{Protocol and Reproducibility}
\label{app:protocol}

\paragraph{Frozen-output protocol.}
For every backbone and dataset, we first cache the frozen model outputs:
query masks, confidence/class signals, and any exposed query-level
auxiliaries. The selector is trained only on deterministic features
derived from this cache. No selector run performs an additional backbone
forward pass at inference time, and no experiment updates the
segmentation backbone unless it is explicitly marked as a LoRA run.

\paragraph{Benchmarks.}
The universal segmenter runs use ADE20k~\citep{zhou2019ade20k},
Cityscapes~\citep{cordts2016cityscapes}, and COCO/COCO-Stuff
annotations~\citep{lin2014coco,caesar2018cocostuff}. The SAM~3 promptable
suite uses ADE-847~\citep{zhou2019ade20k}, Pascal
VOC~\citep{everingham2015pascalvoc}, Cityscapes, CamVid~\citep{brostow2009camvid},
LoveDA~\citep{wang2021loveda}, Pascal Context-59~\citep{mottaghi2014rolecontext},
ImageNet-S50~\citep{gao2022large}, and COCO/COCO-Stuff.

\paragraph{Splits.}
All selector splits are made at the image level. If multiple prompts or
class instances originate from the same image, they remain in the same
split. This prevents prompt leakage and makes the reported train/eval
counts in Appendix~\ref{app:full_results} the active prompt counts after
filtering, not raw image counts.

\paragraph{Targets and filtering.}
For the universal segmenter runs, the target is the class-matched query
that maximizes held-out prompt IoU under the frozen mask head. For the
corrected SAM~3 runs, the target is the best single action among no-op
and residual-add actions; the learned margin then decides whether to keep
the default output at inference. If no residual action exceeds the
minimum-gain setting, no-op is the target; these prompts remain in the
SAM~3 train and calibration splits. Evaluation is always reported over
the held-out prompt set. For the largest SAM~3 datasets, the training
cap uses metadata-only stratified prompt sampling before any IoU-derived
targets are computed; the eval split remains full.

\paragraph{Optimization.}
The main selector is the bilinear residual MLP described in
Section~\ref{sec:method}. Within a feature schema, hyperparameters are
fixed across datasets:
$\lambda_{\mathrm{soft}}=0.75$, $T=0.10$,
$\lambda_{\mathrm{rank}}=0.20$, and
$\varepsilon_{\min}=10^{-3}$. Early stopping uses validation recovered
gain. For SAM~3, the final no-op margin is calibrated on the held-out
calibration split. For each prompt, we compute the score margin between
the best residual action and the keep-baseline action; a candidate
threshold $\tau$ executes the residual action only when that margin exceeds
$\tau$. We sweep $\tau$ over calibration margins, measure the resulting
prompt-IoU gain and harmful-action rate, and choose the threshold with the
largest validation gain subject to the fixed harmful-action budget
($0.25\%$ in the main runs). If no threshold satisfies the budget, the
policy keeps the baseline output. The class-aware DETR-family schema
yields a selector of approximately 1.6M parameters; the richer SAM~3
residual schema yields roughly 9.6M--14.5M parameters depending on dataset
vocabulary features.

\subsection{Why SAM~3 Uses Class-Macro Prompt IoU}
\label{app:sam3_metric}

SAM~3 and the DETR-family universal segmenters expose different output
contracts. Mask2Former, MaskDINO, and OneFormer produce class-aware query
scores over a fixed label set. Their standard inference rule combines
query masks with class probabilities and then applies an $\arg\max_c$
over the closed vocabulary, so inter-class competition is part of the
model output: each pixel is resolved to one semantic class.

SAM~3 does not perform this resolution. It is prompted with a single
concept at a time and answers whether and where instances of that
concept appear. If the same image is prompted with two related concepts,
for example \emph{person} and \emph{rider}, the two forward passes can
produce masks that overlap arbitrarily. The model has not compared those
concepts, and nothing in its output forces masks from different prompts
to form a semantic partition of the image.

Pixel-pooled semantic mIoU assumes such a partition. Applying it to SAM~3
therefore requires an additional rule for resolving overlapping
prompt-specific masks, such as highest score wins, first prompt wins, or
presence-score weighting. That rule is a post-hoc design choice rather
than part of SAM~3. A pixel-pooled semantic mIoU number would consequently mix
two effects: the quality of SAM~3's prompt-conditioned query pool and the
quality of the external tie-breaker.

Prompt IoU avoids this extra assumption. Each prompt-conditioned forward
pass is evaluated against the corresponding binary ground-truth mask.
For paper tables, we macro-average those prompt IoUs by class: first
average all rows for a prompt label, then average the prompt-label means.
This matches the signal \ours{} acts on for SAM~3---intra-concept query
selection inside a single prompt, not inter-class competition across
prompts---while preventing frequent labels from dominating broad
datasets. We therefore report SAM~3 selector results with class-macro
prompt IoU in the paper tables; row-mean prompt IoU is kept only for
diagnostic plots, oracle-anatomy checks, and the released CSVs.

\section{Full Result Tables}
\label{app:full_results}

\subsection{SAM~3 Promptable Results}
\label{app:sam3_full_results}

\begin{table}[H]
\centering
\tiny
\setlength{\tabcolsep}{2.0pt}
\renewcommand{\arraystretch}{1.10}
\caption{\textbf{SAM~3 no-op query routing.}
All quality numbers are class-macro prompt IoU from the baseline. The
selector scores one residual query action plus an explicit no-op action,
then applies a validation-calibrated no-op margin. The final row
macro-averages datasets.}
\label{tab:app_sam3_results}
\resizebox{\linewidth}{!}{%
\begin{tabular}{@{}l l r r r r r r r r r r@{}}
\specialrule{0.1em}{0pt}{0pt}
\rowcolor{headerblue}
\textbf{Dataset} & \textbf{Run} & \textbf{Train (+val/cal)}
& \textbf{Eval} & \textbf{Default} & \textbf{\ours{}}
& \textbf{$\Delta$ pp} & \textbf{Gap pp} & \textbf{Rec.}
& \textbf{Action} & \textbf{Keep def.} & \textbf{Harmful} \\
\specialrule{0em}{0pt}{0pt}
\midrule
ADE-847 & No-op & 16{,}070 (+3{,}930) & 28{,}940 & 0.3555 & \textbf{0.4428} & \textbf{+8.72} & +26.82 & 32.5\% & 13.4\% & 86.6\% & 0.08\% \\
\rowcolor{tableblue}
VOC & No-op & 1{,}724 (+446) & 2{,}148 & 0.7589 & \textbf{0.7868} & \textbf{+2.79} & +5.21 & 53.5\% & 6.1\% & 93.9\% & 0.51\% \\
Cityscapes & No-op & 27{,}762 (+6{,}961) & 6{,}002 & 0.5630 & \textbf{0.6108} & \textbf{+4.79} & +13.00 & 36.8\% & 11.4\% & 88.6\% & 0.18\% \\
\rowcolor{tableblue}
CamVid & No-op & 4{,}519 (+1{,}138) & 1{,}560 & 0.3411 & \textbf{0.4907} & \textbf{+14.96} & +28.95 & 51.7\% & 37.0\% & 63.0\% & 0.32\% \\
LoveDA & No-op & 7{,}722 (+1{,}951) & 6{,}948 & 0.2756 & \textbf{0.3904} & \textbf{+11.48} & +29.15 & 39.4\% & 38.2\% & 61.8\% & 0.75\% \\
\rowcolor{tableblue}
PC-59 & No-op & 20{,}035 (+4{,}965) & 25{,}974 & 0.6084 & \textbf{0.6989} & \textbf{+9.04} & +18.93 & 47.8\% & 19.2\% & 80.8\% & 0.36\% \\
ImageNet-S50 & No-op & 400 (+100) & 759 & 0.7697 & \textbf{0.8805} & \textbf{+11.09} & +14.16 & 78.3\% & 12.8\% & 87.2\% & 0.00\% \\
\rowcolor{tableblue}
COCO & No-op & 20{,}060 (+4{,}940) & 34{,}727 & 0.5102 & \textbf{0.6345} & \textbf{+12.43} & +22.39 & 55.5\% & 44.1\% & 55.9\% & 0.22\% \\
\midrule
\rowcolor{sectionblue}
\textbf{Average} & -- & -- & -- & 0.5228 & \textbf{0.6169} & \textbf{+9.41} & +19.83 & 49.4\% & 22.8\% & 77.2\% & 0.30\% \\
\specialrule{0.1em}{0pt}{0pt}
\end{tabular}
}
\vspace{-0.25em}
\end{table}

Table~\ref{tab:app_sam3_results} expands the main SAM~3 table with the
full class-macro prompt-IoU selector metrics: train/validation-calibration/eval
prompt counts, default IoU, \ours{} IoU, top-1 action headroom, recovery,
action rate, no-op rate, and harmful-action rate. All rows use the
baseline cache and the audited residual no-op selector. The
main pattern is stable across dataset regimes: every promptable benchmark
improves under class-macro prompt IoU, but the magnitude follows headroom.
CamVid has the largest absolute gain ($+14.96$ pp), followed by
COCO, LoveDA, ImageNet-S50, PC-59, and ADE-847; Cityscapes and VOC gain less because
their default prompt masks are already stronger or the calibrated gate
acts sparsely. Recovery ranges from $32.5\%$ on ADE-847 to $78.3\%$
on ImageNet-S50, so the selector exposes a substantial fraction of the
available single-action headroom without relying on one dataset's query
layout.

The broad-vocabulary rows are important for interpretation. ADE-847 and
COCO both contain many more visual concepts than the smaller driving or
object benchmarks, yet the selector still improves class-macro prompt IoU by
$+8.72$ points on ADE-847 and $+12.43$ points on COCO. This supports the paper's claim
that SAM~3's residual gap is not only a narrow-domain artifact: even when
the prompt space is diverse, the frozen query pool often contains a
better mask than the default selection exposes.

\subsection{SAM~3 Oracle Anatomy}
\label{app:sam3_oracle_anatomy}

\begin{table}[H]
\centering
\scriptsize
\setlength{\tabcolsep}{3.0pt}
\renewcommand{\arraystretch}{1.10}
\caption{\textbf{SAM~3 oracle anatomy.}
All values are prompt-level IoU. These diagnostics use ground truth to
measure frozen SAM~3 candidate headroom; they are not learned selector
results.}
\label{tab:app_sam3_oracle_anatomy}
\begin{tabular}{@{}l r r r r r r r@{}}
\specialrule{0.1em}{0pt}{0pt}
\rowcolor{headerblue}
\textbf{Dataset / setting} & \textbf{Prompts} & \textbf{Default}
& \textbf{Single} & \textbf{Greedy} & \textbf{Single gap}
& \textbf{Greedy gap} & \textbf{Mean K} \\
\midrule
VOC & 2{,}148 & 0.7912 & 0.7625 & 0.8628 & -2.87 & +7.16 & 7.03 \\
\rowcolor{tableblue}
Cityscapes & 6{,}005 & 0.6253 & 0.5989 & 0.8080 & -2.64 & +18.28 & 14.43 \\
LoveDA & 7{,}326 & 0.0520 & 0.2175 & 0.3227 & +16.56 & +27.08 & 19.40 \\
\rowcolor{tableblue}
CamVid & 1{,}562 & 0.4824 & 0.6059 & 0.7564 & +12.36 & +27.40 & 10.07 \\
ADE-847 & 29{,}001 & 0.5296 & 0.6483 & 0.7876 & +11.87 & +25.80 & 11.97 \\
\rowcolor{tableblue}
ADE-847, baseline setting & 29{,}001 & 0.4818 & 0.6456 & 0.7849 & +16.38 & +30.31 & 11.88 \\
\specialrule{0.1em}{0pt}{0pt}
\end{tabular}
\vspace{-0.8em}
\end{table}

Table~\ref{tab:app_sam3_oracle_anatomy} reports the pure SAM~3 oracle
anatomy runs, again using prompt-level IoU diagnostics. These are not learned
selector results: they measure how much headroom exists if one can choose
queries using ground truth. The single-query oracle can be below the
default on saturated VOC and Cityscapes runs because the default mask is
already a multi-query prediction; the greedy oracle shows the larger
multi-query ceiling. The ADE-847 rows compare the cached and baseline
settings, and both show substantial prompt-level
headroom.

\subsection{Universal Segmenter Results}
\label{app:universal_full_results}

\begin{table}[H]
\centering
\scriptsize
\setlength{\tabcolsep}{2.4pt}
\renewcommand{\arraystretch}{1.15}
\caption{\textbf{Frozen DETR-family selectors.}
This appendix table reports the per-prompt routing diagnostic for
Mask2Former, MaskDINO, and OneFormer. The main text reports the
paper-comparable DS mIoU in Table~\ref{tab:universal_ds_main}. Average
rows macro-average the displayed datasets for each model.}
\label{tab:app_detr_results}
\begin{tabular}{@{}l l r r r r r r r r@{}}
\specialrule{0.1em}{0pt}{0pt}
\rowcolor{headerblue}
\textbf{Model} & \textbf{Dataset} & \textbf{Queries} & \textbf{Train}
& \textbf{Eval} & \textbf{Default} & \textbf{\ours{}}
& \textbf{$\Delta$ pp} & \textbf{Top-1 oracle} & \textbf{Recovery} \\
\specialrule{0em}{0pt}{0pt}
\midrule
\rowcolor{sectionblue}
\multicolumn{10}{@{}l}{\textbf{Mask2Former}} \\
Mask2Former & ADE20k & 100 & 47{,}238 & 9{,}103 & 0.6304 & \textbf{0.7295} & \textbf{+9.92} & 0.7689 & 71.6\% \\
\rowcolor{tableblue}
Mask2Former & Cityscapes & 100 & 6{,}612 & 2{,}519 & 0.7328 & \textbf{0.7560} & \textbf{+2.33} & 0.7749 & 55.3\% \\
Mask2Former & COCO & 200 & 85{,}601 & 23{,}575 & 0.6440 & \textbf{0.7379} & \textbf{+9.39} & 0.7765 & 70.9\% \\
\rowcolor{sectionblue}
\textbf{Mask2Former} & \textbf{Average} & -- & -- & -- & 0.6691 & \textbf{0.7411} & \textbf{+7.21} & 0.7734 & 65.9\% \\
\midrule
\rowcolor{sectionblue}
\multicolumn{10}{@{}l}{\textbf{MaskDINO}} \\
MaskDINO & ADE20k & 100 & 47{,}851 & 8{,}081 & 0.6187 & \textbf{0.7041} & \textbf{+8.54} & 0.7414 & 69.6\% \\
\rowcolor{tableblue}
MaskDINO & Cityscapes & 100 & 8{,}433 & 2{,}996 & 0.7349 & \textbf{0.7526} & \textbf{+1.77} & 0.7656 & 57.8\% \\
MaskDINO & COCO & 300 & 81{,}932 & 20{,}837 & 0.6297 & \textbf{0.7176} & \textbf{+8.79} & 0.7585 & 68.2\% \\
\rowcolor{sectionblue}
\textbf{MaskDINO} & \textbf{Average} & -- & -- & -- & 0.6611 & \textbf{0.7248} & \textbf{+6.37} & 0.7552 & 65.2\% \\
\midrule
\rowcolor{sectionblue}
\multicolumn{10}{@{}l}{\textbf{OneFormer}} \\
OneFormer & ADE20k & 250 & 68{,}011 & 10{,}724 & 0.6497 & \textbf{0.7399} & \textbf{+9.02} & 0.7807 & 68.9\% \\
\rowcolor{tableblue}
OneFormer & Cityscapes & 250 & 9{,}032 & 3{,}169 & 0.7818 & \textbf{0.7839} & \textbf{+0.21} & 0.8180 & 5.8\% \\
OneFormer & COCO & 150 & 85{,}014 & 22{,}815 & 0.6258 & \textbf{0.7285} & \textbf{+10.27} & 0.7643 & 74.2\% \\
\rowcolor{sectionblue}
\textbf{OneFormer} & \textbf{Average} & -- & -- & -- & 0.6858 & \textbf{0.7508} & \textbf{+6.50} & 0.7877 & 49.6\% \\
\specialrule{0.1em}{0pt}{0pt}
\end{tabular}
\vspace{-0.8em}
\end{table}

Table~\ref{tab:app_detr_results} gives the per-prompt routing diagnostics
behind the universal-segmenter results in the main paper. The largest
gains occur on ADE20k and COCO, where broad scene/object vocabularies
make query competition more consequential; Cityscapes is more saturated
and leaves less recoverable headroom.

\subsection{LoRA Adaptation Control}
\label{app:lora_control}

\begin{table}[H]
\centering
\scriptsize
\setlength{\tabcolsep}{4.8pt}
\renewcommand{\arraystretch}{1.15}
\caption{\textbf{LoRA as a lightweight adaptation baseline and invariance check.}
Each row fixes the backbone, dataset, and split, then compares LoRA
alone with \ours{} trained on frozen outputs and on LoRA-adapted outputs.
LoRA alone is usually negative, while the \ours{} gain is nearly
unchanged after LoRA. Average rows macro-average datasets within each
backbone.}
\label{tab:app_lora_results}
\begin{tabular}{@{}l l r r r r@{}}
\specialrule{0.1em}{0pt}{0pt}
\rowcolor{headerblue}
\textbf{Backbone} & \textbf{Dataset} & \textbf{LoRA $\Delta$}
& \textbf{\ours{} $\Delta$ frozen}
& \textbf{\ours{} $\Delta$ after LoRA}
& \textbf{Change} \\
\specialrule{0em}{0pt}{0pt}
\midrule
\rowcolor{sectionblue}
\multicolumn{6}{@{}l}{\textbf{Mask2Former}} \\
Mask2Former & ADE20k & $-0.52$ & \textbf{$+9.92$} & \textbf{$+9.85$} & $-0.07$ \\
\rowcolor{tableblue}
Mask2Former & Cityscapes & $+0.24$ & $+2.33$ & $+2.43$ & $+0.10$ \\
Mask2Former & COCO & $-1.44$ & \textbf{$+9.39$} & \textbf{$+10.48$} & $+1.09$ \\
\rowcolor{sectionblue}
\textbf{Mask2Former} & \textbf{Average} & $\mathbf{-0.57}$ & \textbf{$+7.21$} & \textbf{$+7.59$} & $\mathbf{+0.37}$ \\
\midrule
\rowcolor{sectionblue}
\multicolumn{6}{@{}l}{\textbf{MaskDINO}} \\
MaskDINO & ADE20k & $-1.27$ & \textbf{$+8.79$} & \textbf{$+8.96$} & $+0.17$ \\
\rowcolor{tableblue}
MaskDINO & Cityscapes & $-1.05$ & $+1.80$ & $+1.98$ & $+0.18$ \\
MaskDINO & COCO & $-2.24$ & \textbf{$+8.87$} & \textbf{$+10.67$} & $+1.80$ \\
\rowcolor{sectionblue}
\textbf{MaskDINO} & \textbf{Average} & $\mathbf{-1.52}$ & \textbf{$+6.49$} & \textbf{$+7.20$} & $\mathbf{+0.71}$ \\
\midrule
\rowcolor{tableblue}
\textbf{Average} & -- & $-1.05$ & \textbf{$+6.85$} & \textbf{$+7.40$} & $+0.54$ \\
\specialrule{0.1em}{0pt}{0pt}
\end{tabular}
\vspace{-0.8em}
\end{table}

Table~\ref{tab:app_lora_results} tests whether the observed gains are
explained by lightweight backbone adaptation. LoRA alone is often
negative, while the frozen-output selector remains positive before and
after LoRA, indicating that the main effect is output selection over
the candidate pool rather than generic fine-tuning.

\section{Ablation and Robustness Checks}
\label{app:ablations}

\begin{table}[H]
\centering
\scriptsize
\setlength{\tabcolsep}{2.1pt}
\renewcommand{\arraystretch}{1.08}
\caption{\textbf{SAM~3 selector ablations across promptable datasets.}
Each cell reports class-macro prompt IoU with gain over the baseline in
parentheses. The full safe variant uses the bilinear residual router,
query+global+class features, an explicit no-op action, and a
held-out-calibrated no-op margin.}
\label{tab:app_ablation_results}
\resizebox{\linewidth}{!}{%
\begin{tabular}{@{}l r r r r r r r r r@{}}
\specialrule{0.1em}{0pt}{0pt}
\rowcolor{headerblue}
\textbf{Variant} & \textbf{ADE} & \textbf{CamVid} & \textbf{City.}
& \textbf{COCO} & \textbf{ImageNet} & \textbf{LoveDA}
& \textbf{PC-59} & \textbf{VOC} & \textbf{Avg.} \\
\specialrule{0em}{0pt}{0pt}
\midrule
\textbf{Full safe} & \textbf{0.443 (+8.7)} & \textbf{0.491 (+15.0)} & \textbf{0.611 (+4.8)} & \textbf{0.634 (+12.4)} & \textbf{0.881 (+11.1)} & \textbf{0.390 (+11.5)} & \textbf{0.699 (+9.0)} & \textbf{0.787 (+2.8)} & \textbf{0.617 (+9.4)} \\
\rowcolor{tableblue}
No calibration & 0.460 (+10.5) & 0.507 (+16.6) & 0.617 (+5.4) & 0.642 (+13.2) & 0.883 (+11.3) & 0.405 (+13.0) & 0.710 (+10.2) & 0.787 (+2.8) & 0.626 (+10.4) \\
No no-op & 0.450 (+9.4) & 0.491 (+15.0) & 0.616 (+5.3) & 0.640 (+13.0) & 0.881 (+11.1) & 0.409 (+13.4) & 0.710 (+10.2) & 0.787 (+2.8) & 0.623 (+10.0) \\
\rowcolor{tableblue}
MLP head & 0.422 (+6.6) & 0.467 (+12.6) & 0.587 (+2.4) & 0.618 (+10.7) & 0.881 (+11.1) & 0.369 (+9.3) & 0.702 (+9.3) & 0.785 (+2.6) & 0.604 (+8.1) \\
Linear head & 0.428 (+7.2) & 0.346 (+0.5) & 0.563 (+0.0) & 0.638 (+12.8) & 0.770 (+0.0) & 0.369 (+9.3) & 0.697 (+8.9) & 0.760 (+0.1) & 0.571 (+4.8) \\
\rowcolor{tableblue}
Query+global & 0.445 (+9.0) & 0.436 (+9.5) & 0.566 (+0.3) & 0.636 (+12.6) & 0.880 (+11.0) & 0.307 (+3.2) & 0.702 (+9.3) & 0.782 (+2.3) & 0.594 (+7.2) \\
Global only & 0.421 (+6.6) & 0.361 (+2.0) & 0.563 (+0.0) & 0.606 (+9.6) & 0.866 (+9.7) & 0.337 (+6.2) & 0.668 (+6.0) & 0.774 (+1.5) & 0.575 (+5.2) \\
\specialrule{0.1em}{0pt}{0pt}
\end{tabular}
}
\vspace{-0.4em}
\end{table}

\begin{table}[H]
\centering
\scriptsize
\setlength{\tabcolsep}{5.0pt}
\renewcommand{\arraystretch}{1.08}
\caption{\textbf{Macro-averaged safety and recovery diagnostics.}
No calibration keeps the no-op action but sets the margin to zero. No
no-op removes the baseline action entirely, so the router must select a
residual query whenever one is available. Tail $\Delta$ is the mean
per-prompt gain over the worst quartile of held-out prompts.}
\label{tab:app_sam3_ablation_safety}
\resizebox{\linewidth}{!}{%
\begin{tabular}{@{}l r r r r r r r r@{}}
\specialrule{0.1em}{0pt}{0pt}
\rowcolor{headerblue}
\textbf{Variant} & \textbf{Avg. IoU} & \textbf{$\Delta$ pp}
& \textbf{Tail $\Delta$} & \textbf{Gap pp} & \textbf{Rec.} & \textbf{Keep def.}
& \textbf{Action} & \textbf{Harmful} \\
\midrule
\textbf{Full safe} & \textbf{0.617} & \textbf{+9.4} & \textbf{$-0.1$} & +19.8 & \textbf{49.4\%} & \textbf{77.2\%} & 22.8\% & \textbf{0.30\%} \\
\rowcolor{tableblue}
No calibration & 0.626 & +10.4 & $-2.4$ & +19.8 & 53.6\% & 15.0\% & 85.0\% & 15.50\% \\
No no-op & 0.623 & +10.0 & $-2.8$ & +19.8 & 52.2\% & 0.0\% & 100.0\% & 21.33\% \\
\rowcolor{tableblue}
MLP head & 0.604 & +8.1 & +0.0 & +19.8 & 43.0\% & 82.6\% & 17.4\% & 0.07\% \\
Linear head & 0.571 & +4.8 & +0.0 & +19.8 & 20.7\% & 84.7\% & 15.3\% & 0.21\% \\
\rowcolor{tableblue}
Query+global & 0.594 & +7.2 & $-0.1$ & +19.8 & 38.5\% & 83.5\% & 16.5\% & 0.49\% \\
Global only & 0.575 & +5.2 & $-0.1$ & +19.8 & 28.0\% & 84.6\% & 15.4\% & 0.16\% \\
\specialrule{0.1em}{0pt}{0pt}
\end{tabular}
}
\vspace{-0.8em}
\end{table}

Tables~\ref{tab:app_ablation_results}
and~\ref{tab:app_sam3_ablation_safety} are negative controls: removing
calibration or the no-op action can increase the mean, but it raises
harmful-action rates and damages lower-tail prompts. The main method
therefore keeps the baseline as an explicit safe action and calibrates
the margin on held-out images.

\begin{figure}[t]
\centering
\includegraphics[width=0.58\linewidth]{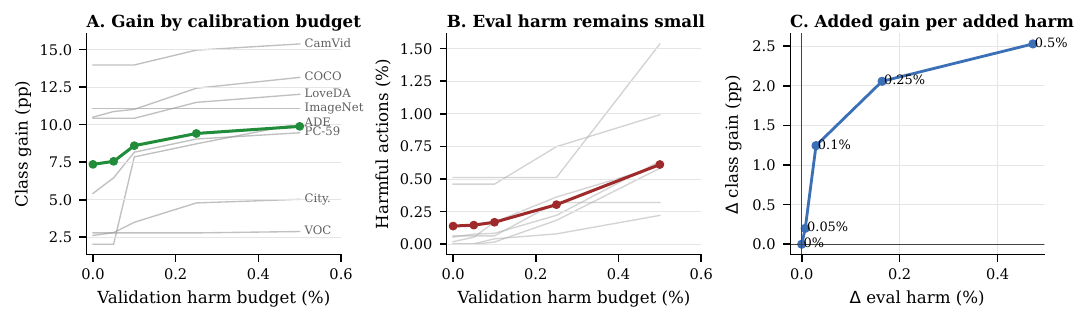}\hfill
\includegraphics[width=0.39\linewidth]{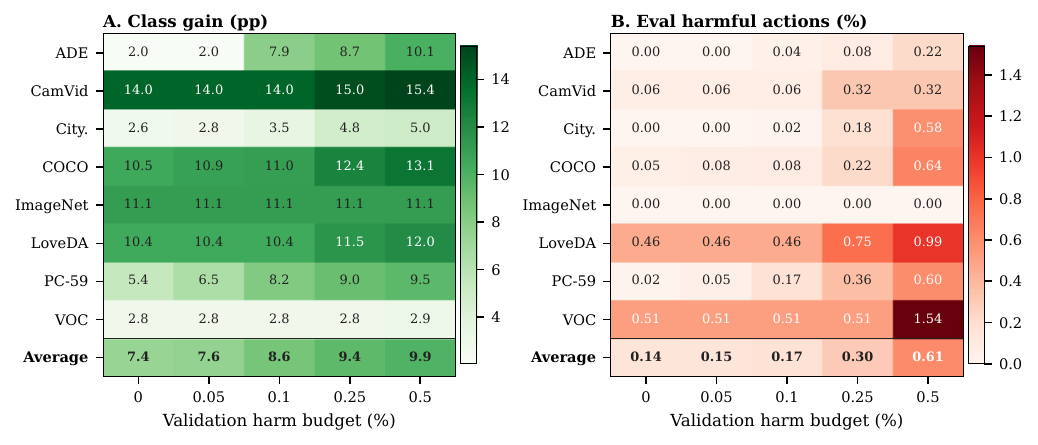}
\vspace{-0.6em}
\caption{\textbf{SAM~3 calibration-budget ablation.}
Left: sweeping the validation harmful-action budget trades a small
increase in label risk for additional class-macro gain; the main paper
uses the $0.25\%$ setting. Right: per-dataset class-gain heatmap with
the dataset average row.}
\label{fig:app_calibration_budget_ablation}
\end{figure}

\clearpage
\section{Query-Routing Diagnostics}
\label{app:plots}

Each selector run writes the same diagnostic suite, but the appendix only
reproduces representative plots that directly support the main claims:
query-use concentration, qualitative recoveries, class-level gain
structure, transfer controls, score--quality calibration, and online
overhead. The complete generated artifacts remain in
\texttt{results\_tables/} for reproducibility.

\subsection{SAM~3 Query-Use Concentration}
\label{app:sam3_query_use}

\begin{figure}[H]
    \centering
    \includegraphics[width=\linewidth]{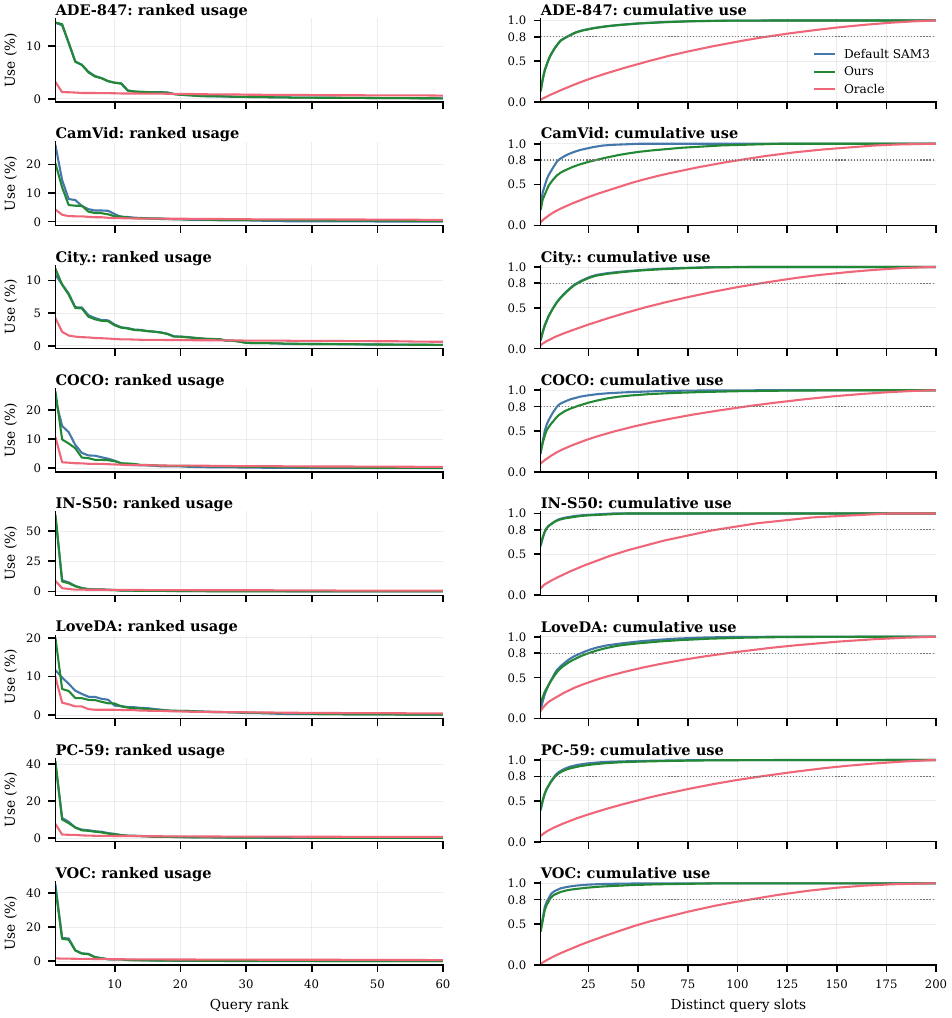}
    \caption{\textbf{SAM~3 query-use distributions for corrected no-op runs.}
    Left column: ranked query-use distributions show the concentration of
    the default decoder, the final \ours{} output after no-op calibration,
    and the oracle. Right column: cumulative curves show that \ours{} uses
    a broader final-query pool than the default selection while remaining more
    conservative than the oracle.}
    \label{fig:app_sam3_query_usage}
\end{figure}

The ranked and cumulative query-use views in
Figure~\ref{fig:app_sam3_query_usage} provide the fuller evidence
referenced in the main text. The raw diagnostic folders also contain
query-index histograms, agreement heatmaps, radar plots, and per-class
views; these two views are the most compact because they show the same
fact without relying on a particular query index. The default selection
repeatedly selects a small subset of slots, whereas the oracle and the
learned final output draw from a broader pool after the no-op gate.

\FloatBarrier
\subsection{Qualitative Recoveries}
\label{app:qualitative_recoveries}

\begin{figure}[H]
    \centering
    \includegraphics[width=0.94\linewidth]{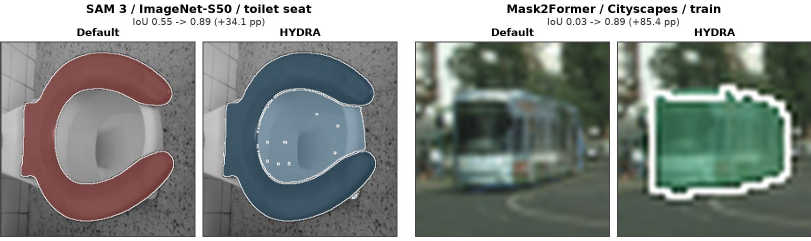}
    \caption{\textbf{Nonempty defaults can still improve.}
    Each pair shows only the released default and the routed \ours{} output,
    cropped around the corrected region. The default already overlaps the
    target in both examples, but the frozen candidate selected by \ours{}
    better completes the intended region.}
    \label{fig:app_qualitative_improvement_row}
\end{figure}

The examples in Figure~\ref{fig:app_qualitative_improvement_row} show
the stricter before/after case for both SAM~3 and Mask2Former: the
default output is already nonempty, but the routed candidate better
matches the intended region. These examples are representative cases,
not the basis for the aggregate claim.

\begin{figure}[H]
    \centering
    \includegraphics[width=0.92\linewidth]{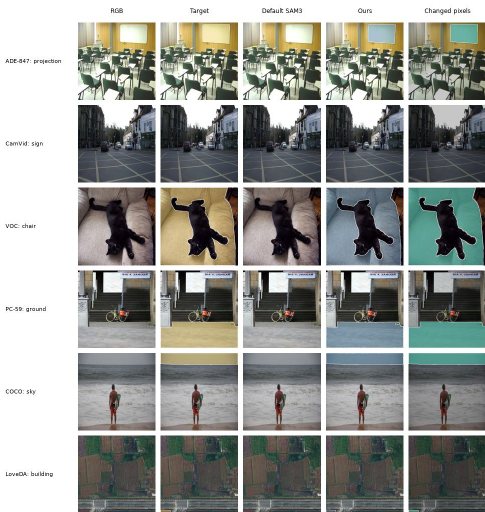}
    \caption{\textbf{Additional corrected SAM~3 examples.}
    Best-case qualitative examples from the audited no-op reports. Each row
    shows RGB, target mask, default SAM~3, our routed output, and changed
    pixels; the examples are selected for visual clarity from high-gain
    held-out prompts.}
    \label{fig:app_sam3_best_examples}
\end{figure}

\begin{figure}[H]
    \centering
    \includegraphics[width=\linewidth]{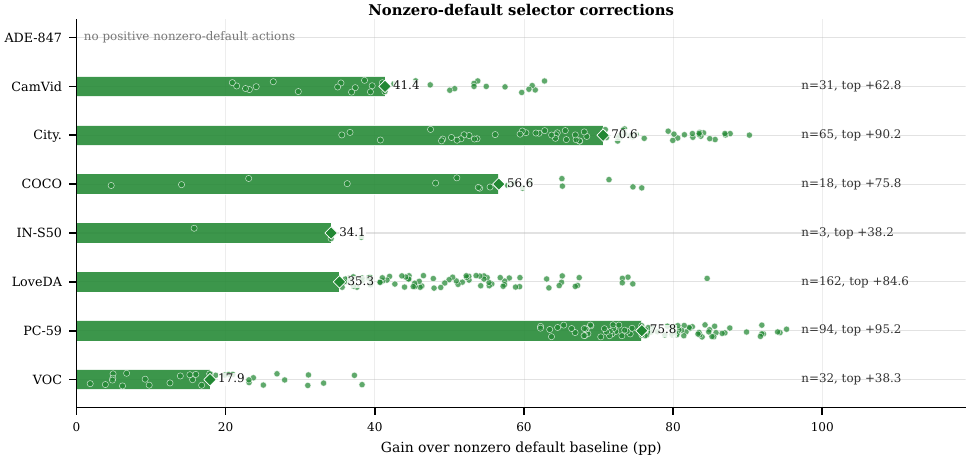}
    \caption{\textbf{SAM~3 corrections when the default output is nonzero.}
    Each row shows held-out prompts where default SAM~3 already produced a
    nonempty mask and the calibrated no-op selector still improved prompt IoU.
    Points are prompt-level gains; the green bar and diamond mark the median
    positive nonzero-default gain.}
    \label{fig:app_sam3_nonzero_baseline_gains}
\end{figure}

Figure~\ref{fig:app_sam3_best_examples} gives additional high-gain
SAM~3 examples from ADE-847, CamVid, VOC, PC-59, COCO, and LoveDA.
Figure~\ref{fig:app_sam3_nonzero_baseline_gains} quantifies the same
nonempty-default case for SAM~3, confirming that the gains are not only
from turning empty defaults into nonempty masks.

\FloatBarrier
\subsection{Class-Level Gain Structure}
\label{app:sam3_class_level_gains}

\begin{figure}[H]
    \centering
    \includegraphics[width=\linewidth]{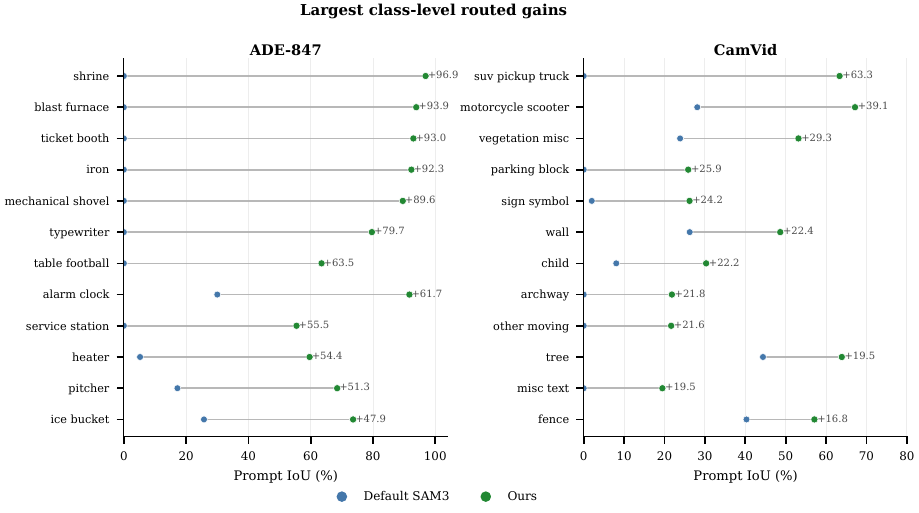}
    \caption{\textbf{SAM~3 class-level gains across promptable datasets (1/4).}
    ADE-847 and CamVid classes with the largest routed gains. Each row
    compares the default SAM~3 prompt IoU to the final no-op-routed output.}
    \label{fig:app_sam3_per_class_gains_1}
\end{figure}

\begin{figure}[H]
    \centering
    \includegraphics[width=\linewidth]{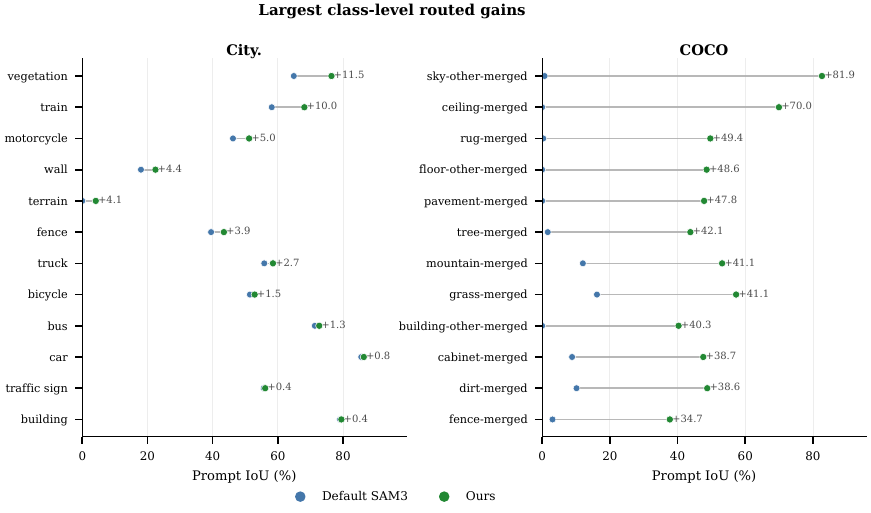}
    \caption{\textbf{SAM~3 class-level gains across promptable datasets (2/4).}
    Cityscapes and COCO classes with the largest routed gains.}
    \label{fig:app_sam3_per_class_gains_2}
\end{figure}

\begin{figure}[H]
    \centering
    \includegraphics[width=\linewidth]{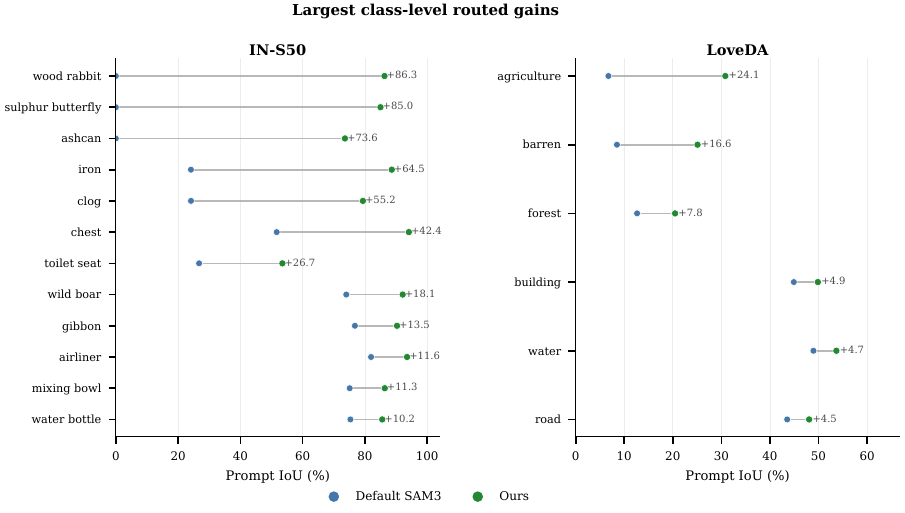}
    \caption{\textbf{SAM~3 class-level gains across promptable datasets (3/4).}
    ImageNet-S50 and LoveDA classes with the largest routed gains.}
    \label{fig:app_sam3_per_class_gains_3}
\end{figure}

\begin{figure}[H]
    \centering
    \includegraphics[width=\linewidth]{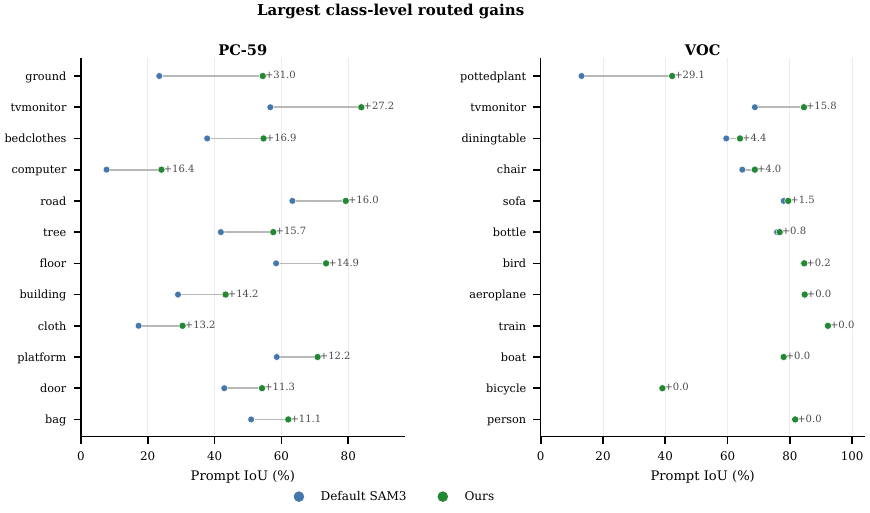}
    \caption{\textbf{SAM~3 class-level gains across promptable datasets (4/4).}
    PC-59 and Pascal VOC classes with the largest routed gains. Full
    per-class CSVs for every dataset are included in the evidence bundle.}
    \label{fig:app_sam3_pc59_class_gains}
\end{figure}

Figures~\ref{fig:app_sam3_per_class_gains_1}--\ref{fig:app_sam3_pc59_class_gains}
show where the SAM~3 gains are allocated across prompt labels. The plots
are class-level diagnostics rather than aggregate performance claims:
the tables in Appendix~\ref{app:full_results} give the averages, while
these figures show that the improvements are spread across many labels
and are not explained by a single easy class.

\FloatBarrier
\subsection{Cross-Domain Transfer Controls}
\label{app:transfer_controls}

\begin{figure}[H]
\centering
\includegraphics[width=0.98\linewidth]{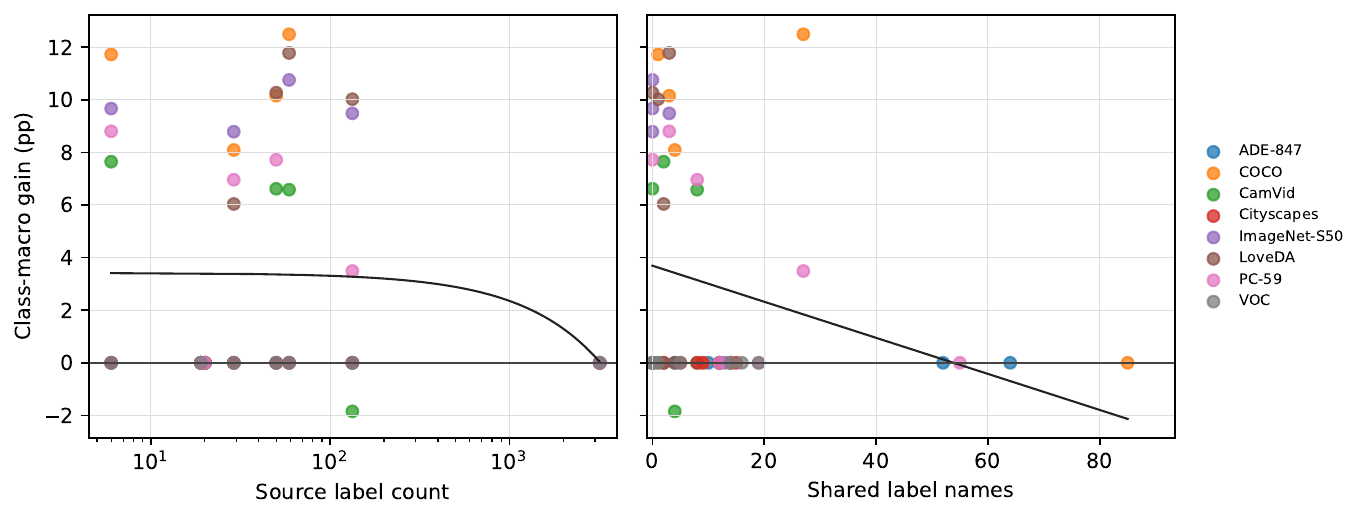}
\caption{\textbf{Label-count controls for SAM~3 transfer.}
Off-diagonal cross-domain selector runs plotted against source vocabulary
size and target-overlap count. Transfer gain is not explained by either
more source labels or more shared label names; calibrated no-op runs
appear at zero.}
\label{fig:app_sam3_label_count_vs_gain}
\end{figure}

Figure~\ref{fig:app_sam3_label_count_vs_gain} checks whether the
cross-domain heatmap is explained by vocabulary size or name overlap. It
is not: across off-diagonal runs, source-label count and shared-name
count correlate weakly negatively with class-macro gain ($r=-0.25$ and
$r=-0.26$). Compact or medium-vocabulary sources such as LoveDA, CamVid,
ImageNet-S50, and PC-59 transfer well, while ADE-847 often falls on the
calibrated no-op band. Several strong pairs share few or no exact names,
and high-overlap pairs can stay at zero when no-op is safer. The transfer
therefore reflects reusable mask and feature geometry plus a conservative
gate, not lexical memorization.

\FloatBarrier
\subsection{\texorpdfstring{COCO$\to$CamVid Negative Transfer}{COCO to CamVid Negative Transfer}}
\label{app:coco_camvid_negative_transfer}

The only negative off-diagonal transfer entry occurs when the selector
trained on COCO is applied to CamVid. We isolate this case to distinguish
limited target-domain headroom from a domain-shifted release decision.
The COCO-trained selector is evaluated on CamVid without retraining,
renormalization, vocabulary remapping, or margin recalibration. This is a
strict transfer test: if the source gate releases a residual action, it
must be correct under the target-domain score geometry.

\begin{figure}[H]
    \centering
    \includegraphics[width=\linewidth]{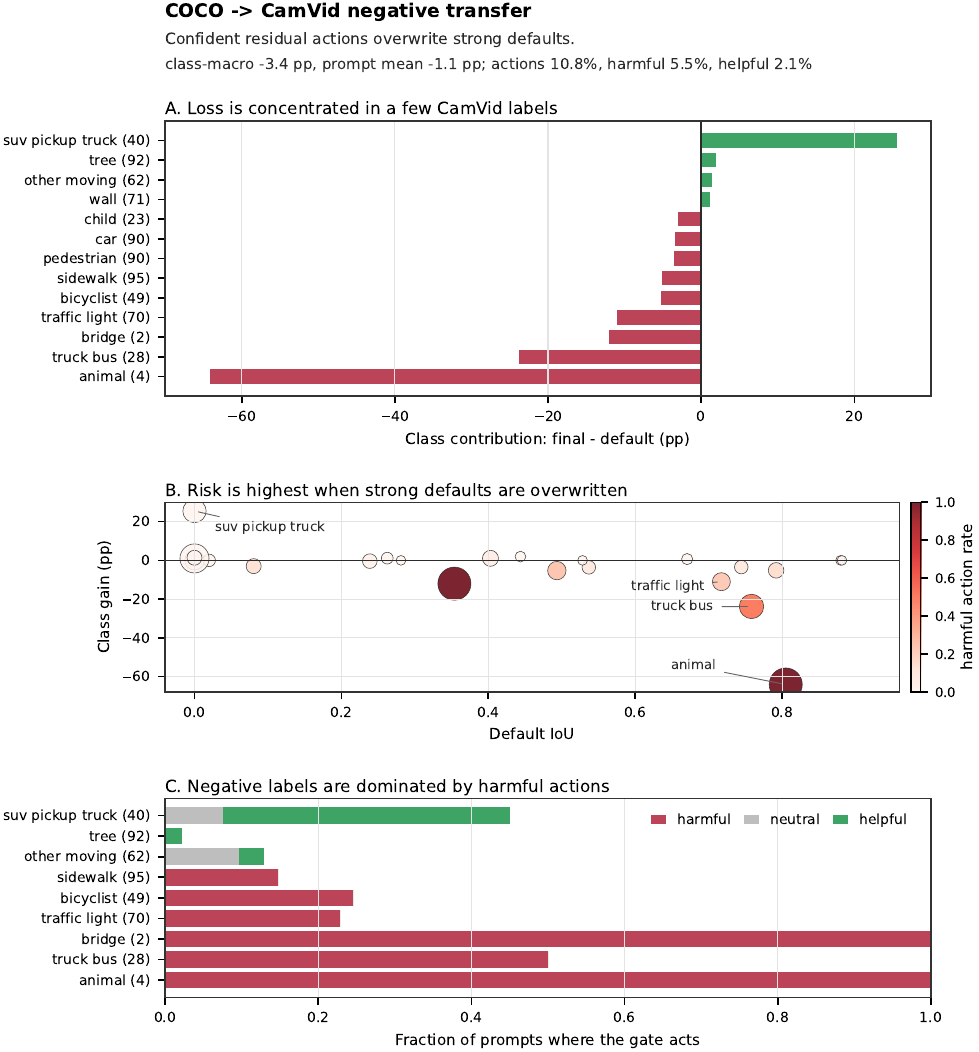}
    \caption{\textbf{Why COCO$\to$CamVid is the negative transfer cell.}
    The source selector keeps the COCO checkpoint, normalization, vocabulary,
    and no-op margin, then applies them directly to CamVid. Losses are not
    diffuse: a few CamVid road-scene labels receive confident residual actions
    that overwrite already strong defaults. The cell is therefore a calibration
    failure under target-domain shift, not evidence that the frozen SAM~3 pool
    lacks useful CamVid masks.}
    \label{fig:app_coco_camvid_negative_transfer}
\end{figure}

\begin{figure}[H]
    \centering
    \includegraphics[width=0.74\linewidth]{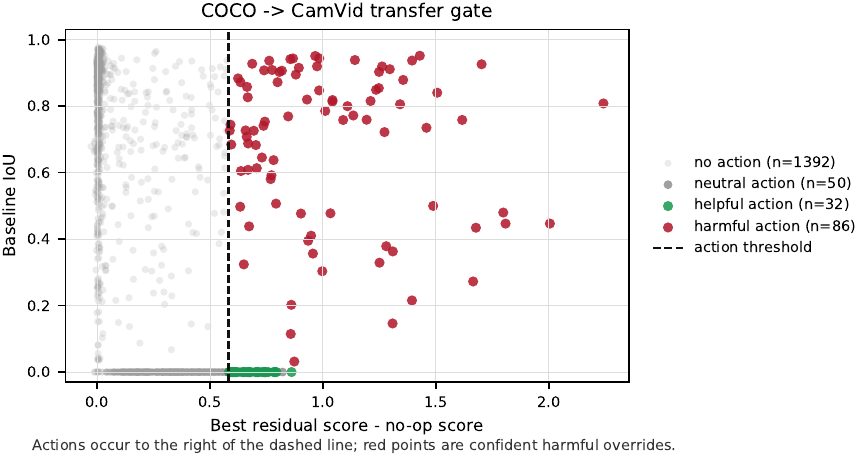}
    \caption{\textbf{The failed actions are confident overrides.}
    Actions occur to the right of the dashed margin. In COCO$\to$CamVid, many
    acted-upon prompts with high default IoU are harmful, so the learned
    residual score is miscalibrated relative to the no-op action on this target.
    This explains why the conservative gate preserves many prompts but still
    loses on the small subset it releases.}
    \label{fig:app_coco_camvid_negative_transfer_gate}
\end{figure}

Figures~\ref{fig:app_coco_camvid_negative_transfer}
and~\ref{fig:app_coco_camvid_negative_transfer_gate} show that the cell
does not fail because CamVid lacks oracle headroom. It fails because the
source-trained residual gate sometimes treats a high-scoring residual as
safer than a strong CamVid default. The error is concentrated in a few
road-scene labels, and the acted-upon harmful cases are confident rather
than borderline. The right interpretation is therefore miscalibrated
action release under domain shift, not a collapse of the frozen SAM~3
mask pool.

\FloatBarrier
\subsection{Score--Quality and Routing Structure}
\label{app:score_quality_routing}

\begin{figure}[t]
    \centering
    \includegraphics[width=0.98\linewidth]{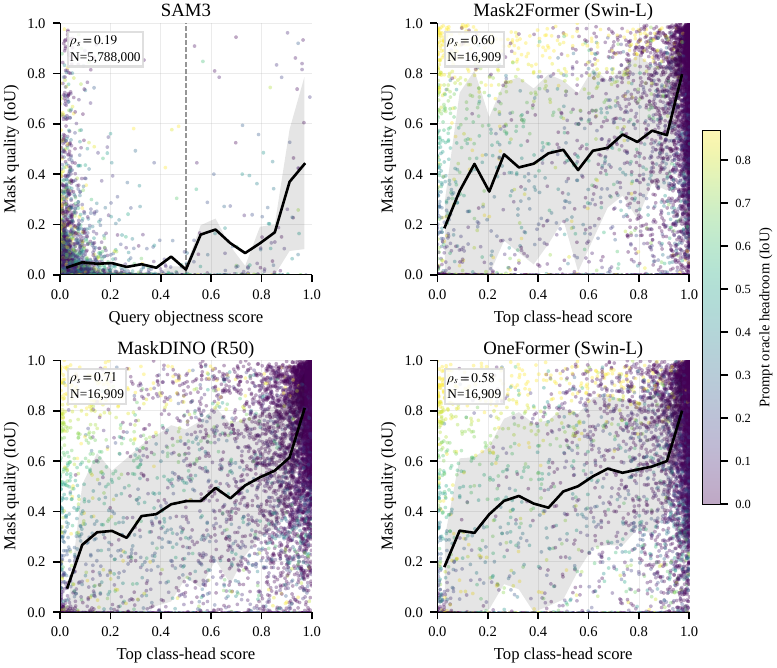}
    \vspace{-0.7em}
    \caption{\textbf{Full score--quality diagnostic.}
    This expands Figure~\ref{fig:score_quality_gap_main} to all evaluated
    architectures. Each point is an already-computed candidate mask: a
    SAM~3 query for ADE-847, or the top class-head query for a DETR-family
    segmenter on ADE20k. The x-axis is the model's own decision score,
    the y-axis is candidate mask IoU, color denotes prompt-level oracle
    headroom, and the black curve shows binned mean quality with shaded
    interquartile range.}
    \label{fig:score_quality_gap_appendix}
    \vspace{-0.8em}
\end{figure}

\begin{figure}[t]
    \centering
    \includegraphics[width=0.94\linewidth]{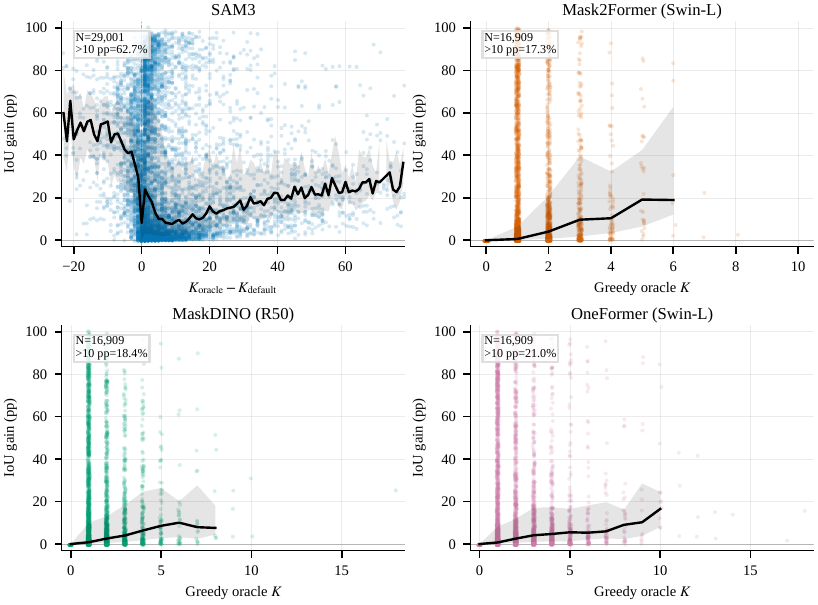}
    \vspace{-0.7em}
    \caption{\textbf{Oracle gains do not require a wholesale decoder
    replacement.}
    Each point is one evaluated prompt. The x-axis is the number of
    oracle-selected queries for class-aware models, and the change in
    selected query count relative to the default thresholded mask set for
    SAM~3. Negative SAM~3 values mean that the baseline threshold
    selected more masks than the oracle needed, so the oracle is
    effectively replacing or pruning over-selected default candidates.
    The y-axis is the per-prompt IoU gain of the greedy oracle.
    Black curves show the binned median with interquartile range. Large
    gains frequently occur at small query counts or small count changes,
    supporting the view that the failure is a local selection/routing
    problem over existing candidates.}
    \label{fig:selection_vs_improvement_appendix}
    \vspace{-0.8em}
\end{figure}

\begin{figure}[p]
    \centering
    \includegraphics[width=0.98\linewidth]{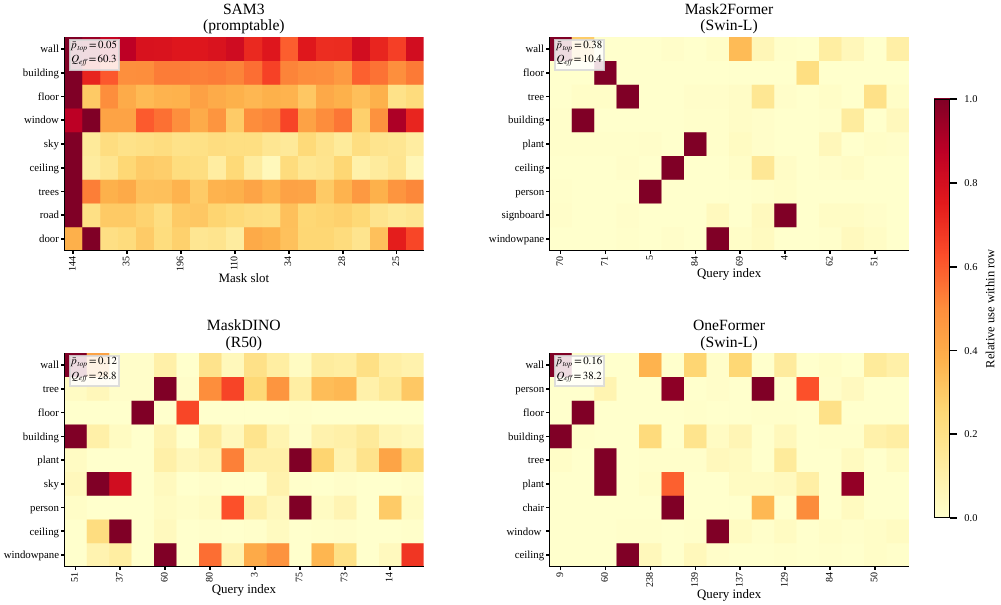}
    \vspace{-0.7em}
    \caption{\textbf{Prompt and class labels induce different query-use
    regimes.}
    Rows are frequent labels, columns are the most used query or mask-slot
    indices for each model, and colors are normalized within each row.
    SAM~3 is promptable, so its panel shows ADE-847 text prompts against
    prompt-conditioned mask slots; the broad activation across many slots
    reflects more diverse candidate use. The class-aware DETR-family models
    instead show class--query coupling, especially Mask2Former, where a
    semantic class often repeatedly relies on one slot or a small set of
    slots.}
    \label{fig:class_query_specialization_appendix}
    \vspace{-0.8em}
\end{figure}

The score--quality panels in Figure~\ref{fig:score_quality_gap_appendix}
extend the main diagnostic across SAM~3 and the universal segmenters.
Figures~\ref{fig:selection_vs_improvement_appendix}
and~\ref{fig:class_query_specialization_appendix} then make the routing
structure explicit. The greedy oracle often obtains large gains with a
small number of query changes, and the label-to-query map differs by
training regime. SAM~3 spreads ADE-847 text prompts over many
prompt-conditioned mask slots, while the class-aware DETR-family models
repeatedly use particular query slots for particular semantic classes.
This does not mean every query is permanently tied to one label, but it
does show that the query pool is structured: a post-hoc router can
exploit label- and query-dependent regularities that are invisible to a
single scalar score.

\FloatBarrier
\subsection{Online Overhead Details}
\label{app:overhead_details}

\begin{figure}[t]
\centering
\resizebox{0.98\linewidth}{!}{\definecolor{hydraBlue}{HTML}{4477AA}
\definecolor{hydraGreen}{HTML}{228833}
\definecolor{hydraText}{HTML}{222222}
\definecolor{hydraGrid}{HTML}{D8D8D8}

\begin{tikzpicture}
\pgfplotsset{
  hydra axis/.style={
    axis x line*=bottom,
    axis y line*=left,
    axis line style={draw=black, line width=0.65pt},
    tick style={draw=black, line width=0.65pt},
    major tick length=2.2pt,
    tick label style={font=\small, text=black},
    label style={font=\normalsize, text=black},
    title style={font=\bfseries\large, text=black, yshift=4pt},
    grid=major,
    grid style={draw=hydraGrid, line width=0.35pt},
    ymajorgrids=true,
    xmajorgrids=false,
    clip=false,
  }
}

\begin{groupplot}[
  group style={
    group size=2 by 1,
    horizontal sep=1.15cm
  },
]

\nextgroupplot[
  hydra axis,
  width=7.60cm,
  height=4.55cm,
  title={(a) End-to-end live latency},
  ylabel={Latency / input (ms)},
  ymin=0,
  ymax=285,
  xmin=-0.55,
  xmax=3.55,
  ytick={0,50,100,150,200,250},
  xtick={0,1,2,3},
  xticklabels={SAM3,M2F,OneF.,MaskDINO},
  xticklabel style={font=\small, align=center, text=black},
  enlarge x limits=false,
  legend style={
    draw=none,
    fill=none,
    at={(0.98,0.98)},
    anchor=north east,
    legend columns=1,
    font=\small,
    /tikz/every even column/.append style={column sep=0.35cm}
  },
]

\addlegendimage{area legend, fill=hydraBlue, draw=black, line width=0.35pt}
\addlegendentry{baseline}
\addlegendimage{area legend, fill=hydraGreen, draw=black, line width=0.35pt}
\addlegendentry{+ HYDRA}

\addplot+[
  ybar,
  mark=none,
  bar width=5.8pt,
  bar shift=-3.8pt,
  fill=hydraBlue,
  draw=black,
  line width=0.35pt,
] coordinates {
  (0,214.466)
  (1,86.117)
  (2,103.224)
  (3,43.689)
};

\addplot+[
  ybar,
  mark=none,
  bar width=5.8pt,
  bar shift=3.8pt,
  fill=hydraGreen,
  draw=black,
  line width=0.35pt,
] coordinates {
  (0,242.582)
  (1,94.424)
  (2,112.757)
  (3,52.039)
};

\node[align=center, font=\scriptsize, text=black]
  at (axis cs:0,263) {+28.1 ms\\+13.1\%};
\node[align=center, font=\scriptsize, text=black]
  at (axis cs:1,114) {+8.3 ms\\+9.6\%};
\node[align=center, font=\scriptsize, text=black]
  at (axis cs:2,132) {+9.5 ms\\+9.2\%};
\node[align=center, font=\scriptsize, text=black]
  at (axis cs:3,72) {+8.4 ms\\+19.1\%};

\nextgroupplot[
  hydra axis,
  width=5.85cm,
  height=4.55cm,
  title={(b) Online throughput},
  ylabel={Inputs / s},
  ymin=0,
  ymax=27.5,
  ytick={0,5,10,15,20,25},
  xtick={0,1,2,3},
  xticklabels={SAM3,M2F,OneF.,MaskDINO},
  xticklabel style={font=\small, align=center, text=black},
  xmin=-0.55,
  xmax=3.55,
  enlarge x limits=false,
]

\addplot+[
  ybar,
  mark=none,
  bar width=5.8pt,
  bar shift=-3.8pt,
  fill=hydraBlue,
  draw=black,
  line width=0.35pt,
] coordinates {
  (0,4.663)
  (1,11.612)
  (2,9.688)
  (3,22.889)
};

\addplot+[
  ybar,
  mark=none,
  bar width=5.8pt,
  bar shift=3.8pt,
  fill=hydraGreen,
  draw=black,
  line width=0.35pt,
] coordinates {
  (0,4.122)
  (1,10.591)
  (2,8.869)
  (3,19.216)
};

\node[align=center, font=\scriptsize, text=black]
  at (axis cs:0,6.0) {-11.6\%};
\node[align=center, font=\scriptsize, text=black]
  at (axis cs:1,13.0) {-8.8\%};
\node[align=center, font=\scriptsize, text=black]
  at (axis cs:2,11.0) {-8.5\%};
\node[align=center, font=\scriptsize, text=black]
  at (axis cs:3,25.0) {-16.0\%};

\end{groupplot}
\end{tikzpicture}}
\caption{\textbf{Detailed online overhead.}
We time the same 100 live inputs summarized in
Figure~\ref{fig:selector_overhead_main_wrap}.
The left panel reports end-to-end latency for the native baseline and the
selector path; the right panel reports absolute online throughput. SAM~3 uses
ImageNet-S50 prompts, and the universal segmenters use ADE20k validation
images.}
\label{fig:app_selector_overhead_details}
\end{figure}

The detailed latency and throughput measurements in
Figure~\ref{fig:app_selector_overhead_details} support the compact
online-cost summary in the main paper. The selector adds a small online
decision on top of cached frozen outputs; the dominant cost remains the
underlying segmenter forward pass.

\FloatBarrier
\section{Controlled Mechanism Study}
\label{app:controlled}

\subsection{Controlled Setup}
\label{app:controlled_setup}

Images are $64\times64$ binary occupancy maps drawn from
$K\in\{2,3,4\}$ shape classes: horizontal ellipses, vertical ellipses,
rings, and crosses. TinyDETR uses a 3-layer CNN encoder, learned object
queries, cross-attention, a per-query mask MLP, and a per-query score
head. The model is trained with Hungarian matching and BCE+Dice+score
loss.

Distribution shift is parameterized by an ambiguity variable
$\rho\in[0,1]$ through a superellipse:
\begin{equation}
    \mathcal{S}_\rho
    =
    \Bigl\{R_\theta(u,v)^\top:
    |u/a|^{p(\rho)}+|v/b|^{p(\rho)}\leq 1\Bigr\},
    \quad
    p(\rho)=\frac{2}{1-\rho+\epsilon}.
    \label{eq:superellipse}
\end{equation}
The endpoints correspond to training regimes; intermediate values are
out-of-distribution mixtures that activate competing specialized queries.
We use this setting as a microscope rather than a benchmark. The goal is
to test the three pieces of the paper's mechanism in isolation: matching
must create query specialization; the score decoder must fail most when
two specialized queries are both plausible; and a router must recover the
lost quality from frozen query features rather than from extra image
features. Sweeping $K$ is the stress test. With $K{=}2$ there is no spare
query, with $K{=}3$ the boundary is asymmetric because one class is not
part of the horizontal--vertical interpolation, and with $K{=}4$ the
model has additional specialized queries that could absorb or confuse
the boundary. A mechanism that appears in all three regimes is therefore
not a byproduct of one convenient toy cardinality.

\subsection{Scaling Across Query-Set Sizes}
\label{app:toy_scaling}

\begin{figure}[H]
    \centering
    \includegraphics[width=0.96\linewidth]{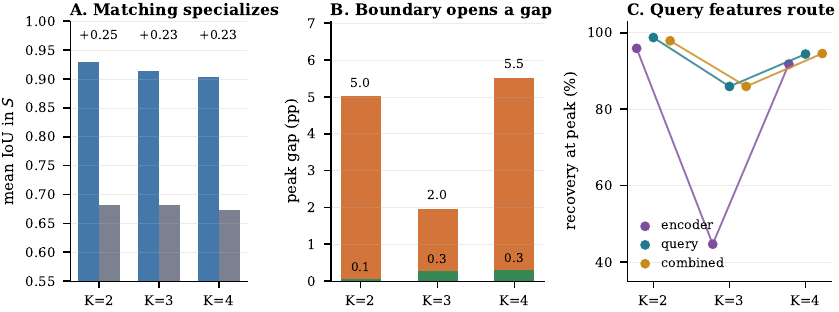}
    \vspace{-0.6em}
    \caption{%
        \textbf{Controlled mechanism sweep over $K\in\{2,3,4\}$.}
        \emph{A}: Hungarian matching produces higher diagonal than
        off-diagonal entries in the specialization matrix $S$ for every
        $K$. \emph{B}: the score-argmax decoder opens a boundary gap
        (orange), while the query-feature router leaves only a small
        residual gap (green). \emph{C}: at the hardest boundary point,
        routers using query features recover more of the oracle gap than
        encoder-only routing, especially for $K{=}3$.%
    }
    \label{fig:toy_scaling_k234}
    \vspace{-0.8em}
\end{figure}

Figure~\ref{fig:toy_scaling_k234} verifies that the mechanism is not an
artifact of choosing four queries. Matching produces a stable
specialization gap for $K{=}2,3,4$ (diagonal mean $0.93,0.91,0.90$
versus off-diagonal mean $0.68,0.68,0.67$). The score decoder opens a
boundary oracle gap in each case: $+5.0$ pp for $K{=}2$, $+2.0$ pp for
$K{=}3$, and $+5.5$ pp for $K{=}4$. At the same peak boundary points,
the query-feature router leaves only $+0.1,+0.3,+0.3$ pp residual gap.
The lower $K{=}3$ gap is useful rather than problematic: it shows that
the mechanism persists even when the score head is closer to correct.
The oracle itself remains high at those boundary points
($0.927,0.936,0.925$ IoU for $K{=}2,3,4$), so the failure is not a
collapse of mask generation. It is a selection error: the score-selected
query is coherent but not the best available query. This is the same
distinction made in the main empirical tables, where \ours{} improves a
frozen cache without asking the backbone to synthesize new masks.

\FloatBarrier
\subsection{Specialization During Training}
\label{app:specialization_emergence}

\begin{figure}[H]
    \centering
    \begin{subfigure}[t]{0.90\linewidth}
        \centering
        \includegraphics[width=\linewidth]{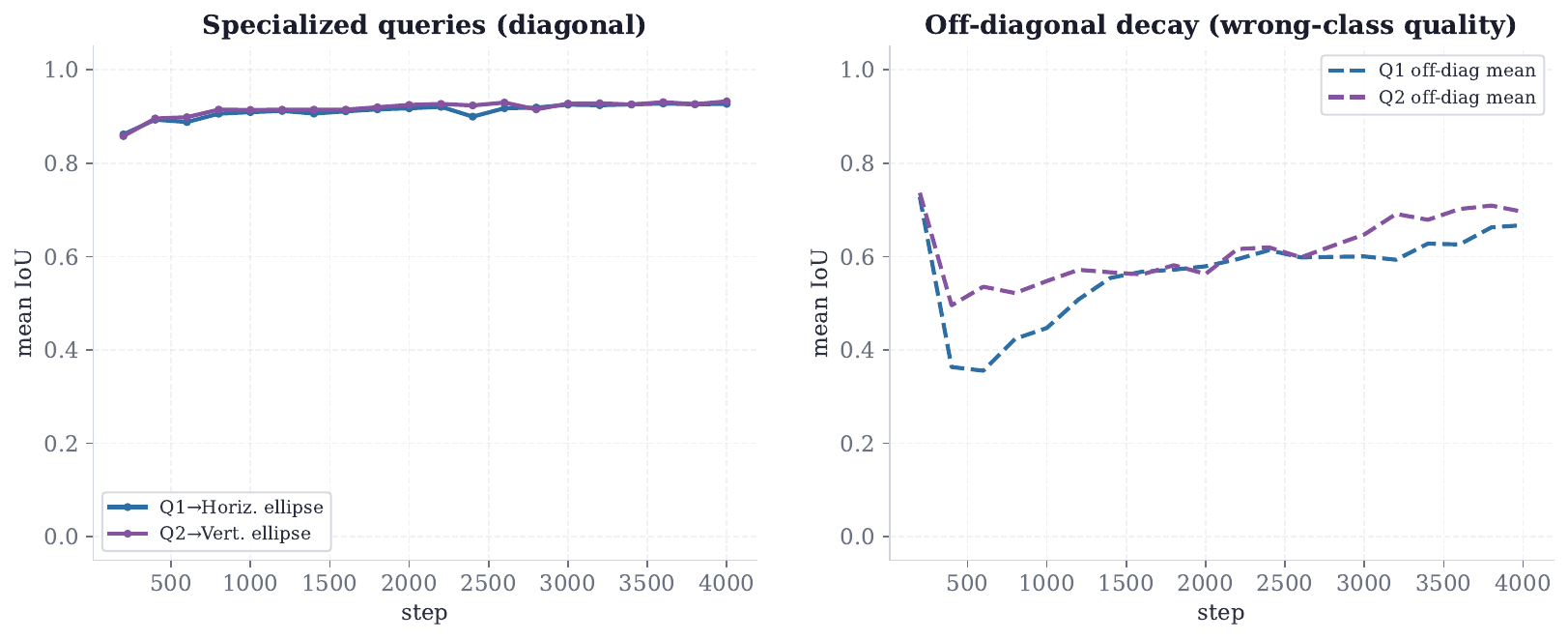}
        \caption{$K{=}2$}
        \label{fig:emergence_k2}
    \end{subfigure}
    \vspace{-0.35em}
    \begin{subfigure}[t]{0.90\linewidth}
        \centering
        \includegraphics[width=\linewidth]{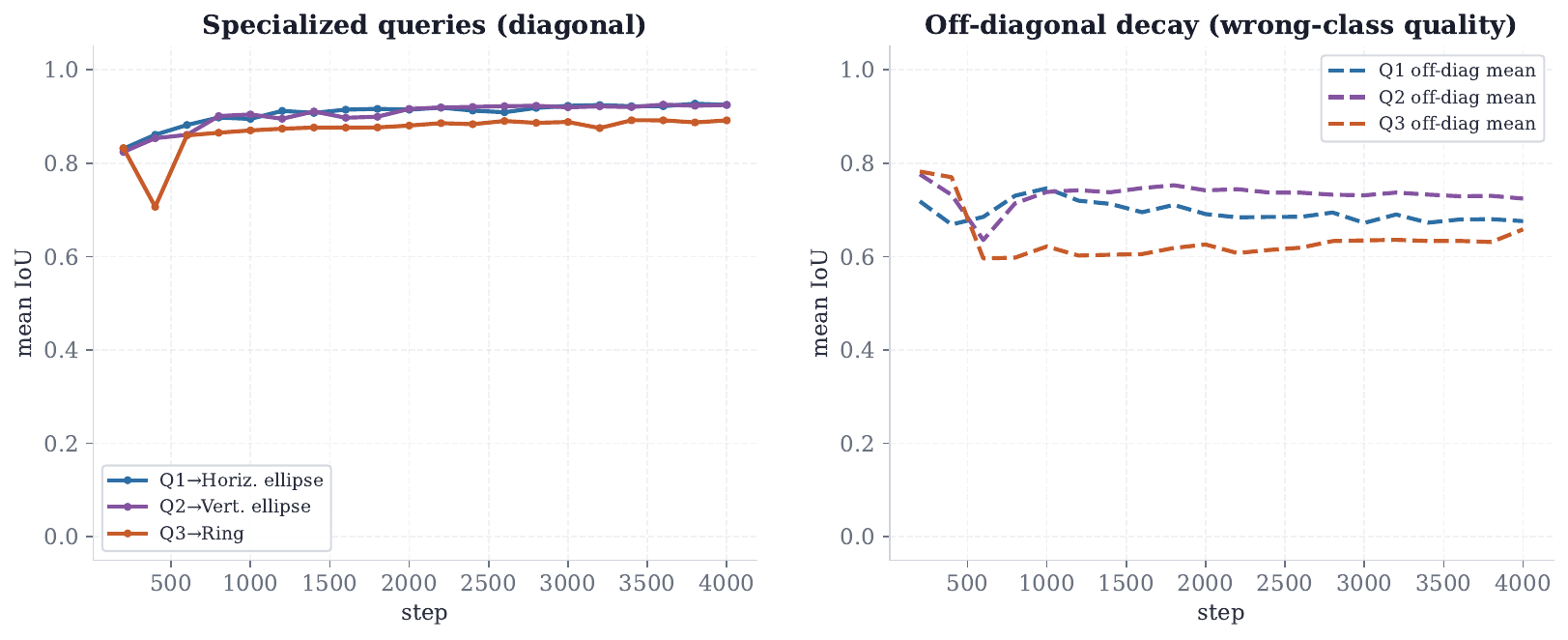}
        \caption{$K{=}3$}
        \label{fig:emergence_k3}
    \end{subfigure}
    \vspace{-0.35em}
    \begin{subfigure}[t]{0.90\linewidth}
        \centering
        \includegraphics[width=\linewidth]{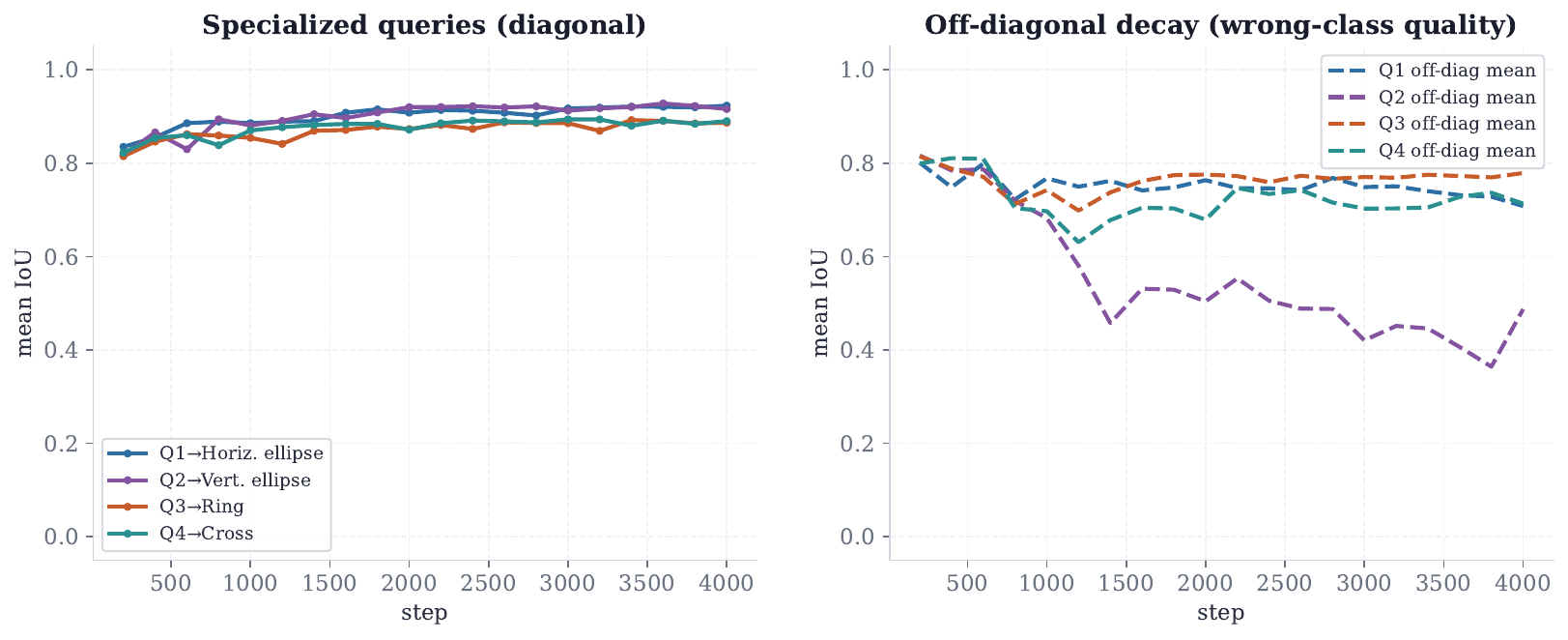}
        \caption{$K{=}4$}
        \label{fig:emergence_k4}
    \end{subfigure}
    \vspace{-0.5em}
    \caption{%
        \textbf{Specialization dynamics across query-set sizes.}
        In every controlled setting, diagonal entries of $\mathbf{S}$
        rise as queries commit to classes, while off-diagonal entries
        decay. The same matching feedback loop appears for
        $K{=}2,3,4$.%
    }
    \label{fig:emergence}
    \vspace{-0.8em}
\end{figure}

Figure~\ref{fig:emergence} expands the training trajectory for
$K{=}2,3,4$: diagonal entries of the specialization matrix rise as each
query commits to a class, while off-diagonal entries decay. This is the
concrete realization of the gradient-selection argument in
Section~\ref{sec:inevitability}; the effect is not specific to the
four-class visualization used in the main text.
The important point is temporal. The queries separate during ordinary
matched training, before any post-hoc router is introduced. Once that
separation exists, a scalar score has to compare masks produced by
different specialized query states. The score objective supervises
whether a query has found an object, but it does not supervise the
counterfactual question the oracle asks: among the candidate masks already
computed for this input, which one has the highest IoU? The controlled
trajectory therefore links the training objective to the later routing
problem instead of treating the oracle gap as an unexplained empirical
curiosity.

\FloatBarrier
\subsection{Boundary Concentration}
\label{app:boundary_concentration}

\begin{figure}[H]
    \centering
    \begin{subfigure}[t]{0.48\linewidth}
        \centering
        \includegraphics[width=\linewidth]{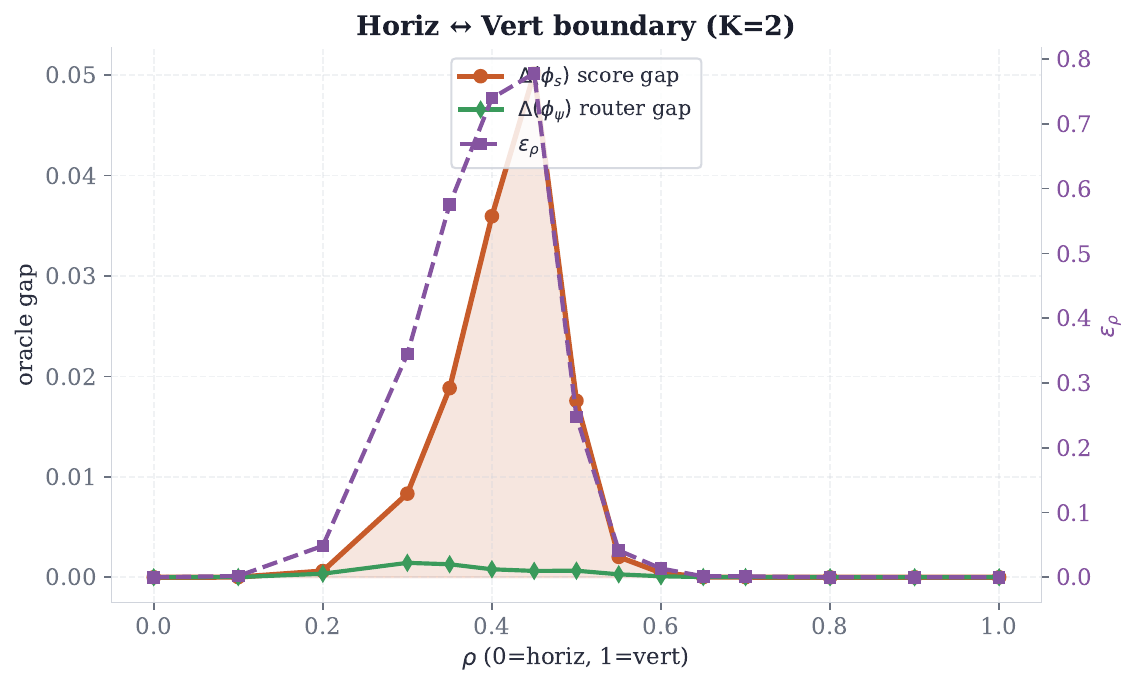}
        \caption{$K{=}2$}
        \label{fig:boundary_k2}
    \end{subfigure}
    \hfill
    \begin{subfigure}[t]{0.48\linewidth}
        \centering
        \includegraphics[width=\linewidth]{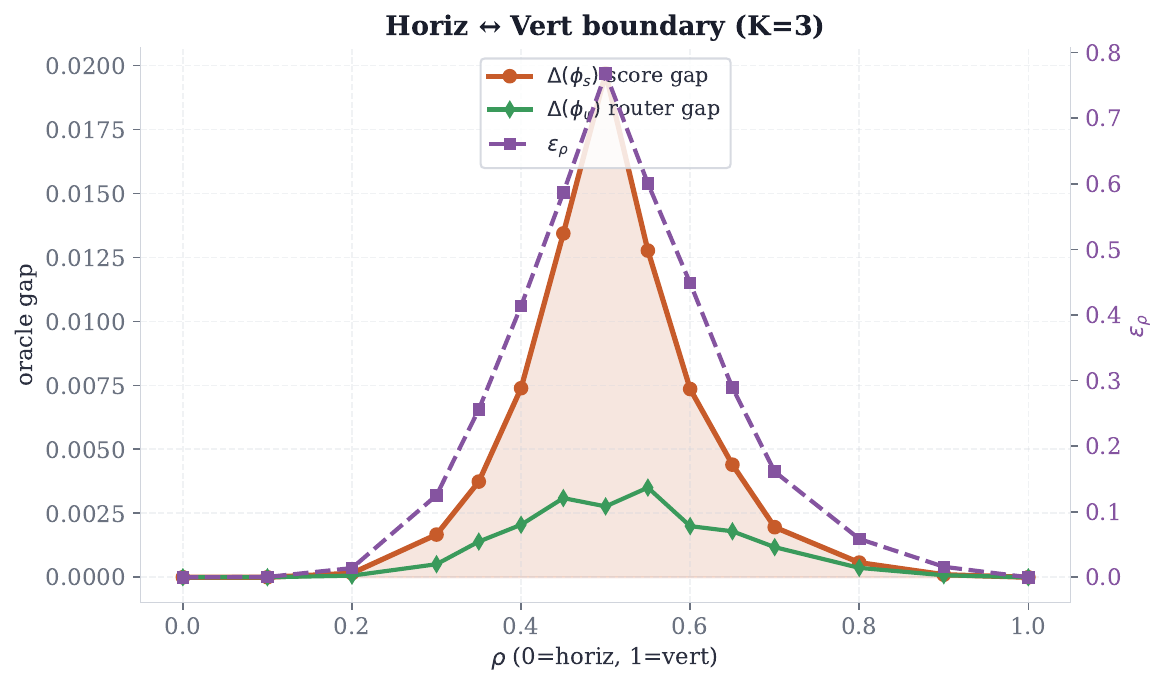}
        \caption{$K{=}3$}
        \label{fig:boundary_k3}
    \end{subfigure}
    \vspace{-0.45em}
    \begin{subfigure}[t]{0.48\linewidth}
        \centering
        \includegraphics[width=\linewidth]{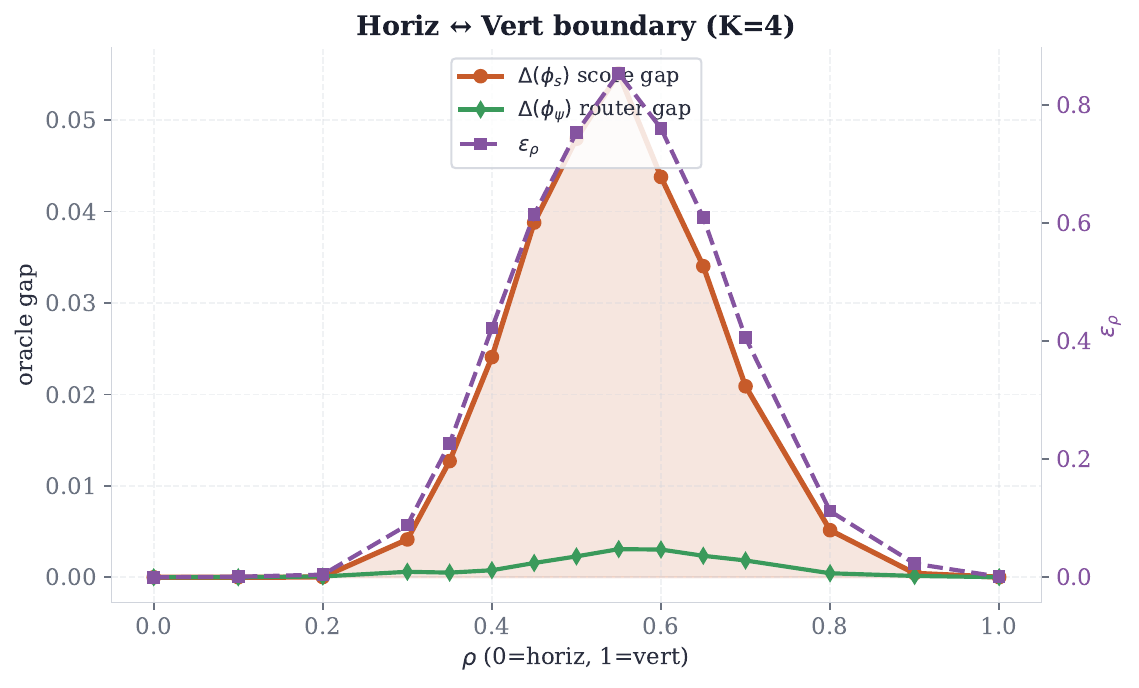}
        \caption{$K{=}4$}
        \label{fig:boundary_k4}
    \end{subfigure}
    \vspace{-0.5em}
    \caption{%
        \textbf{Gap concentration at the OOD boundary.}
        $\Delta(\phi_s)$ follows the predicted inverted-U,
        co-localized with $\varepsilon_\rho$. The router gap
        $\Delta(\phi_\psi)$ stays near zero for all three query-set
        sizes.%
    }
    \label{fig:boundary}
    \vspace{-0.8em}
\end{figure}

\begin{proposition}[Gap factorization]
\label{prop:concentration}
Let $E_\rho=\{\phi_s(x)\neq k^*(x)\}$ under
$x\sim\mathcal{D}^{\mathrm{te}}_\rho$,
$\varepsilon_\rho=P(E_\rho)$,
$\alpha_\rho=\mathbb{E}[U^*(x)\mid E_\rho]$, and
$\beta_\rho=\mathbb{E}[u_{\phi_s}(x)\mid E_\rho]$. Then
\begin{equation}
    \Delta(\phi_s;\rho)
    =
    \varepsilon_\rho(\alpha_\rho-\beta_\rho).
\end{equation}
Thus the gap concentrates wherever routing errors are frequent and the
conditional quality difference remains positive.
\end{proposition}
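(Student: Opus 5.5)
The plan is to specialize the gap decomposition of Proposition~\ref{prop:decomp} to the score-argmax rule $\phi_s$ under the test distribution $\mathcal{D}^{\mathrm{te}}_\rho$, and then rewrite the resulting indicator-weighted expectation as a probability times a conditional expectation.

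\textbf{Step 1.} Apply Proposition~\ref{prop:decomp} with $\phi=\phi_s$ and expectations taken over $x\sim\mathcal{D}^{\mathrm{te}}_\rho$. This gives
\begin{equation*}
\Delta(\phi_s;\rho)=\mathbb{E}\bigl[\mathbf{1}_{E_\rho}\,(U^*(x)-u_{\phi_s(x)}(x))\bigr].
\end{equation*}
This step is legitimate because the decomposition used nothing about the input distribution beyond integrability. Integrability holds here since $u_k\in[0,1]$.

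\textbf{Step 2.} Split into cases on $\varepsilon_\rho$.

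\emph{Case $\varepsilon_\rho>0$.} Use $\mathbb{E}[\mathbf{1}_E Z]=P(E)\,\mathbb{E}[Z\mid E]$ with $Z=U^*-u_{\phi_s}$. Then linearity of conditional expectation splits $\mathbb{E}[Z\mid E_\rho]$ into $\alpha_\rho-\beta_\rho$. Both conditional means are finite because the integrands are bounded.

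\emph{Case $\varepsilon_\rho=0$.} The indicator vanishes almost surely, so $\Delta(\phi_s;\rho)=0$. In this case $\alpha_\rho,\beta_\rho$ are undefined, and the identity holds under the convention that $0\cdot(\text{anything})=0$, i.e.\ for any choice of the conditional values. I will state this convention explicitly.

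\textbf{Step 3.} Draw the interpretive corollary. On $E_\rho$ we have $U^*>u_{\phi_s}$ only weakly, since ties in IoU are possible. If $\arg\max$ ties are broken so that $E_\rho$ means ``$\phi_s$ is not an IoU-maximizer,'' then $\alpha_\rho-\beta_\rho>0$ whenever $\varepsilon_\rho>0$. In general one gets only $\alpha_\rho-\beta_\rho\ge 0$. Consequently $\Delta(\phi_s;\rho)$ is large exactly when both factors are, which is the concentration statement. It also explains the co-localization with $\varepsilon_\rho$ seen in Figure~\ref{fig:boundary}, provided $\alpha_\rho-\beta_\rho$ stays bounded away from zero near the boundary.

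The main obstacle is not analytic but definitional. First, the zero-probability case must be handled so the conditional quantities are well defined. Second, the tie-breaking of $k^*$ must be made consistent with the definition of $E_\rho$, so that the ``positive conditional quality difference'' clause is meaningful. I would fix $k^*$ by a deterministic tie-break and remark that the factorization itself is insensitive to this choice.
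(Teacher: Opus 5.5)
Your proposal is correct and follows the paper's argument. The paper's proof is just Proposition~\ref{prop:decomp} applied under $\mathcal{D}^{\mathrm{te}}_\rho$, followed by the factorization $\mathbb{E}[\mathbf{1}_{E_\rho}(U^*-u_{\phi_s})]=\varepsilon_\rho\,\mathbb{E}[U^*-u_{\phi_s}\mid E_\rho]$, the same step used in the proof of Proposition~\ref{prop:score}. Your handling of the $\varepsilon_\rho=0$ case and of IoU ties is sound and goes beyond what the paper makes explicit, with one caveat: for $E_\rho$ to coincide with ``$\phi_s$ is not an IoU maximizer'', the tie-break for $k^*$ must favor $\phi_s(x)$ among maximizers; an arbitrary fixed deterministic rule does not achieve this.
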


The equality is Proposition~\ref{prop:decomp} conditioned on
$\mathcal{D}^{\mathrm{te}}_\rho$. Figure~\ref{fig:boundary} shows the
same predicted inverted-U for each query-set size: near the training
endpoints the score head is calibrated, while at intermediate $\rho$ it
must choose among multiple specialized queries without direct
counterfactual quality supervision.
The width and height of the curves also explain why the real results vary
by dataset. The $K{=}2$ model has a sharp failure band because the
horizontal--vertical ambiguity is almost one-dimensional; the $K{=}4$
model has a broader band because extra specialized queries participate in
the candidate set. The empirical analogue is the difference between
Cityscapes and ADE20k/COCO: narrow, saturated taxonomies leave less room
for routing errors, while broad scene taxonomies expose more competing
query states. In all cases the router gap stays small near the peak,
which means the winning query is still identifiable from the frozen
output even when the default score rule is least reliable.

\FloatBarrier
\subsection{Query-Feature Routing}
\label{app:query_feature_routing}

\begin{figure}[H]
    \centering
    \begin{subfigure}[t]{0.48\linewidth}
        \centering
        \includegraphics[width=\linewidth]{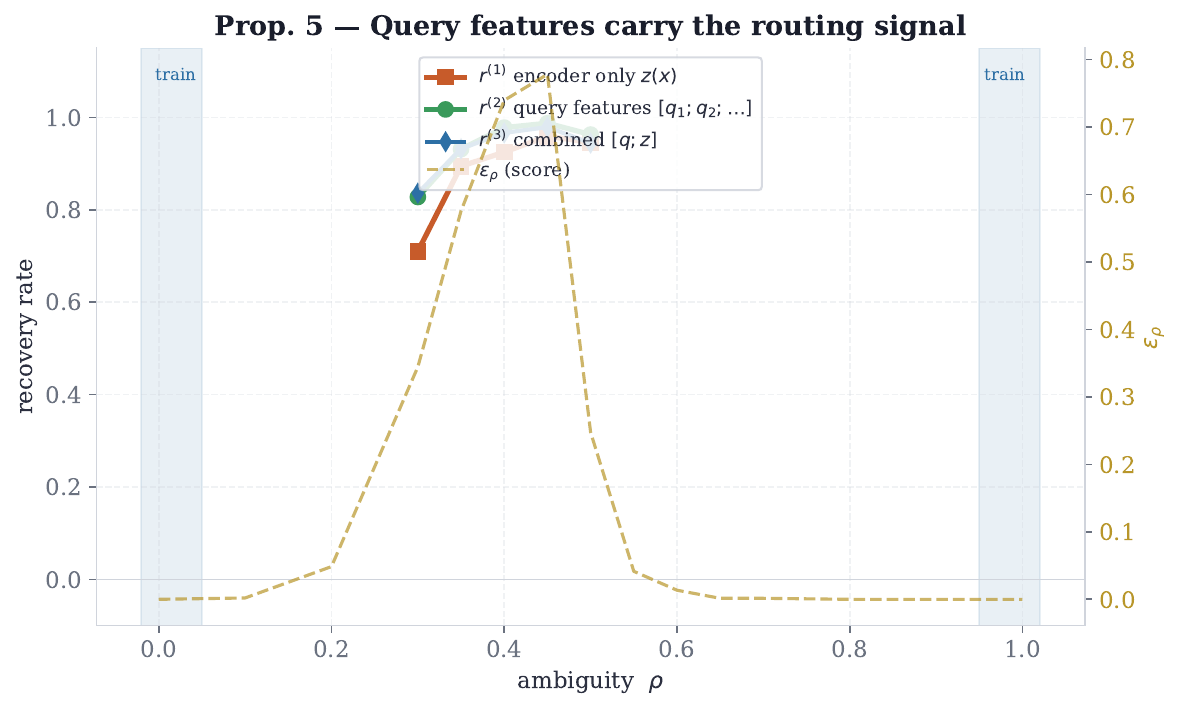}
        \caption{$K{=}2$}
        \label{fig:3router_k2}
    \end{subfigure}
    \hfill
    \begin{subfigure}[t]{0.48\linewidth}
        \centering
        \includegraphics[width=\linewidth]{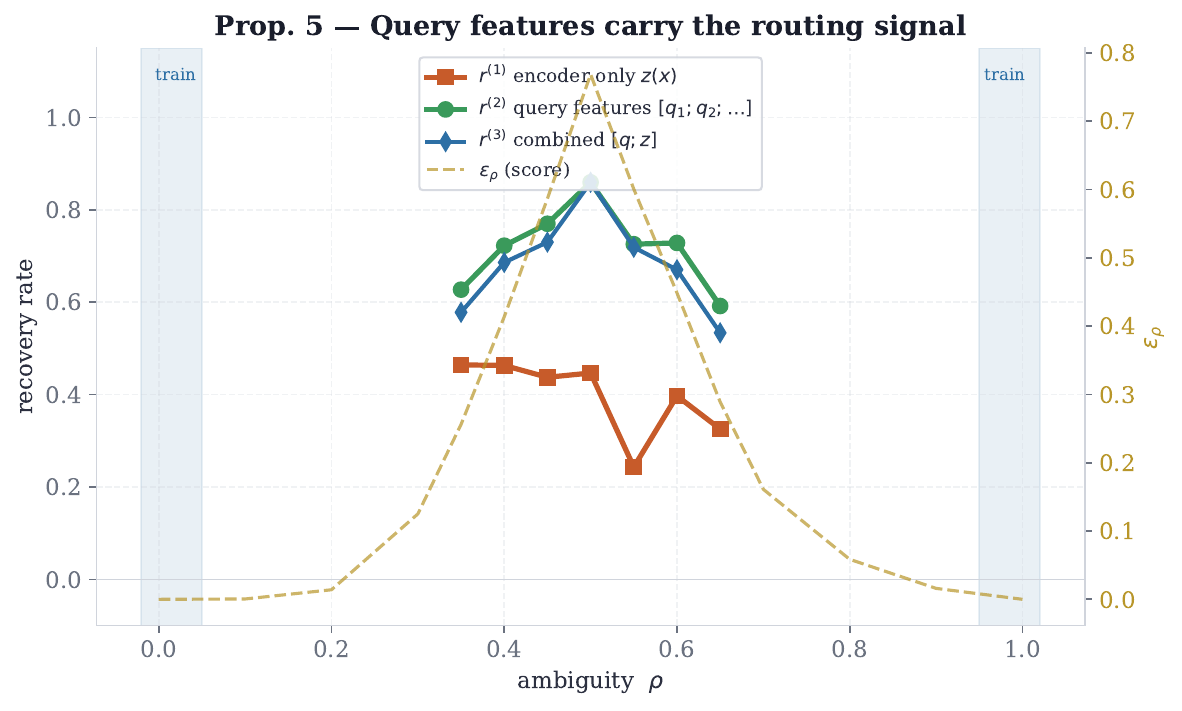}
        \caption{$K{=}3$}
        \label{fig:3router_k3}
    \end{subfigure}
    \vspace{-0.45em}
    \begin{subfigure}[t]{0.48\linewidth}
        \centering
        \includegraphics[width=\linewidth]{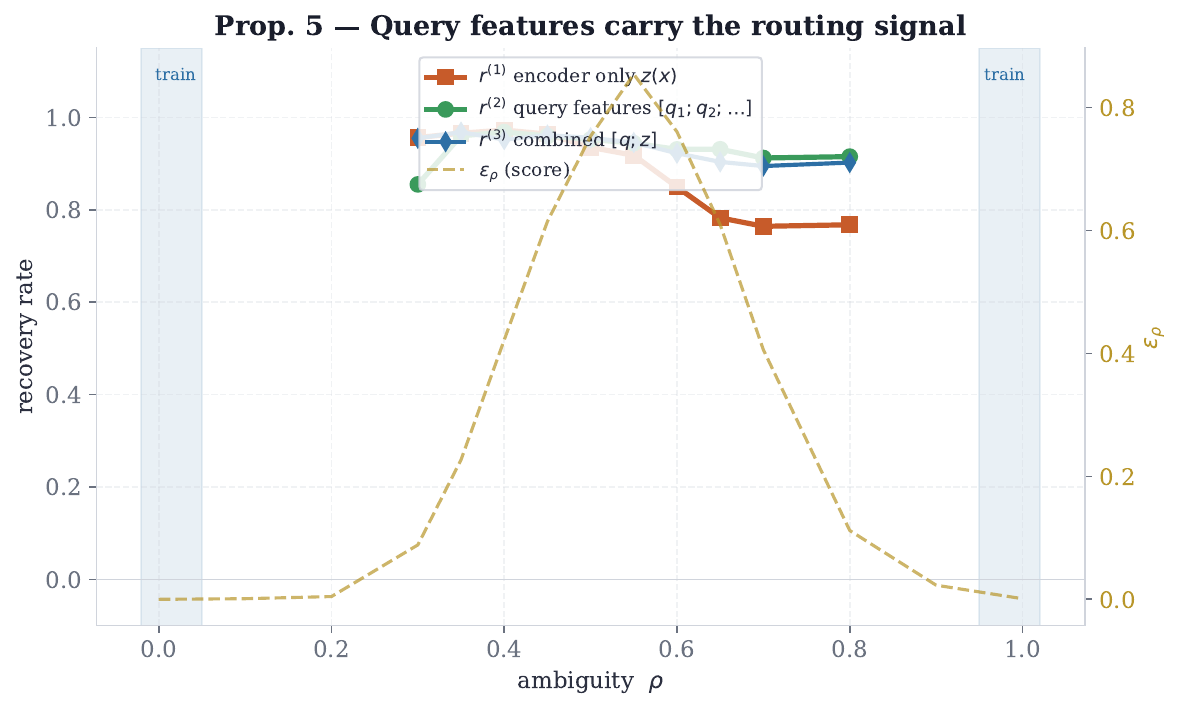}
        \caption{$K{=}4$}
        \label{fig:3router_k4}
    \end{subfigure}
    \vspace{-0.5em}
    \caption{%
        \textbf{Recovery by feature source.}
        Encoder-only routing degrades near the ambiguous boundary,
        while routers that read query features remain close to the
        oracle across $K{=}2,3,4$. The recoverable signal is carried by
        the query stream.%
    }
    \label{fig:3router}
    \vspace{-0.8em}
\end{figure}

\begin{proposition}[Query-index routing is not identifiable from the encoder alone]
\label{prop:routing}
For any encoder-only router $r^{(z)}(z(x))$ and any query-based model
$f_\theta$, there exists an equivalent model obtained by permuting the
query labels that has the same encoder representation $z(x)$ and the
same unordered candidate set, but whose oracle query index is permuted.
Therefore no encoder-only rule can be uniformly correct over equivalent
query parameterizations. Query features are permutation-equivariant and
retain the query-index information needed for post-hoc routing.
\end{proposition}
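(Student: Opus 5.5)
The plan is a symmetrization argument. Fix $f_\theta$ with $K$ query slots. For a permutation $\sigma$ of $[K]$, build the relabelled model $f_{\theta_\sigma}$ by permuting the learned query embeddings (and any slot-specific parameters) so that slot $k$ of $f_{\theta_\sigma}$ holds what slot $\sigma^{-1}(k)$ held in $f_\theta$. The encoder is untouched, so $z_\sigma(x)=z(x)$ for every $x$. The decoder's cross- and self-attention treat the query set permutation-equivariantly, since there are no positional terms over the query index beyond the learned embeddings themselves. Hence
$$\hat m^{\sigma}_k(x)=\hat m_{\sigma^{-1}(k)}(x),\qquad \hat s^{\sigma}_k(x)=\hat s_{\sigma^{-1}(k)}(x).$$
Consequently the unordered candidate set $\{(\hat m_k,\hat s_k)\}$ is identical. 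In addition, $u^\sigma_k=u_{\sigma^{-1}(k)}$, so $\mathcal{R}^*$ is unchanged and the oracle index transforms as $k^*_\sigma(x)=\sigma(k^*(x))$. This step is routine: we only need to verify equivariance layer by layer, which holds because attention and per-query MLPs commute with row permutations.

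Next comes the impossibility claim. Any encoder-only router $r^{(z)}$ outputs the same index $r^{(z)}(z(x))$ under $f_\theta$ and under every $f_{\theta_\sigma}$, since it only sees $z(x)$, which is invariant. Suppose it were correct on some $x$ for both $\theta$ and $\theta_\sigma$. Then $r^{(z)}(z(x))=k^*(x)=\sigma(k^*(x))$. Pick $\sigma$ as a transposition moving $k^*(x)$ (possible when $K\ge2$); this gives a contradiction, at least when the oracle is unique. If the argmax is tied, restrict to inputs with a strict oracle, or note that a transposition to a strictly worse slot still changes correctness. Lifting this to a positive-measure set of inputs gives $\Delta(r^{(z)};\theta_\sigma)>0$ by Proposition~\ref{prop:decomp}. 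Here we use $U^*-u_{r}>0$ on the set where the router now picks the wrong slot, provided that set has positive measure and a strict quality gap. Thus no encoder-only rule has zero gap uniformly over the equivalence class $\{\theta_\sigma\}$.

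Finally comes the positive statement. A router $r(\Phi)$ built from per-query features with shared weights, as in Eq.~\eqref{eq:selector} without slot embeddings or with slot embeddings permuted alongside, is permutation-equivariant. The score vector under $\theta_\sigma$ is therefore the permuted score vector, so $\arg\max$ transforms as $\sigma(\cdot)$, exactly matching $k^*_\sigma$. Correctness is then invariant across the equivalence class.

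The main obstacle is the measure-theoretic step: we must turn a pointwise contradiction into a strictly positive expected gap. I would handle it by assuming non-degeneracy, namely that the set of $x$ with a unique oracle and positive margin has positive measure. We then choose a single $\sigma$ that works on a positive-measure subset. Because the oracle index takes finitely many values, some value $k_0$ occurs on a positive-measure set, and the transposition $(k_0\,k_1)$ serves for all of them.
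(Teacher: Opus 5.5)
Your argument is correct and follows the paper's proof. You relabel the query slots and observe that $z(x)$, and hence any encoder-only decision, is unchanged while the oracle index moves to $\sigma(k^*(x))$. You then note that per-query features, with any slot embeddings permuted alongside, move with the candidates. Your explicit non-degeneracy assumption, your handling of ties, and your pigeonhole choice of a single transposition add rigor that the paper leaves implicit.

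One small correction: the positive-measure step gives $\Delta>0$ under $\theta$ or under $\theta_\sigma$, whichever one the fixed router disagrees with on a positive-measure set. It does not necessarily give $\Delta>0$ under $\theta_\sigma$ itself. That weaker statement is all the uniform-over-the-class conclusion needs.
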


Figure~\ref{fig:3router} gives the empirical counterpart for all
controlled query-set sizes: the encoder-only router degrades at peak
ambiguity, while routers that read query features maintain high
recovery. The point is not that the toy model proves every empirical
gain; it isolates why frozen query features are the right place to look
for routing signal.
This ablation should be read carefully. The encoder representation can
contain enough information to segment the object, and in some easy
regions it can even predict the right query index by correlation. What it
lacks is a stable name for the specialized candidate after the query
slots have been learned. Query features provide that missing coordinate:
if two query slots are permuted, their features are permuted with them,
so a post-hoc rule over the query stream can remain aligned with the
candidate masks. This is why \ours{} uses the frozen output record
itself--scores, logits, masks, overlaps, and query-level descriptors--as
the calibration signal. The mechanism study therefore narrows the claim:
\ours{} is not correcting every segmentation error, but it is well matched
to the subset of errors where a better query was already computed and the
default selection chose the wrong one.

\FloatBarrier
\section{Deferred Proofs}
\label{app:proofs}

\paragraph{Proof of Proposition~\ref{prop:score}.}
By Proposition~\ref{prop:decomp},
$\Delta(\phi_s)=\mathbb{E}[\mathbf{1}[\phi_s\neq k^*](U^*-u_{\phi_s})]$.
Both factors are positive with positive probability by assumption, so
$\Delta(\phi_s)
=P(\phi_s\neq k^*)\cdot\mathbb{E}[U^*-u_{\phi_s}\mid\phi_s\neq k^*]>0$.
The score head is trained to minimize a binary detection loss that
supervises object presence rather than the counterfactual ranking of all
candidate masks by IoU. Thus agreement between $\phi_s$ and $k^*$ is not
enforced by the objective, and routing errors can persist in general.
\hfill$\square$

\paragraph{Proof of Proposition~\ref{prop:routing}.}
The encoder $z(x)$ is computed before any query-index-specific decision
is made and is invariant to relabeling of the $K$ query indices. Let
$\sigma\in\mathfrak{S}_K$ be any permutation and $f^\sigma_\theta$ the
model with query labels permuted by $\sigma$. Then $z^\sigma(x)=z(x)$
for all $x$, so an encoder-only router makes the same decision under
both parameterizations. The oracle index transforms as
$k^{\sigma *}(x)=\sigma(k^*(x))$. Unless the router is simultaneously
correct for every permutation, which is impossible when the permutation
changes the target index, no encoder-only rule can be uniformly correct
over equivalent relabelings.

Query features transform differently. If the query labels are permuted,
the sequence of query features is permuted in the same way, so the
features retain the index information needed to select the corresponding
candidate. This is the sense in which post-hoc routing over query
features is identifiable while encoder-only query-index routing is not.
\hfill$\square$

\section{Limitations, Future Work, and Notation}
\label{app:limits}

\paragraph{Limitations.}
\ours{} is supervised by oracle labels and therefore requires a labelled
calibration split. It is also a single-action selector: in the audited
SAM~3 setting it may abstain or add one query, but it does not learn a
full set-valued decoder. This restriction is deliberate for the
main paper because it isolates the selection bottleneck, but richer
decoders may recover more of the remaining oracle gap.

\paragraph{Future work.}
The single-action restriction is intentionally narrow. Natural next
steps are richer set-valued routers that can select, add, or abstain over
multiple candidates while controlling false positives; calibration
without dense labels; and selectors that expose uncertainty when the
cached candidate pool does not contain a good correction. These
directions move beyond diagnosing hidden headroom toward deployable
decoders that decide when the frozen query pool should be trusted.

\paragraph{Notation.}
$Q$ or $K$ denotes the number of query slots, $\hat{y}_0$ the default
prediction, $q^\star$ or $k^*$ the oracle query index, $U^*$ the oracle
IoU, $\Delta$ an oracle gap, and recovery the fraction of top-1 oracle
headroom captured by the learned selector.

\end{document}